\newcommand{\Hc}{{\mathcal{H}}}
\newcommand{\Kc}{{\mathcal K}}
\newcommand{\Qc}{{\mathcal Q}}
\newcommand{\Sc}{{\mathcal S}}
\newcommand{\Wc}{{\mathcal W}}
\newcommand{\Yc}{{\mathcal Y}}
\renewcommand{\Re}{{\mathbb{R}}}
\newcommand{\Rd}{{\mathbb R}}
\newcommand{\Cd}{{\mathbb C}}
\newcommand{\tr}{\mathrm{Tr}}
\newcommand{\hank}{\mathbb{H}}
\newcommand{\circul}{\mathfrak{C}}
\renewcommand{\vec}{\textsc{Vec}}
\newcommand{\rank}{\textsc{rank}}
  \newtheorem{remark}{Remark}
\newcommand{\beq}{\begin{equation}}
\newcommand{\eeq}{\end{equation}}
\newcommand{\beqa}{\begin{eqnarray}}
\newcommand{\eeqa}{\end{eqnarray}}
\title{Deep Convolutional Framelets: A General Deep Learning  Framework for Inverse Problems\thanks{The authors would like to thanks Dr. Cynthia MaCollough,  the Mayo Clinic, the American Association of Physicists in Medicine (AAPM), and grant EB01705 and EB01785 from the National
Institute of Biomedical Imaging and Bioengineering for providing the Low-Dose CT Grand Challenge data set.
This work is supported by National Research Foundation of Korea, Grant number
NRF-2016R1A2B3008104, NRF-2015M3A9A7029734, and NRF-2017M3C7A1047904.
}} 
\author{
Jong Chul Ye\footnotemark[1]\ \footnotemark[2]
\and
Yoseob Han\footnotemark[1]
\and
Eunju Cha\footnotemark[1]
}
\begin{document}
\maketitle
\newcommand{\slugmaster}{%
\slugger{siims}{xxxx}{xx}{x}{x--x}}
\renewcommand{\thefootnote}{\fnsymbol{footnote}}
\footnotetext[1]{Bio Imaging and Signal Processing Lab., Department of Bio and Brain Engineering, Korea Advanced Institute of Science and Technology, 291 Daehak-ro, Yuseong-gu, Daejeon 34141, Republic of Korea (\email{jong.ye@kaist.ac.kr}; \email{hanyoseob@kaist.ac.kr}; \email{eunju.cha@kaist.ac.kr}).}
\footnotetext[2]{Address all correspondence to J. C. Ye  at \email{jong.ye@kaist.ac.kr}, Ph.:+82-42-3504320, Fax:+82-42-3504310.}
\renewcommand{\thefootnote}{\arabic{footnote}}

%
%





\begin{abstract}
Recently, deep learning approaches with various network architectures have achieved significant performance improvement over existing iterative reconstruction methods
 in various imaging problems.
However,   it is still unclear {\em why} these deep learning architectures work for specific inverse problems.
Moreover, in contrast to the usual evolution of signal processing theory  around the classical theories, 
 the link between deep learning  and the classical signal processing
  approaches such as wavelets, non-local processing, compressed sensing, etc, are not yet well understood.
To address these issues, here
we show that  the long-searched-for missing link is the {convolution framelets} 
for representing a signal
by convolving local and non-local bases.
The  convolution framelets was originally developed to generalize the   theory of low-rank Hankel matrix approaches for inverse problems, and this paper further extends the idea so that we can obtain a deep neural network using multilayer convolution framelets with perfect reconstruction (PR) under rectilinear linear unit nonlinearity (ReLU). Our analysis also shows that the popular deep  network components such as residual block, redundant filter channels,
and concatenated ReLU (CReLU) do indeed help to achieve the PR, while the pooling and unpooling layers should be augmented with high-pass branches to meet the PR condition. 
Moreover,  by changing the number of filter channels and bias, we can control the shrinkage behaviors of the neural network.
This discovery reveals the limitations of many existing deep learning architectures for inverse problems, and leads us to propose
a novel theory for {\em deep convolutional framelets}  neural network.
Using numerical experiments with various inverse problems,  we demonstrated that our deep convolution framelets network 
shows consistent improvement over  
existing deep architectures.
This discovery suggests that the success of deep learning is not from a magical power of a black-box, but rather comes from the
power of a  novel signal representation  using non-local basis combined with data-driven local basis, which is indeed a natural extension of  classical signal processing theory.

\end{abstract}


\begin{keywords} 
Convolutional neural network, framelets,  deep learning, inverse problems,
ReLU,  perfect reconstruction condition
\end{keywords}

\begin{AMS}Primary, 		94A08, 	97R40,	94A12, 92C55, 	65T60, 	42C40 ; Secondary, 	44A12
\end{AMS}

\pagestyle{myheadings}
\thispagestyle{plain}
\markboth{JONG CHUL YE, YOSEOB HAN AND EUNJU CHA}{DEEP CONVOLUTIONAL FRAMELETS FOR INVERSE PROBLEMS}

%
%
%


\section{Introduction}
\label{sec:introduction}

{D}{eep} 
 learning approaches have achieved tremendous
success in  classification problems~\cite{krizhevsky2012imagenet} as well as low-level computer vision problems such as segmentation~\cite{ronneberger2015u}, denoising~\cite{zhang2016beyond}, super-resolution \cite{kim2015accurate,shi2016real}, etc. 
The theoretical origin of its success  has been investigated \cite{poole2016exponential,telgarsky2016benefits}, and
the exponential expressivity under a given network complexity (in terms of VC dimension \cite{anthony2009neural} or Rademacher complexity \cite{bartlett2002rademacher})
has been often attributed to its success.
A deep network is also known to learn    high-level abstractions/features of the data similar to the visual processing in human brain  using multiple layers of neurons with non-linearity
\cite{lecun2015deep}.

Inspired by the success of deep learning in low-level computer vision, several machine learning approaches have been recently
proposed for image reconstruction problems.
 In X-ray computed tomography (CT),
 Kang et al \cite{kang2016deep} provided the first systematic study of deep convolutional neural network (CNN) for low-dose CT and showed that a deep CNN using directional wavelets is more efficient in removing low-dose related CT noises. Unlike these low-dose artifacts  from reduced tube currents, the streaking artifacts originated from sparse projection views show globalized artifacts that are difficult to remove using conventional denoising CNNs \cite{chen2015learning, mao2016image}. Han et al \cite{han2016deep} and Jin et al \cite{jin2016deep} independently proposed a residual learning using U-Net \cite{ronneberger2015u} to remove the global streaking artifacts caused by sparse projection views. 
In MRI,  Wang et al~\cite{wang2016accelerating}  was the first to apply deep learning to compressed sensing MRI (CS-MRI). They trained the deep neural network from  downsampled reconstruction images to learn a fully sampled reconstruction. 
Then, they used the deep learning result  either as  an initialization  or as a  regularization term in classical CS approaches. 
Multilayer perceptron was developed for accelerated parallel MRI \cite{kwon2016learning,kwon2017parallel}.
Deep network architecture using unfolded iterative compressed sensing (CS) algorithm was also proposed 
\cite{hammernik2016learning}. 
Instead of  using handcrafted regularizers, the authors in \cite{hammernik2016learning}  tried to learn a set of optimal regularizers.
Domain adaptation from sparse view CT network to projection reconstruction MRI was also proposed \cite{han2017deep}.
These pioneering works have consistently demonstrated impressive reconstruction performances, 
which are often superior to the existing iterative approaches.
%
However, the more we have observed impressive empirical results in image reconstruction problems,  the more unanswered questions we encounter.
For example, to our best knowledge, we do not have the complete answers to the following questions that  are critical to a network design:
\begin{enumerate}
\item What is the role of the filter channels in convolutional layers ? 
\item Why do some networks need a fully connected layers whereas the others do not ?
\item What is the role of the nonlinearity such as rectified linear unit (ReLU) ?
\item Why do we need a pooling and unpooling in some architectures ?
\item What is the role of by-pass connection or residual network ?
\item How many layers do we need ?
\end{enumerate}
Furthermore, the most troubling issue for signal processing  community is that  the link to the classical signal processing theory is still not  fully understood.
For example, wavelets \cite{daubechies1992ten}  has been extensively investigated as an efficient  signal representation theory for many image processing
applications by exploiting  energy compaction property of  wavelet bases. Compressed sensing theory \cite{Do06,CaRoTa06}  has further extended the idea to demonstrate that an accurate recovery is possible from undersampled data, if the signal is sparse in some frames and the sensing matrix is incoherent.
Non-local image processing techniques such as non-local means \cite{buades2005non},  BM3D \cite{dabov2007image}, etc have also demonstrated impressive
performance for many image processing applications.
The link between these algorithms have been extensively studied for last few years using various mathematical tools from harmonic analysis, convex optimization, etc.
However, recent years have witnessed that a blind application of  deep learning toolboxes sometimes provides even better
performance than mathematics-driven classical signal processing approaches. 
Does this imply  the dark age of signal processing or a new opportunity  ?

Therefore, the main goal of this paper is to address these open questions. In fact, our paper is
not the only attempt to address these issues. 
For instance, Papyan et al  \cite{papyan2017convolutional}
 showed that once ReLU nonlinearity is employed, the
forward pass of a network can be interpreted as a deep sparse coding algorithm.
Wiatowski et al \cite{wiatowski2015mathematical} discusses the importance of pooling for networks,
proving that it leads to translation invariance. Moreover, several works including  \cite{greff2016highway}
provided explanations for residual networks.
The interpretation of a deep network in terms of unfolded (or unrolled) sparse recovery is another prevailing view in research community \cite{gregor2010learning,xin2016maximal,hammernik2016learning,jin2016deep}. However, this interpretation still does not  give answers to several key questions: for example, why do we need multichannel filters ?
In this paper,  we therefore depart from  this existing views and propose a new interpretation of a deep network as a novel {\em signal representation} scheme.
In fact, signal representation theory  such as  wavelets and frames  have been active areas of researches for many years \cite{mallat1999wavelet}, and Mallat \cite{mallat2012group} and Bruna et al \cite{bruna2013invariant}  proposed the wavelet scattering network as a translation invariant and deformation-robust image representation. However, this approach does not have learning components
as in the existing deep learning networks.

Then, what is missing here?  One of the most important contributions of our work is to show that the geometry of deep learning
can be revealed by {\em lifting} a signal to a high dimensional space using Hankel structured matrix.
More specifically, many types of input signals that occur in signal processing can be factored into the left and right bases as well as a sparse matrix with energy compaction properties when lifted into the Hankel structure matrix.
 This results in a frame representation of the signal using the left and right bases, referred to as the non-local and local base matrices, respectively.
 The origin of this nomenclature will become clear later.
One of our novel contributions was the realization that the non-local base determines the network architecture such as pooling/unpooling, while the local basis allows the network to learn convolutional filters.
More specifically,  the application-specific domain knowledge leads to a better choice of a  non-local basis, on which to learn the local basis  to maximize the performance.
 
In fact, the idea of exploiting the two bases by the so-called  convolution framelets was originally proposed by Yin et al  \cite{yin2017tale}.
However,  the aforementioned close link to the deep neural network was not revealed in \cite{yin2017tale}.
Most importantly, we demonstrate for the first time that the convolution
framelet representation can be equivalently represented as an encoder-decoder convolution layer, and
multi-layer convolution framelet expansion is also feasible by
relaxing the  conditions in \cite{yin2017tale}. 
Furthermore, we derive the perfect reconstruction (PR) condition under
 rectified linear unit (ReLU). 
The mysterious role of the redundant  multichannel filters  can be then easily understood as an important tool to  meet
the PR condition.
Moreover, by augmenting local filters with paired filters with opposite phase, the ReLU nonlinearity disappears
and the deep convolutional framelet becomes a linear signal representation.
However, in order for the deep network to satisfy the PR condition, the number of channels should increase exponentially along the layer, which is difficult to achieve in practice.
Interestingly, we can show that an insufficient number of filter channels results in shrinkage behavior via a low rank approximation of an extended Hankel matrix, and this shrinkage behavior can be exploited to maximize network performance.
Finally,  to overcome the limitation of the pooling and unpooling layers,
we introduce a multi-resolution analysis (MRA) for convolution framelets using wavelet non-local
basis as a generalized pooling/unpooling.
%
%
We call the new class of  deep network using convolution framelets 
as  the  {\em deep convolutional framelets}.

\subsection{Notations}
For a matrix $A$, $R(A)$ denotes the range space of $A$ and $N(A)$ refers to the null space of $A$. $P_{R(A)}$ denotes the projection to the range space
of $A$, whereas $P^\perp_{R(A)}$ denotes the projection to the orthogonal complement of $R(A)$.
The notation $1_{n}$  denotes  a $n$-dimensional vector with 1's. 
The $n\times n$ identity matrix is referred to as $I_{n\times n}$.
For a given matrix $A\in\Rd^{m\times n}$, the notation $A^\dag$ refers to the generalized inverse. 
 The superscript $^{\top}$ of $A^{\top}$ denotes the Hermitian transpose. 
Because we are mainly
interested in real valued cases, $^{\top}$ is equivalent to the transpose $^T$. 
The inner product in matrix space is defined by $\langle A, B \rangle =  \tr(A^{\top}B),$  where $A,B\in \Rd^{n\times m}$.
For a matrix $A$, $\|A\|_F$ denotes its Frobenius norm.
For a given matrix $C\in \Rd^{n\times m}$,  $c_j$ denotes its $j$-th column,
and $c_{ij}$ is the $(i,j)$ elements of $C$.
If a matrix $\Psi \in \Rd^{pd\times q}$ is partitioned as $\Psi = \begin{bmatrix} \Psi_1^\top & \cdots & \Psi_p^\top \end{bmatrix}^\top$ with sub-matrix $\Psi_i \in \Rd^{d\times q}$,
then $\psi_j^i$ refers to the $j$-th column of $\Psi_i$.
A vector $\overline v \in \Rd^n$ is referred to the flipped version of a vector $v \in \Rd^n$, i.e. its indices are reversed.
Similarly, for a given  matrix $\Psi \in \Rd^{d\times q}$, the notation
$\overline \Psi \in \Rd^{d\times q}$ refers to a matrix composed of flipped vectors, i.e.
$\overline \Psi = \begin{bmatrix} \overline \psi_1 & \cdots & \overline \psi_q \end{bmatrix}.$
For a block structured matrix  $\Psi \in \Rd^{pd\times q}$,  
with a slight abuse of notation, we define $\overline \Psi$ as
\begin{eqnarray}\label{eq:block}
\overline\Psi = \begin{bmatrix} \overline\Psi_1 \\ \vdots \\\overline \Psi_p \end{bmatrix},\quad \mbox{where}\quad \overline\Psi_i
=\begin{bmatrix} \overline {\psi_1^i} & \cdots & \overline {\psi_q^i} \end{bmatrix}  \in \Rd^{d\times q} . 
\end{eqnarray}
Finally, Table~\ref{tbl:notation} summarizes the notation used throughout the paper.

%

\begin{table}[!hbt]
\begin{center}
\begin{tabular}{c|l}
\hline
Notation & Definition \\
\hline
 $\Phi$  &  non-local basis matrix at the  encoder\\
 $\tilde\Phi$  &  non-local basis matrix  at the  decoder \\
     $\Psi$  &  local basis matrix at the  encoder\\
      $\tilde\Psi$  &  local basis matrix  at the  decoder \\
      $b_{enc},b_{dec}$ & encoder and decoder biases \\
   $\phi_i$  &  $i$-th non-local basis or filter at the encoder \\
   $\tilde \phi_i$  &  $i$-th non-local basis or filter at the decoder \\
  $\psi_i$  &  $i$-th local basis or filter at the encoder \\
   $\tilde \psi_i$  &  $i$-th local basis or filter at the decoder \\
   $C$  & convolutional framelet coefficients at the encoder\\ 
    $\tilde C$  & convolutional framelet coefficients at the decoder\\ 
     $n$ &   input dimension\\
   $d$ &  convolutional filter length \\
   $p$ &  number of input  channels \\
   $q$  & number of output channels \\
   $f$ &  single channel input signal, i.e.  $f\in \Rd^n$ \\
   $Z$ & a $p$-channel input signal, i.e.  $Z\in \Rd^{n\times p}$ \\
     $\hank_d(\cdot)$  &  Hankel operator, i.e.  $\hank_d: \Rd^n \mapsto Y\subset \Rd^{n\times d}$ \\
      $\hank_{d|p}(\cdot)$  & extended Hankel operator, i.e.  $\hank_{d|p}: \Rd^{n\times p} \mapsto Y\subset \Rd^{n\times pd}$ \\
           $\hank_d^\dag(\cdot)$  &  generalized inverse of Hankel  operator, i.e.   $\hank_d^\dag: \Rd^{n\times d} \mapsto  \Rd^{n}$ \\
      $\hank_{d|p}^\dag(\cdot)$  &  generalized inverse of an extended Hankel  operator, i.e.   $\hank_{d|p}^\dag: \Rd^{n\times pd} \mapsto  \Rd^{n\times p}$ \\
      $U$ & left singular vector matrix of an (extended) Hankel matrix  \\
      $V$ & right singular vector matrix of an (extended) Hankel matrix \\    
      $\Sigma$  &  singular value matrix of an (extended) Hankel matrix \\  
      $\circul_d(\cdot)$ & $n\times d$-circulant matrix \\
\hline
\end{tabular}
\caption{Notation and definition used throughout the  paper.}
\label{tbl:notation}
\end{center}
\end{table}

\section{Mathematics of Hankel matrix} 

Since the Hankel structured matrix is the key component in our theory,  this section discusses various properties of the Hankel matrix that will be extensively used throughout the paper.


\subsection{Hankel matrix representation of convolution}

Hankel matrices arise  repeatedly from many different contexts in signal processing and control theory,
such as system identification  \cite{fazel2013hankel}, harmonic retrieval,  array signal processing \cite{hua1990matrix},  subspace-based channel identification \cite{tong1994blind}, etc.
 A Hankel matrix can be also obtained  from a convolution operation \cite{ye2016compressive}, which is of particular interest in this paper.
Here,  to avoid special treatment of boundary condition, our theory is mainly derived using the circular convolution.

Let $f=[f[1],\cdots, f[n]]^T\in \Rd^n$ and $\psi=[\psi[1],\cdots, \psi[d]]^T\in\Rd^d$. 
Then, a single-input single-output (SISO) convolution of the input $f$ and the filter $\overline \psi$  can be represented in a matrix form:
\begin{eqnarray}\label{eq:SISO}
y = f\circledast \overline\psi &=& \hank_d(f) \psi \ ,
\end{eqnarray}
where  $\hank_d(f)$ is a wrap-around  Hankel matrix:
 \begin{eqnarray} \label{eq:hank}
\hank_d(f) =\left[
        \begin{array}{cccc}
        f[1]  &   f[2] & \cdots   &   f[d]   \\
       f[2]  &   f[3] & \cdots &     f[d+1] \\
           \vdots    & \vdots     &  \ddots    & \vdots    \\
              f[n]  &   f[1] & \cdots &   f[d-1] \\
        \end{array}
    \right] 
    \end{eqnarray}
Similarly, a single-input multi-output (SIMO) convolution using $q$ filters $\overline\psi_1,\cdots, \overline\psi_q \in \Rd^d$ can be represented by
\begin{eqnarray}\label{eq:simo}
Y = f \circledast \overline\Psi = 
\hank_d(f)  \Psi
\end{eqnarray}
where 
\begin{eqnarray*}
Y:=\begin{bmatrix}y_1 & \cdots & y_q \end{bmatrix} \in \Rd^{n\times q},~ \Psi:=  \begin{bmatrix} \psi_1  & \cdots & \psi_q \end{bmatrix} \in \Rd^{d\times q}.
\end{eqnarray*}
On the other hand, multi-input multi-output (MIMO) convolution for the $p$-channel
input $Z=[z_1,\cdots,z_p]$ can be represented  by
\begin{eqnarray}\label{eq:MIMO}
y_i = \sum_{j=1}^{p} z_j\circledast \overline\psi_i^j,\quad i=1,\cdots, q
\end{eqnarray}
where $p$ and $q$ are the number of  input and output channels, respectively;
$\overline\psi_i^j \in \Rd^d$ denotes the length $d$- filter that convolves the $j$-th channel input to compute its contribution to 
the
$i$-th output channel. 
By defining the MIMO filter kernel $\Phi$ as follows:
\begin{eqnarray}
\Psi = \begin{bmatrix} \Psi_1 \\ \vdots \\ \Psi_p \end{bmatrix} \, \quad \mbox{where} \quad \Psi_j =  \begin{bmatrix} \psi_1^j  & \cdots & \psi_q^j \end{bmatrix} \in \Rd^{d\times q}  
\end{eqnarray}
the corresponding matrix representation of the MIMO convolution is then given by 
\begin{eqnarray}
Y &=& Z \circledast \overline\Psi \label{eq:MIMO_form}\\
&=& \sum_{j=1}^p \hank_d(z_j) \Psi_j  \label{eq:multifilter0}\\
&=& \hank_{d|p}\left(Z\right) \Psi \label{eq:multifilter}
\end{eqnarray}
where    $\overline\Psi$ is a flipped block structured matrix in the sense of \eqref{eq:block},
and
$\hank_{d|p}\left(Z\right)$ is  an {\em extended Hankel matrix}  by stacking  $p$ Hankel matrices side by side: 
\begin{eqnarray}\label{eq:ehank}
\hank_{d|p}\left(Z\right)  := \begin{bmatrix} \hank_d(z_1) & \hank_d(z_2) & \cdots & \hank_d(z_p) \end{bmatrix} \ . 
\end{eqnarray}
For notational simplicity, we denote $\hank_{d|1}([z]) = \hank_d(z)$.
Fig.~\ref{fig:hankel} illustrates the procedure to construct an extended Hankel matrix from $[z_1,z_2,z_3] \in \Rd^{8\times 3}$
when the convolution filter length $d$ is 2.

  \begin{figure}[!bt] 
\center{\includegraphics[width=9cm]{./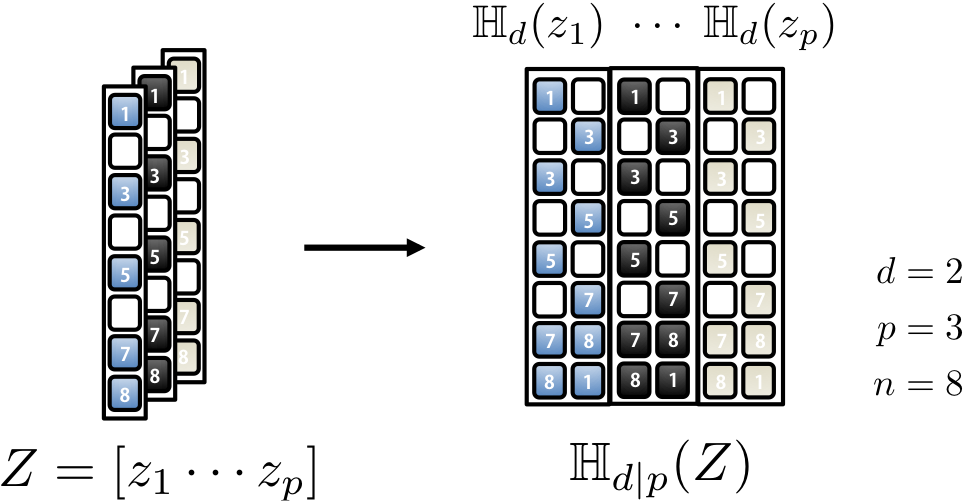}}
\caption{Construction of an extended Hankel matrix for 1-D multi-channel input patches.
}
\label{fig:hankel}
\end{figure}

Finally, as a special case of MIMO convolution for $q=1$,
the multi-input single-output (MISO) convolution is defined by
\begin{eqnarray}\label{eq:MISO}
y &=&  \sum_{j=1}^{p} z_j\circledast \overline\psi^j =  Z \circledast \Psi  = \hank_{d|p}\left(Z\right)  \Psi
\end{eqnarray}
where  
$$ \Psi = \begin{bmatrix}  \psi^1 \\ \vdots \\ \psi^p \end{bmatrix}.$$
The SISO, SIMO, MIMO, and MISO convolutional operations are illustrated in Fig.~\ref{fig:conv}(a)-(d).

  \begin{figure}[!t] 
\center{\includegraphics[width=14cm]{./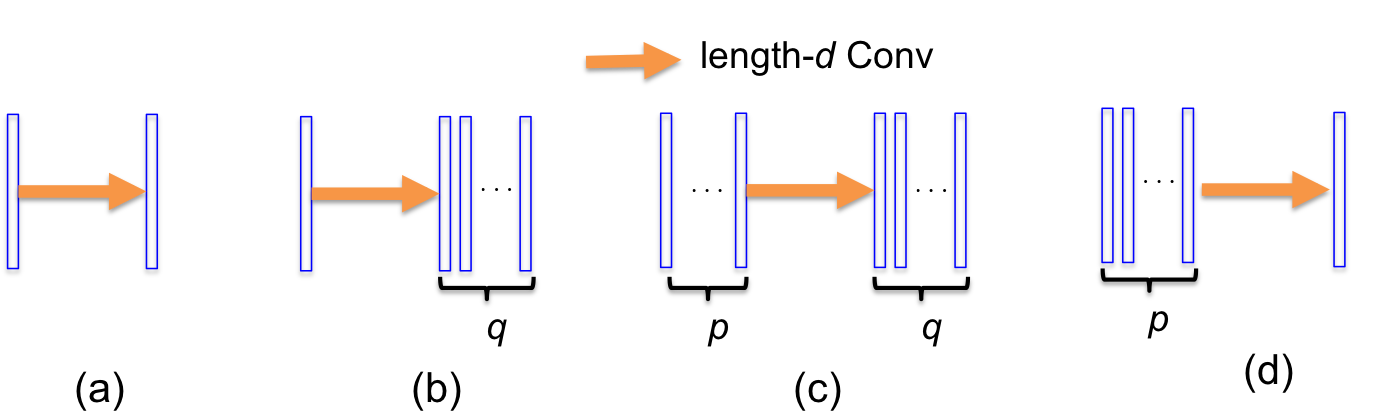}}
\caption{1-D convolutional operations  and their Hankel matrix representations. (a) Single-input single-output convolution  $y =f\circledast  \overline\psi$, (b) SIMO convolution $Y =f\circledast \overline\Psi$, 
(c) MIMO convolution $Y = Z \circledast \overline\Psi$,
and (d) MISO convolution $y = Z \circledast \overline\Psi$,
}
\label{fig:conv}
\end{figure}


The extension to  the multi-channel 
2-D convolution operation for an image domain CNN (and multi-dimensional convolutions  in general) is straight-forward,
since similar matrix vector operations can be also used.
Only required change is the definition of the (extended) Hankel matrices, which is now defined as
{\em block} Hankel matrix. Specifically, for a 2-D input $X = [x_1,\cdots, x_{n_2}]\in \Rd^{n_1\times n_2}$
with $x_i \in \Rd^{n_1}$, the block Hankel matrix associated with filtering with $d_1\times d_2$ filter is given by
 \begin{eqnarray} 
\hank_{d_1,d_2}(X) =\left[
        \begin{array}{cccc}
        \hank_{d_1}(x_1)  &   \hank_{d_1}(x_2)& \cdots   &   \hank_{d_1}(x_{d_2})   \\
      \hank_{d_1}(x_2) &  \hank_{d_1}(x_3) & \cdots &     \hank_{d_1}(x_{d_2+1}) \\
           \vdots    & \vdots     &  \ddots    & \vdots    \\
          \hank_{d_1}(x_{n_2}) &   \hank_{d_1}(x_1)& \cdots &   \hank_{d_1}(x_{d_2-1})  \\
        \end{array}
    \right] \in \Rd^{n_1n_2\times d_1d_2} . 
    \end{eqnarray}

Similarly, an extended block Hankel matrix from the $p$-channel $n_1\times n_2$ input image  $X^{(i)}= [x_1^{(i)},\cdots, x_{n_2}^{(i)}],
i=1,\cdots, p$ is defined by
 \begin{eqnarray} 
\hank_{d_1,d_2|p}\left([X^{(1)}\cdots X^{(p)}]\right) =
\begin{bmatrix} \hank_{d_1,d_2}(X^{(1)})   & \cdots & \hank_{d_1,d_2}(X^{(p)}) \end{bmatrix}
\in \Rd^{n_1n_2\times d_1d_2p} . 
    \end{eqnarray}
Then, the output $Y\in \Rd^{n_1\times n_2}$ from the
 2-D  SISO convolution for a given image $X\in \Rd^{n_1\times n_2}$ with 2-D filter $\overline K\in \Rd^{d_1\times d_2}$ 
can be represented by a matrix vector form:
\begin{eqnarray*}
\vec(Y) = \hank_{d_1,d_2}(X){\vec(K)}
\end{eqnarray*}
where $\vec(Y)$ denotes the  vectorization operation by stacking the column vectors of the 2-D matrix $Y$.
Similarly,  2-D MIMO convolution for  given $p$ input images $X^{(j)}\in \Rd^{n_1\times n_2},j=1,\cdots, p$ with 2-D filter $\overline K_{(i)}^{(j)}\in \Rd^{d_1\times d_2}$ 
can be represented by a matrix vector form:
\begin{eqnarray}\label{eq:2dConv}
\vec(Y^{(i)})  &=&  \sum_{j=1}^p \hank_{d_1,d_2}(X^{(j)}){\vec(K_{(i)}^{(j)})} ,\quad i=1,\cdots, q
\end{eqnarray}
Therefore, by defining
\begin{eqnarray}
\Yc = \begin{bmatrix} \vec(Y^{(1)})  & \cdots & \vec(Y^{(q)})  \end{bmatrix}  
\end{eqnarray}
\begin{eqnarray}
 \Kc = \begin{bmatrix}  {\vec(K_{(1)}^{(1)})} & \cdots & {\vec(K_{(q)}^{(1)})} \\ 
\vdots & \ddots & \vdots \\
 {\vec(K_{(1)}^{(p)})} & \cdots & {\vec(K_{(q)}^{(p)})} 
 \end{bmatrix} 
\end{eqnarray} 
the 2-D MIMO convolution can be represented by
\begin{eqnarray}
\Yc =   \hank_{d_1,d_2|p}\left([X^{(1)}\cdots X^{(p)}]\right)  \Kc. \label{eq:Yc}
\end{eqnarray}
Due to these  similarities between 1-D and 2-D convolutions,  
we will therefore  use the 1-D notation throughout the paper for the sake of simplicity; however,  readers
are advised  that the same theory applies to 2-D cases.

  \begin{figure}[!hbt] 
\center{\includegraphics[width=8cm]{./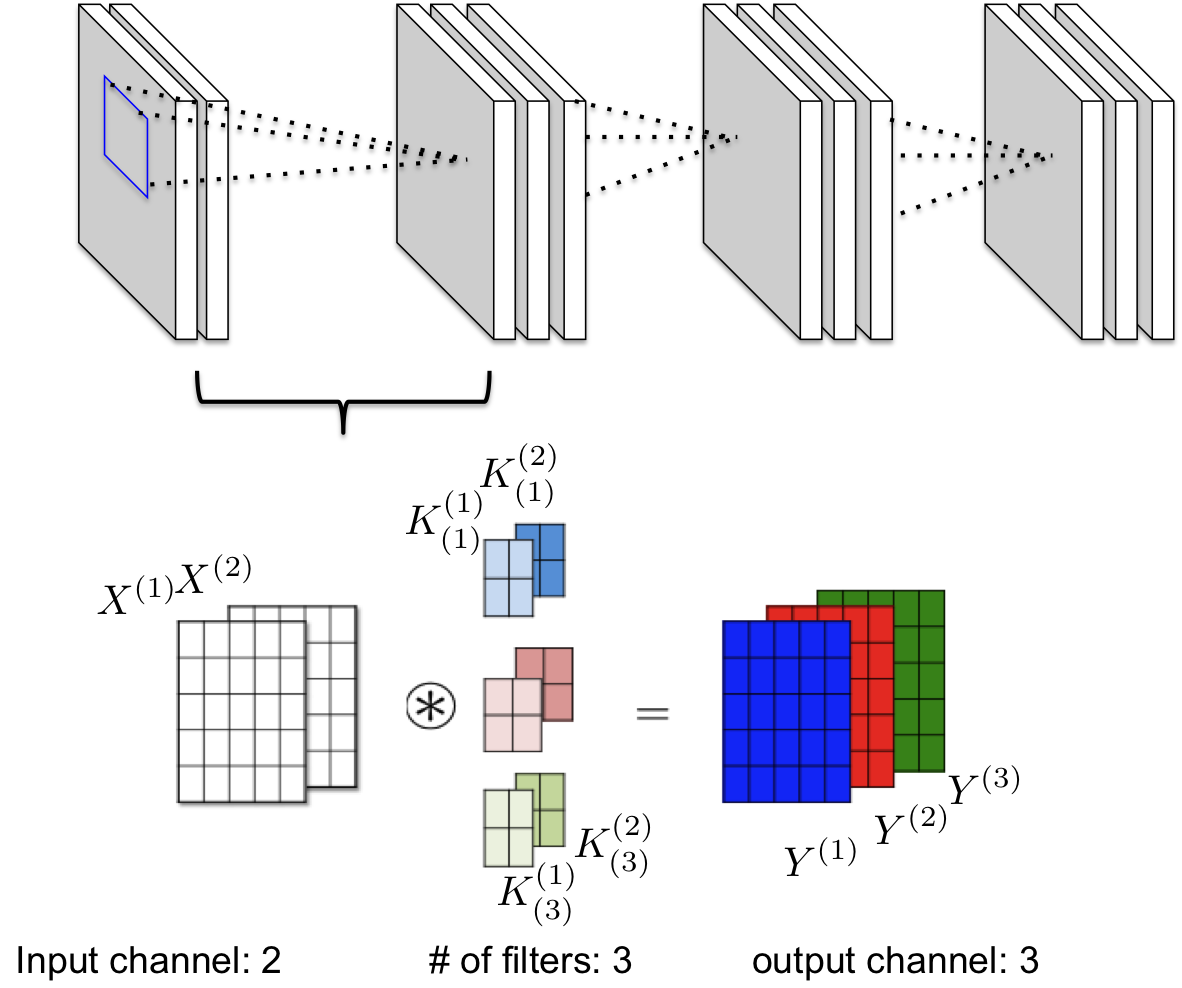}}
\caption{2-D CNN convolutional operation.  For the first layer filter, the input and output channel numbers are $p=2, q=3$, respectively, and the
filter dimension is $d_1=d_2=2$.   Thus,
the corresponding convolution
operation can be represented by $\vec(Y^{(j)})= \sum_{i=1}^2\hank_{d_1,d_2}\left(X^{(i)}\right) {\vec(K_{(j)}^{(i)})} $ where
$X^{(i)}$ and $Y^{(j)}$ denotes the $i$-th input and $j$-th output channel image, respectively; and $K_{(j)}^{(i)}$ denotes the $i$-th input channel filter to yield
$j$-th channel output.
}
\label{fig:cnnConv}
\end{figure}

In convolutional neural networks (CNN), unique multi-dimensional convolutions are used.
Specifically,  to generate $q$ output channels from  $p$ input channels, 
 each channel output is computed by first  convolving $p$- 2D filters and $p$- input channel images, and then applying the weighted sum to the 
 outputs (which is often referred to as $1\times 1$ convolution).
For 1-D signals,  this operation can be written by
\begin{eqnarray}\label{eq:cnnConv}
y_i  &=& \sum_{j=1}^p w_j \left(z_j \circledast\overline \psi_i^j\right),\quad i=1,\cdots, q
\end{eqnarray}
where $w_j$ denotes the 1-D weighting.
Note that this is equivalent to an MIMO  convolution, since we have
\begin{eqnarray}
Y  &=& \sum_{j=1}^p w_j \hank_d(z_j) \Psi_j \notag \\
&=& \sum_{j=1}^p \hank_d(z_j) \Psi^w_j \notag\\
&=& \hank_{d|p}\left(Z\right)  \Psi^w \label{eq:convw} = Z \circledast \overline \Psi^w
\end{eqnarray}
where
\begin{eqnarray}
\overline{\Psi}^w = \begin{bmatrix} w_1\overline\Psi_1 \\ \vdots \\ w_p \overline\Psi_p \end{bmatrix}  \ .
\end{eqnarray}
The aforementioned matrix vector operations  using the  extended Hankel matrix  also describe the  filtering operation \eqref{eq:2dConv} in 2-D CNNs  as shown in  Fig.~\ref{fig:cnnConv}.

%
%

Throughout the paper, we denote the space of the wrap-around Hankel structure matrices  of the form in \eqref{eq:hank} as $\Hc(n,d)$,
and an extended Hankel matrix composed of $p$ Hankel matrices of the form in \eqref{eq:ehank} as $\Hc(n,d;p)$.
The basic properties of Hankel matrix used in this paper are described in Lemma~\ref{lem:calculus} in Appendix~\ref{ap1}.
In the next section, we describe advanced properties of the Hankel matrix that will be extensively used in this paper.

\subsection{Low-rank property  of Hankel Matrices}

\begin{figure}[!bt]
\centering
\includegraphics[width=10cm]{./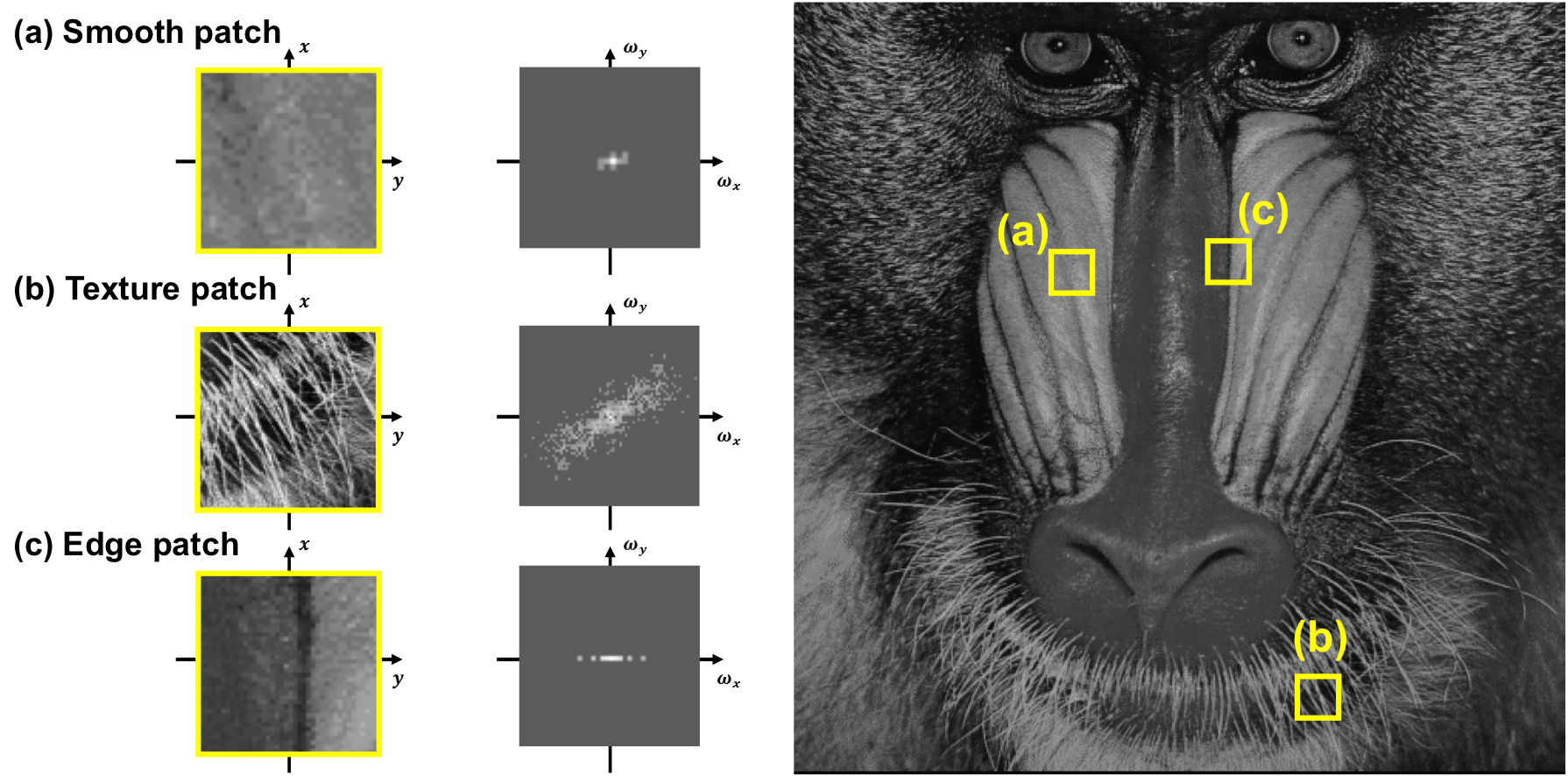}
\caption{Spectral components of patches from  (a) smooth background,  (b) texture, and (c) edge.}
\label{fig:flowchart}
\end{figure}

One of the most intriguing features of the Hankel matrix is that it often has a low-rank structure and its low-rankness is related to the sparsity in the Fourier domain (for the case of Fourier samples,  it is related to the sparsity in the spatial domain)\cite{ye2016compressive,jin2015annihilating}.

Note that many types of image patches have sparsely distributed Fourier spectra.
For example,   as shown in Fig. \ref{fig:flowchart}(a),  a smoothly varying patch usually has spectrum content in the low-frequency regions, while the other frequency regions have very few spectral components.
Similar spectral domain sparsity can be observed in the texture patch shown in  Fig. \ref{fig:flowchart}(b), where the spectral components of patch are determined by the spectrum
of the  patterns.
For the case of an abrupt transition along the edge as shown in Fig. \ref{fig:flowchart}(c), the spectral components are mostly localized along the $\omega_x$ axis. 
In these cases,  if we construct a Hankel matrix using the corresponding image patch,  the resulting Hankel matrix is low-ranked \cite{ye2016compressive}.
This property is  extremely useful   as demonstrated by many applications \cite{jin2015annihilating,jin2016general,ongie2016off,lee2016acceleration,lee2016reference,jin2016mri}.
For example,
this idea can be used for image denoising \cite{jin2015sparse+} 
and deconvolution \cite{min2015fast} by
modeling the underlying intact signals to have low-rank Hankel structure, from which the artifacts or blur components can be easily removed.

In order to understand this intriguing relationship,  consider a 1-D signal, whose spectrum  in the Fourier domain is sparse and can
be modelled as the sum of Diracs:
\begin{equation}\label{eq:signal3}
\hat f(\omega) = 2\pi \sum_{j=0}^{r-1} c_{j} \delta \left( \omega- \omega_j \right) \, \quad \omega_j \in [0, 2\pi] ,
\end{equation}
where $\{\omega_j\}_{j=0}^{r-1}$  refer to the corresponding harmonic components in the Fourier domain.
Then, the corresponding discrete time-domain signal is  given by:
 \begin{eqnarray}\label{eq:fs}
f[k] = \sum_{j=0}^{r-1} c_{j} e^{-i k \omega_j } \ .
\end{eqnarray}
Suppose that  we have a $r+1$-length  filter $h[k]$ which has the following z-transform representation  \cite{vetterli2002sampling}: 
\begin{eqnarray}\label{eq:afilter}
\hat h(z)  &=& \sum_{l=0}^r  h[l] z^{-l} = \prod_{j=0}^{r-1} (1- e^{-i\omega_j} z^{-1}) \ .
\end{eqnarray}
Then, it is easy to see that
\begin{eqnarray}\label{eq:annf}
( f\circledast h ) [k]=0,\quad \forall k,
\end{eqnarray}
 because
\begin{eqnarray}
( h\ast  f)[k] &=& \sum_{l=0}^r h[l] f[k-l] \nonumber \\
&=& \sum_{l=0}^r \sum_{j=0}^{r-1} c_{j}  h[l]u_j^{k-l}  \nonumber \\
&=& \sum_{j=0}^{r-1}c_j\underbrace{\left( \sum_{l=0}^r  h[p]u_j^{-l} \right)}_{\hat h(u_j)}u_j^k = 0 \label{eq:fri}
\end{eqnarray}
where $u_j = e^{-i\omega_j }$ and the last equality comes from \eqref{eq:afilter} \cite{vetterli2002sampling}. 
Thus, the filter $h$ annihilates the signal $f$, so it is referred to as the {\em annihilating filter}.
Moreover, using the notation in \eqref{eq:SISO},  
 Eq.~\eqref{eq:annf} can be represented by
 $$\hank_d(f) \overline h =0 \quad \ . $$
This implies that Hankel matrix $\hank_d(f)$ is rank-deficient.   In fact, the rank of the Hankel matrix
can be explicitly calculated as shown in the following theorem:
\begin{theorem}\label{thm:hrank}\cite{ye2016compressive}
Let $r+1$ denote the minimum length of  annihilating filters that annihilates the signal $f=[f[1],\cdots, f[n]]^T$. 
Then,
for a given Hankel structured matrix $\hank_d(f) \in \Hc(n,d)$ with $d>r$, we have
\begin{eqnarray}\label{eq:rankr}
\rank \hank_d(f)=  r, 
\end{eqnarray} 
where $\rank(\cdot)$ denotes a matrix rank.
\end{theorem}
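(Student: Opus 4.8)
The plan is to promote the rank-deficiency already exhibited in \eqref{eq:annf}--\eqref{eq:fri} into an exact rank count by producing a rank-revealing factorization of $\hank_d(f)$ built from the modal structure of $f$. The starting point is the observation that the hypothesis ``the minimal annihilating filter has length $r+1$'' is equivalent to $f$ being a sum of exactly $r$ harmonics, i.e. $f[k]=\sum_{j=0}^{r-1}c_ju_j^k$ with $u_j=e^{-i\omega_j}$ distinct and $c_j\neq 0$, as in \eqref{eq:fs}: the filter in \eqref{eq:afilter} annihilates $f$ and, since any annihilating filter must vanish at every active node $u_j$ by \eqref{eq:fri}, no shorter filter can do so, which pins the minimal length at $r+1$.

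First I would substitute this modal expansion into the generic entry of the Hankel matrix, $\left(\hank_d(f)\right)_{i,l}=f[i+l-1]=\sum_{j=0}^{r-1}\left(c_ju_j^{-1}\right)u_j^{\,i}u_j^{\,l}$, and separate the row index $i$ from the column index $l$. This separation yields the factorization
$$\hank_d(f)=V_n\,C\,V_d^{\top},$$
where $V_n\in\Cd^{n\times r}$ and $V_d\in\Cd^{d\times r}$ are the Vandermonde matrices with entries $(V_n)_{i,j}=u_j^{\,i}$ and $(V_d)_{l,j}=u_j^{\,l}$, and $C=\mathrm{diag}(c_0u_0^{-1},\dots,c_{r-1}u_{r-1}^{-1})$ is diagonal. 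From here the rank is squeezed from both sides: the rank of a product never exceeds the smallest factor rank, so $\rank\hank_d(f)\le r$; conversely, because the nodes $u_j$ are distinct, $V_n$ and $V_d$ have full column rank $r$ (this is where $n\ge r$ and the standing assumption $d>r$ are used), and $C$ is invertible since each $c_j\neq 0$ and $u_j\neq 0$, so the product has rank exactly $r$. Combining the two bounds gives \eqref{eq:rankr}.

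The part that needs the most care is the passage between the two descriptions of $r$ together with the wrap-around convolution used in \eqref{eq:hank}. Because $\hank_d(f)\psi$ realizes a \emph{circular} convolution, annihilation is naturally read through the DFT, so one should either place the harmonics $\omega_j$ on the length-$n$ frequency grid (so that circular and linear annihilation coincide) or argue modulo $z^n-1$; in the DFT picture the claim becomes that $\hat f$ is supported on exactly $r$ grid frequencies, and the null space of $\hank_d(f)$ is cut out by the $d-r$ length-$d$ filters whose spectra vanish there, again giving nullity $d-r$ and hence rank $r$. A second subtlety is the possibility of repeated roots of the minimal annihilating polynomial: then the pure Vandermonde factorization must be replaced by its confluent (generalized) version, with $f$ acquiring polynomial-times-exponential modes and $C$ becoming block diagonal. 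I would flag this as the delicate point and note that full column rank of the confluent Vandermonde factors still holds, so the conclusion $\rank\hank_d(f)=r$ is unchanged, the essential input being only that the minimal annihilating polynomial has degree exactly $r$.
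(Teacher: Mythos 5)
The paper does not actually prove this theorem in-line---it is imported from \cite{ye2016compressive}---and your factorization $\hank_d(f)=V_n C V_d^{\top}$ with the two-sided rank squeeze (product bound for $\leq r$, full column rank of the Vandermonde factors plus invertibility of $C$ for $\geq r$) is precisely the argument used in that reference, including the confluent generalization for repeated roots that matches the signal model \eqref{eq:signalFRI}. Your flagged subtlety is the right one: since \eqref{eq:hank} is a wrap-around Hankel matrix, the factorization only closes up if the harmonics $\omega_j$ lie on the length-$n$ DFT grid (equivalently, one argues modulo $z^n-1$), and with that caveat made explicit the proof is correct.
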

Thus, if we choose a sufficiently large $d$,  the resulting Hankel matrix is low-ranked.
This relationship is quite general, and Ye et al \cite{ye2016compressive} further showed that 
%
the rank of the associated Hankel matrix $\hank_d(f)$  is $r$ if and only if
$f$ can be represented by  
\begin{equation}\label{eq:signalFRI}
f [k]=\sum\limits_{j=0}^{p-1}\sum\limits_{l=0}^{m_j-1}c_{j,l}k^l{\lambda_j}^k~, \quad \mbox{where}~ \quad r=\sum\limits_{j=0}^{p-1}m_j ~ < d
\end{equation}
for some $|\lambda_j|\leq 1,~j=1,\cdots, m_j$.
If  $\lambda_j = e^{-i\omega_j}$,  then it is directly
related to the signals with the finite rate of innovations (FRI)  \cite{vetterli2002sampling}.
Thus, the low-rank Hankel matrix provides an important link between  FRI sampling theory and compressed sensing such
that a sparse recovery problem can be solved using the measurement domain low-rank interpolation \cite{ye2016compressive}.

In \cite{jin2016general},  we also showed that the rank of the extended Hankel matrix in \eqref{eq:ehank}  is low,
when the multiple signals $Z=[z_1,\cdots z_p]$ 
has the following structure:
\beq
\hat z_i =  f  \circledast \overline h_i, \quad i=1,\cdots, p
\eeq 
such that the Hankel matrix $\hank_{d}(z_i)$  has the following decomposition:
\beq
\hank_{d}(z_i)  = \hank_n(f) \circul_d(h_i) \quad \in \Cd^{n\times d}
\eeq
where $\hank_n(f)$ is $n\times n$  wrap-around Hankel matrix,  and $ \circul_d(h)$  for any $h\in \Rd^m$ with $m\leq n$ is defined by
  \begin{eqnarray}\label{eq:circul}
\circul_d(h ) &=& \overbrace{\left[
        \begin{array}{ccc}
        h [1]  &   \cdots   &  0    \\
        \vdots &   \ddots  &  0 \\
         h [m]    &   \ddots  & h [1]  \\ 
          0 &   \ddots&   \vdots \\
         \vdots    & \vdots       &   h [m] \\
         \vdots & \vdots & \vdots \\
                   0  &  0 &   0 \\
        \end{array}
    \right]}^{d} \in \Cd^{n\times d}  \  .
    \end{eqnarray}
Accordingly, the extended Hankel matrix $\hank_{d|p}(Z)$
has the following decomposition:
\begin{eqnarray}\label{eq:Ydecomp}
\hank_{d|p}(Z) =  \hank_n(f)\begin{bmatrix} \circul_d(h_1) & \cdots & \circul_d(h_p) \end{bmatrix}.
\end{eqnarray}
Due to the rank inequality $\rank(AB) \leq\min\{\rank(A),\rank(B) \}$,
we therefore have the following rank bound:
\begin{eqnarray}\label{eq:rankYch}
\rank \hank_{d|p}(Z)  &\leq&  \min\{ \rank   \hank_n(f), \rank \begin{bmatrix} \circul(h_1) & \cdots & \circul(h_p) \end{bmatrix}\} \notag\\
&=&  \min\{r, pd\}  \quad . 
\end{eqnarray}
Therefore, if the filter length $d$ is chosen such that the number of column of the extended matrix is sufficiently large, i.e. $pd > r$,  then the concatenated matrix becomes low-ranked.

%
%
Note that the  low-rank Hankel matrix algorithms are usually performed in a  patch-by-patch manner \cite{jin2015sparse+,jin2015annihilating}.
It is also remarkable that this is similar to the current practice of deep CNN for low level computer vision applications,
where the network input  is usually given as a patch.
Later, we will show that this is not a coincidence; rather it suggests an important link between the low-rank Hankel matrix approach and a CNN.


\subsection{Hankel matrix decomposition and  the convolution framelets}

The last but not  least important property of Hankel matrix   is that a Hankel matrix decomposition results in a framelet representation
whose bases are constructed by the convolution  of so-called local and non-local bases \cite{yin2017tale}.
%
More specifically, for a given input vector $f\in \Rd^{n}$,
suppose that the Hankel matrix $\hank_{d}(f)$  with the rank $r< d$ has the following singular value decomposition:
\begin{eqnarray}\label{eq:svd0}
\hank_{d}(f) = U \Sigma V^{\top}
\end{eqnarray}
where $U =[u_1 \cdots u_r] \in \Rd^{n \times r}$ and $V=[v_1\cdots v_r]\in \Rd^{d\times r}$  denote the left and right singular vector bases matrices, respectively; and $\Sigma \in \Rd^{r\times r}$ is the diagonal
matrix whose diagonal components contains the singular values.
Then,  by multiplying $U^\top$ and $V$ to the left and right of the Hankel matrix, we have
\begin{eqnarray}\label{eq:Sigma}
\Sigma &=& U^\top\hank_{d}(f) V \  .
\end{eqnarray}
Note that the $(i,j)$-th element of $\Sigma$ is given by
\begin{eqnarray}\label{eq:sigma}
\sigma_{ij} = u_i^\top \hank_{d}(f)v_j  =\langle f,  u_i \circledast v_j \rangle,\quad 1\leq i,j \leq r    \  ,
\end{eqnarray}
where the last equality comes from \eqref{eq:inner}.
Since the number of rows and columns of $\hank_d(f)$ are $n$ and $d$,
 the right-multiplied vector $v_j$   interacts {locally} with  the $d$ neighborhood of the $f$ vector, 
whereas the left-multiplied vector $u_i$ has a  global interaction with the entire $n$-elements of  the $f$ vector.
Accordingly, \eqref{eq:sigma} represents the strength of simultaneous global and local interaction of the signal $f$ with bases. 
Thus,  we call $u_i$ and $v_j$ as {\em non-local} and {\em local} bases, respectively.

This relation holds for arbitrary  bases matrix $\Phi =[\phi_1,\cdots,\phi_m] \in \Rd^{n\times n}$ and $\Psi=[\psi_1,\cdots, \psi_d] \in \Rd^{d\times d}$ that are multiplied to the left and right
of the Hankel matrix, respectively, to yield the coefficient matrix:
\begin{eqnarray}\label{eq:C0}
c_{ij} =  \phi_i^\top\hank_{d}(f)  \psi_j  = \langle f,  \phi_i\circledast \psi_j\rangle, \quad i=1,\cdots, n, ~j=1,\cdots, d,
\end{eqnarray}
which represents the interaction of $f$ with the non-local basis $\phi_i$ and local basis $\psi_j$.
Using \eqref{eq:C0} as expansion coefficients,  
Yin et al derived  the following signal expansion, which they called the {\em convolution framelet} expansion \cite{yin2017tale}:
\begin{proposition}[\cite{yin2017tale}]\label{prp:yin}
Let  $\phi_i$ and $\psi_j$ denotes the $i$-th and $j$-th columns of orthonormal matrix $\Phi \in \Rd^{n\times n}$ and $\Psi \in \Rd^{d\times d}$, respectively.
Then, for any $n$-dimensional vector $f \in \Rd^n$, 
\begin{eqnarray}\label{eq:frame0}
f  
&=&  \frac{1}{d} \sum_{i=1}^{n}\sum_{j=1}^d\langle f,  \phi_i \circledast \psi_j \rangle   \phi_i \circledast \psi_j
\end{eqnarray}
Furthermore, $ \phi_i \circledast \psi_j$ with $ i=1,\cdots, n; j=1,\cdots, d$ form a tight frame  for $\Rd^n$ with the frame constant $d$.
\end{proposition}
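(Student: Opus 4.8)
The plan is to route everything through the coefficient matrix $C=\Phi^\top\hank_d(f)\Psi$ and exploit the orthogonality of $\Phi$ and $\Psi$. First I would record that, by the inner-product identity \eqref{eq:inner} already established for the Hankel form, the expansion coefficients are exactly the entries of $C$, i.e. $c_{ij}=\langle f,\phi_i\circledast\psi_j\rangle=\phi_i^\top\hank_d(f)\psi_j$. Thus the whole statement becomes a statement about the pairing of $f$ with the convolution atoms $\phi_i\circledast\psi_j$, and the single matrix $C$ packages all $nd$ coefficients at once.

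Second, I would prove the tight-frame (Parseval) identity $\sum_{i,j}|\langle f,\phi_i\circledast\psi_j\rangle|^2=d\|f\|_2^2$. Because $\Phi$ and $\Psi$ are orthogonal, they preserve the Frobenius norm, so $\sum_{i,j}|c_{ij}|^2=\|C\|_F^2=\|\Phi^\top\hank_d(f)\Psi\|_F^2=\|\hank_d(f)\|_F^2$. It then remains to compute $\|\hank_d(f)\|_F^2$ directly from the wrap-around structure \eqref{eq:hank}: in each of the $d$ columns the cyclic shift runs over every index, so each entry of $f$ occurs exactly once per column and hence exactly $d$ times across the whole matrix. Therefore $\|\hank_d(f)\|_F^2=d\sum_k|f[k]|^2=d\|f\|_2^2$, which pins down the frame constant $d$.

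Third, I would upgrade this Parseval identity to the reconstruction formula by the standard frame argument. The frame operator $Sf:=\sum_{i,j}\langle f,\phi_i\circledast\psi_j\rangle\,\phi_i\circledast\psi_j$ is symmetric and positive semidefinite, and the identity just proved says $\langle Sf,f\rangle=d\|f\|_2^2$ for every $f$; polarization then forces $S=d\,I_{n\times n}$, which is precisely $f=\frac1d\sum_{i,j}\langle f,\phi_i\circledast\psi_j\rangle\,\phi_i\circledast\psi_j$ and simultaneously certifies that $\{\phi_i\circledast\psi_j\}$ is a tight frame for $\Rd^n$ with constant $d$. (Alternatively, one can synthesize directly: orthogonality gives $\Phi C\Psi^\top=\hank_d(f)$, and the reconstruction follows from the Hankel calculus of Lemma~\ref{lem:calculus} relating the generalized inverse $\hank_d^\dag$ to the convolution synthesis; but the trace route above is the most economical.)

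I expect the only genuinely delicate point to be the bookkeeping rather than any deep estimate. One must fix a single convolution convention so that $\langle f,\phi_i\circledast\psi_j\rangle=\phi_i^\top\hank_d(f)\psi_j$ holds with the flips used in \eqref{eq:SISO}, and one must be careful that the counting argument uses the \emph{wrap-around} (circular) Hankel matrix, so that every sample is repeated \emph{exactly} $d$ times; a non-circular Hankel matrix would give boundary-dependent multiplicities and spoil the clean frame constant. Everything else is a direct consequence of the orthogonal invariance of the Frobenius norm together with the self-adjointness of the frame operator.
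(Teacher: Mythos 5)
Your proof is correct, but it takes a genuinely different route from the one the paper uses. The paper states this result as a citation of Yin et al.\ and effectively proves it only as the special case $\tilde\Phi=\Phi$, $\tilde\Psi=\Psi$ of Proposition~\ref{prp:1}: there the argument is a direct synthesis computation, writing $\hank_d(f)=\Phi\Phi^\top\hank_d(f)\Psi\Psi^\top=\Phi C\Psi^\top$ and then applying the generalized inverse $\hank_d^\dag$ together with the Hankel calculus identities \eqref{eq:recon1} and \eqref{eq:invfilter} to convert $\hank_d^\dag(\Phi C\Psi^\top)$ into the double convolution sum. You instead prove the Parseval identity first, via $\|C\|_F=\|\Phi^\top\hank_d(f)\Psi\|_F=\|\hank_d(f)\|_F$ and the observation that each sample of $f$ occurs exactly $d$ times in the wrap-around Hankel matrix, and then recover the reconstruction formula by polarization of the symmetric frame operator. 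Each step checks out: the coefficient identity is exactly \eqref{eq:inner}, the Frobenius computation is valid precisely because the Hankel matrix is circular (as you correctly flag), and $\langle Sf,f\rangle=d\|f\|^2$ for symmetric positive semidefinite $S$ does force $S=dI_{n\times n}$. What your route buys is economy: the tight-frame claim is obtained as the primary fact rather than as an afterthought, and you avoid invoking the generalized inverse $\hank_d^\dag$ and Lemma~\ref{lem:calculus} altogether. What the paper's route buys is generality: the algebraic identity $\hank_d(f)=\tilde\Phi C\tilde\Psi^\top$ extends verbatim to non-orthogonal dual (bi-frame) pairs satisfying only \eqref{eq:phi0}--\eqref{eq:ri0}, which is the setting of Propositions~\ref{prp:1} and \ref{prp:2}, whereas your Parseval-plus-polarization argument relies on the frame operator being a multiple of the identity and would not carry over to the dual-basis case without modification.
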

This implies that  any input signal $f\in \Rd^n$ can be expanded using the
convolution frame $\phi_i\circledast \psi_j$ and the expansion coefficient
$\langle f,  \phi_i \circledast \psi_j \rangle $.
Although the framelet coefficient matrix $[c_{ij}]$ in \eqref{eq:C0}  for general non-local and local bases is not as sparse as  \eqref{eq:Sigma}  from SVD bases,  Yin et al \cite{yin2017tale} showed that the framelet coefficients
can be made sufficiently sparse by optimally learning $\Psi$ for a given non-local basis $\Phi$.
Therefore, the choice of the non-local bases is one of the key factors in determining the efficiency of the framelet expansion.
In the following,  several examples of non-local bases $\Phi$ in \cite{yin2017tale} are discussed.
\begin{itemize}
\item SVD: From the singular value decomposition in \eqref{eq:svd0},  the SVD basis is constructed by augmenting the left singular vector basis $U \in \Rd^{n\times r}$ with an orthogonal
matrix $U_{ext}\in \Rd^{n\times (n-r)}$:
$$\Phi_{SVD} = \begin{bmatrix} U & U_{ext} \end{bmatrix}$$
such that $\Phi_{SVD}^\top \Phi_{SVD}=I$. 
Thanks to \eqref{eq:sigma}, this is the most energy compacting basis. 
However, the  SVD  basis is input-signal dependent and the calculation of the
SVD is computationally expensive.
\item Haar: 
Haar basis comes from the Haar wavelet transform and is constructed as follows:
$$\Phi = \begin{bmatrix} \Phi_{low}  & \Phi_{high} \end{bmatrix} \ ,$$
where 
the low-pass and high-pass operators $\Phi_{low},\Phi_{high} \in \Rd^{n\times \frac{n}{2}}$  are defined by
$$\Phi_{low}= \frac{1}{\sqrt{2}} \begin{bmatrix} 1 & 0 & \cdots & 0 \\ 1 & 0 & \cdots & 0 \\  0 & 1 & \cdots & 0 \\ 
 0 & 1 & \cdots & 0 \\  \vdots & \vdots & \ddots & \vdots \\ 0 & 0 & \cdots & 1  \\ 0 & 0 & \vdots & 1 \end{bmatrix} ,\quad
 \Phi_{high}= \frac{1}{\sqrt{2}} \begin{bmatrix} 1 & 0 & \cdots & 0 \\ -1 & 0 & \cdots & 0 \\  0 & 1 & \cdots & 0 \\ 
 0 & -1 & \cdots & 0 \\  \vdots & \vdots & \ddots & \vdots \\ 0 & 0 & \cdots & 1  \\ 0 & 0 & \vdots & -1 \end{bmatrix}
  $$
Note that the non-zero elements of each column of Haar basis is two, so one level of Haar decomposition does not represent
a global interaction.  However, by cascading the Haar basis, the interaction becomes global, resulting
in a multi-resolution decomposition of the input signal.
Moreover,  Haar basis is a useful global basis because it can sparsify the piecewise constant signals. 
Later, we will show that the average pooling operation is closely related to the Haar basis.
\item DCT: The discrete cosine transform (DCT) basis is an interesting global basis proposed by Yin et al \cite{yin2017tale} due to its energy
compaction property proven by JPEG image compression standard. The DCT bases matrix is a fully populated dense matrix,
which clearly represents a global interaction.  To the best of our knowledge, the DCT basis have never been used in deep CNN, which could be an interesting
direction of research.
\end{itemize}
In addition to the non-local bases used in \cite{yin2017tale},   we will also investigate the following non-local bases:
\begin{itemize}
\item Identity matrix:  In this case, $\Phi=I_{n\times n}$, so there is no global interaction between the basis and the signal.  Interestingly,
this non-local basis is quite often used in CNNs that do not have a pooling layer.
In this case, it is believed that the local structure of the signal is  more important
and local-bases  are trained such that they can maximally capture the local correlation
structure of the signal.
\item Learned basis: In extreme case where we do not have specific knowledge of the signal, the non-local bases can be also learnt.
However, a care must be taken, since the learned non-local basis has size of $n\times n$ that quickly becomes very large for image processing applications.
For example, if one is interested in
processing  $512\times 512$ (i.e. $n=2^{9}\times 2^9$) image, the required memory to store the learnable non-local basis becomes $2^{37}$,
which is not possible to store or estimate. However, if the input patch size is sufficiently small, this may be another interesting 
direction of research in deep CNN.
\end{itemize}

\section{Main Contributions: Deep Convolutional Framelets Neural Networks}

In this section, which is our main theoretical contribution,  we will show that the convolution framelets by Yin et al \cite{yin2017tale} is directly related to the deep neural network
if we relax the condition of the original convolution framelets  to allow
 multilayer implementation. The multi-layer extension of convolution framelets, which we call the {\em deep convolutional framelet},
 can explain many important components of deep learning.

\subsection{Deep Convolutional Framelet Expansion}

While the original convolution framelets by Yin et al \cite{yin2017tale}  exploits the  advantages of the low rank Hankel matrix approaches using two bases,
there are several limitations.
First, their convolution framelet uses only orthonormal basis. 
Second,
the  significance of multi-layer implementation  was not noticed.
Here, we 
discuss its extension to relax
 these limitations. 
 As will become clear, this
 is a basic building step toward a  deep convolutional framelets neural network.

\begin{proposition}\label{prp:1}
Let $\Phi  = [\phi_1,\cdots, \phi_m] \in \Rd^{n\times m}$ and $\Psi  = [\psi_1,\cdots, \psi_q] \in \Rd^{d\times q}$ denote the
non-local and local bases matrices, respectively.
Suppose, furthermore, that $\tilde\Phi = [\tilde\phi_1,\cdots, \tilde\phi_m] \in \Rd^{n\times m}$ and $\tilde\Psi  = [\tilde\psi_1,\cdots,\tilde\psi_q] \in \Rd^{d\times q}$ denote
their {\em dual bases} matrices  such that they satisfy the frame condition:
\begin{eqnarray}
 \tilde \Phi \Phi^\top &=& \sum_{i=1}^m \tilde \phi_i \phi_i^\top = I_{n\times n},~ \label{eq:phi0} \\
  \Psi \tilde \Psi^{\top} &=&  \sum_{j=1}^q   \psi_j\tilde \psi_j^{\top} = I_{d\times d} \ .  \label{eq:ri0}
 \end{eqnarray}
Then, for any input signal $f \in \Rd^n$, we have
\begin{eqnarray}\label{eq:frame}
f &=& \frac{1}{d} \sum_{i=1}^{m}\sum_{j=1}^q  \langle f,  \phi_i \circledast \psi_j \rangle   \tilde \phi_i \circledast \tilde \psi_j   \  , 
\end{eqnarray}
or equivalently,
\begin{eqnarray}\label{eq:frameeq}
f &=& \frac{1}{d} \sum_{i=1}^{m} \left( \tilde \Phi c_j\right) \circledast \tilde \psi_j   \  ,
\end{eqnarray}
where $c_j$ is the $j$-th column of the framelet coefficient matrix
\begin{eqnarray}
C &=& \Phi^\top \left( f \circledast  \overline\Psi\right) \label{eq:coef} \\
&=& \begin{bmatrix} \langle f,  \phi_1 \circledast \psi_1 \rangle & \cdots &  \langle f,  \phi_1 \circledast \psi_q  \rangle \\
\vdots & \ddots & \vdots \\
 \langle f,  \phi_m \circledast \psi_1 \rangle & \cdots &  \langle f,  \phi_m \circledast \psi_q  \rangle\end{bmatrix} \in \Rd^{m\times q} \quad \ .
 \end{eqnarray}
\end{proposition}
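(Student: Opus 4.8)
The plan is to route the entire identity through the Hankel matrix: lift $f$ to $\hank_d(f)$, carry out the expansion at the matrix level where it collapses to two completeness relations, and then project back to $\Rd^n$ using the generalized inverse $\hank_d^\dag$. First I would recast the framelet coefficients of \eqref{eq:coef} in purely matrix terms. By the inner-product identity \eqref{eq:inner} each entry is $c_{ij} = \langle f, \phi_i \circledast \psi_j\rangle = \phi_i^\top \hank_d(f)\psi_j$, so the coefficient matrix is simply $C = \Phi^\top \hank_d(f)\Psi$; equivalently, by the SIMO identity \eqref{eq:simo}, $C = \Phi^\top\bigl(f\circledast\overline\Psi\bigr)$, which is exactly the asserted form of \eqref{eq:coef}.

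The core of the argument is to reconstruct the lifted object $\hank_d(f)$ from $C$ using nothing but the two frame conditions. Multiplying $C$ on the left by $\tilde\Phi$ and on the right by $\tilde\Psi^\top$ and substituting $C = \Phi^\top \hank_d(f)\Psi$ gives $\tilde\Phi\,C\,\tilde\Psi^\top = \bigl(\tilde\Phi\Phi^\top\bigr)\hank_d(f)\bigl(\Psi\tilde\Psi^\top\bigr)$. The duality relations \eqref{eq:phi0} and \eqref{eq:ri0} collapse each parenthesized factor to an identity matrix, so $\hank_d(f) = \tilde\Phi\,C\,\tilde\Psi^\top = \sum_{i=1}^m\sum_{j=1}^q c_{ij}\,\tilde\phi_i\tilde\psi_j^\top$. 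Thus the Hankel matrix is recovered \emph{exactly} as a coefficient-weighted sum of rank-one outer products, and orthonormality of $\Phi,\Psi$ is never used---only the frame (dual-basis) conditions, which is precisely the relaxation over Proposition~\ref{prp:yin}.

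The remaining step, descending from this matrix identity back to a signal identity, is where I expect the only real subtlety, since it is here that the scalar $1/d$ and the convolution structure must emerge. I would invoke the Hankel calculus of Lemma~\ref{lem:calculus}. Because each sample of $f$ populates exactly $d$ antidiagonal entries, one has $\hank_d^*\hank_d = d\,I$, hence $\hank_d^\dag = \tfrac1d \hank_d^*$ and $\hank_d^\dag\hank_d(f) = f$; moreover, taking the adjoint form of \eqref{eq:inner} shows a rank-one matrix is sent to a scaled convolution, $\hank_d^\dag(\tilde\phi_i\tilde\psi_j^\top) = \tfrac1d\,\tilde\phi_i\circledast\tilde\psi_j$. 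Applying $\hank_d^\dag$ to both sides of the recovered matrix identity and using linearity then yields $f = \hank_d^\dag(\hank_d(f)) = \tfrac1d\sum_{i,j} c_{ij}\,\tilde\phi_i\circledast\tilde\psi_j$, which is \eqref{eq:frame}.

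Finally, to reach the encoder--decoder form \eqref{eq:frameeq} I would group the inner sum over $i$ before convolving: since $\circledast$ is linear in its first argument, $\sum_{i=1}^m c_{ij}\tilde\phi_i = \tilde\Phi c_j$, and substituting gives $f = \tfrac1d\sum_j(\tilde\Phi c_j)\circledast\tilde\psi_j$. The one bookkeeping point to verify is that $\circledast$ between the length-$n$ vector $\tilde\Phi c_j$ and the length-$d$ vector $\tilde\psi_j$ is the same convolution already fixed by \eqref{eq:SISO}, so no new definition is needed. With the Hankel calculus of Lemma~\ref{lem:calculus} in hand, every step is a one-line consequence, so I anticipate no genuine obstacle beyond correctly tracking the $1/d$ normalization through $\hank_d^\dag$.
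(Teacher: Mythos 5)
Your proposal is correct and follows essentially the same route as the paper's own proof: express $C=\Phi^\top\hank_d(f)\Psi$, use the two frame conditions to recover $\hank_d(f)=\tilde\Phi C\tilde\Psi^\top$, and apply $\hank_d^\dag$ together with the Hankel calculus of Lemma~\ref{lem:calculus} (your identity $\hank_d^\dag(uv^\top)=\tfrac1d\,u\circledast v$ is exactly \eqref{eq:Akuv}--\eqref{eq:invfilter}, and $\hank_d^\dag=\tfrac1d\hank_d^*$ is equivalent to the paper's \eqref{eq:F2f}). No gap.
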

\begin{proof}
Using the frame condition \eqref{eq:phi0} and \eqref{eq:ri0},  we have 
\begin{eqnarray*}
 \hank_d(f)  
 &=&  \tilde \Phi \Phi^\top  \hank_d(f)   \Psi\tilde \Psi^{\top} 
 = \tilde \Phi C \tilde\Psi^{\top}  \  ,
\end{eqnarray*}
where $C \in \Rd^{m\times q}$ denotes the framelet coefficient matrix  computed by
$$C = \Phi^\top  \hank_d(f)   \Psi = \Phi^\top \left( f \circledast \overline \Psi\right) $$
and its $(i,j)$-th element is given by
%
\begin{eqnarray*}
c_{ij} =  \phi_i^{\top}\hank_d(f)   \psi_j  &=  \langle f, \phi_i \circledast \psi_j \rangle
\end{eqnarray*}
where we use  \eqref{eq:inner} for the last equality.
Furthermore, using \eqref{eq:recon1} and \eqref{eq:invfilter}, we have
\begin{eqnarray*}
f = \hank_d^\dag\left( \hank_d(f) \right) &=& \hank_d^\dag\left( \tilde \Phi C \tilde\Psi^{\top} \right) \\
&=& \frac{1}{d}\sum_{j=1}^q  \left(\tilde\Phi c_j\right) \circledast \tilde \psi_j \\
&=& \sum_{i=1}^m\sum_{j=1}^q \langle f, \phi_i \circledast \psi_j \rangle \tilde \phi_i \circledast 
\tilde \psi_j
\end{eqnarray*}
This concludes the proof.
\end{proof}

Note that
%
 the so-called perfect recovery condition (PR) represented by \eqref{eq:frame} can be equivalently studied using:
 \begin{eqnarray}\label{eq:PR}
f =  \hank_d^{\dag} \left( \tilde\Phi ( \Phi^{\top}\hank_d(f)  \Psi) \tilde\Psi^{\top} \right) \ .
\end{eqnarray}
Similarly, for a given matrix input $Z\in \Rd^{n\times p}$, the perfect reconstruction condition for a matrix input $Z$ can be given by
 \begin{eqnarray}\label{eq:PRZ}
Z =  \hank_{d|p}^{\dag} \left( \tilde\Phi ( \Phi^{\top}\hank_{d|p}(Z)  \Psi) \tilde\Psi^{\top} \right) \ .
\end{eqnarray}
which is explicitly represented in the following proposition:
\begin{proposition}\label{prp:2}
Let  $\Phi,\tilde \Phi \in \Rd^{n\times m}$ denote the non-local basis and its dual,  and  $\Psi, \tilde \Psi \in \Rd^{pd\times q}$ denote
the local basis and its dual, respectively, which satisfy the frame condition:
\begin{eqnarray}
 \tilde \Phi \Phi^\top &=& \sum_{i=1}^m \tilde \phi_i \phi_i^\top = I_{n\times n},~ \label{eq:phi1} \\
  \Psi \tilde \Psi^{\top} &=&  \sum_{j=1}^q   \psi_j\tilde \psi_j^{\top} = I_{pd\times pd} \ .  \label{eq:ri1}
 \end{eqnarray}
Suppose, furthermore, that the local bases matrix have block structure:
\begin{eqnarray}\label{eq:blockPsi}
\Psi^\top = \begin{bmatrix}
\Psi_1^\top & \cdots & \Psi_p^\top \end{bmatrix} ,\quad \tilde\Psi^\top = \begin{bmatrix}
\tilde\Psi_1^\top & \cdots & \tilde\Psi_p^\top \end{bmatrix}
\end{eqnarray}
 with $\Psi_i, \tilde \Psi_i \in \Rd^{d\times q}, $
whose $j$-th column is represented by $\psi_{j}^i$ and $\tilde \psi_{j}^i$, respectively.
Then, for any  matrix $Z =[z_1\cdots z_p] \in \Rd^{n\times p}$, 
we have
\begin{eqnarray}\label{eq:frameZ}
Z &=& \frac{1}{d} \sum_{i=1}^m\sum_{j=1}^q    \sum_{k=1}^p \begin{bmatrix}  \langle z_k, \phi_i \circledast \psi_j^k \rangle \tilde \phi_i \circledast \tilde \psi_j^1 & \cdots &    \langle z_k, \phi_i \circledast \psi_j^k \rangle \tilde \phi_i\circledast \tilde \psi_j^p
\end{bmatrix}
\end{eqnarray}
or equivalently,
\begin{eqnarray}
Z &=& \frac{1}{d} \begin{bmatrix} \sum_{j=1}^q   \left(\tilde \Phi c_j\right) \circledast \tilde \psi_j^1 & \cdots &  \sum_{j=1}^q  \left( \tilde \Phi c_j\right) \circledast \tilde \psi_j^p
\end{bmatrix} \label{eq:Znonlocal}
\end{eqnarray}
where $c_j$ is the $j$-th column of the framelet coefficient matrix
\begin{eqnarray}\label{eq:coef10}
C &=&  \Phi^\top \left( Z \circledast  \overline\Psi\right) \\
&=& \sum_{k=1}^p \begin{bmatrix}  \langle z_k,  \phi_1 \circledast \psi_1^k \rangle & \cdots &  \langle z_k,  \phi_1 \circledast \psi_q^k  \rangle \\
\vdots & \ddots & \vdots \\
 \langle z_k,  \phi_m \circledast \psi_1^k \rangle & \cdots &  \langle z_k,  \phi_m \circledast \psi_q^k  \rangle\end{bmatrix} \in \Rd^{m\times q} \quad \ . \notag
 \end{eqnarray}
\end{proposition}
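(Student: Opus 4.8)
The plan is to follow the same two-step template used in the proof of Proposition~\ref{prp:1}, but with the \emph{extended} Hankel matrix $\hank_{d|p}(Z)$ playing the role of $\hank_d(f)$. The frame conditions \eqref{eq:phi1} and \eqref{eq:ri1} are stated precisely so that $\tilde\Phi\Phi^\top$ is the identity on the $n$-dimensional non-local direction and $\Psi\tilde\Psi^\top$ is the identity on the $pd$-dimensional local direction, which is exactly the column dimension of $\hank_{d|p}(Z)$. I would therefore first sandwich $\hank_{d|p}(Z)$ between these two resolutions of the identity to obtain
\begin{equation}
\hank_{d|p}(Z) = \tilde\Phi\Phi^\top \hank_{d|p}(Z)\,\Psi\tilde\Psi^\top = \tilde\Phi\, C\, \tilde\Psi^\top,\qquad C := \Phi^\top \hank_{d|p}(Z)\,\Psi .
\end{equation}
Invoking the MIMO identity \eqref{eq:multifilter} then immediately gives $C = \Phi^\top(Z\circledast\overline\Psi)$, the first line of \eqref{eq:coef10}.

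Next I would unpack the entries of $C$. The block partition \eqref{eq:blockPsi} of $\Psi$ together with the definition \eqref{eq:ehank} of the extended Hankel matrix gives $\hank_{d|p}(Z)\psi_j = \sum_{k=1}^p \hank_d(z_k)\psi_j^k$, so that $c_{ij} = \phi_i^\top\hank_{d|p}(Z)\psi_j = \sum_{k=1}^p \phi_i^\top\hank_d(z_k)\psi_j^k$. Applying the inner-product identity \eqref{eq:inner} to each summand converts $\phi_i^\top\hank_d(z_k)\psi_j^k$ into $\langle z_k,\phi_i\circledast\psi_j^k\rangle$, which yields the explicit matrix form of $C$ in \eqref{eq:coef10}. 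Notice that $C$ aggregates the contributions of all $p$ channels and does not itself carry a channel index; the channel-specific information reappears only at reconstruction through the block-dependent dual filters $\tilde\psi_j^k$.

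The reconstruction step is where the channel bookkeeping must be handled with care. I would apply $\hank_{d|p}^\dag$ to both sides; since $\hank_{d|p}(Z)$ lies in the range of the extended Hankel operator, $\hank_{d|p}^\dag\hank_{d|p}(Z)=Z$. The key structural fact I would invoke is that $\hank_{d|p}^\dag$ acts block-wise: writing $M=[M_1\cdots M_p]$ with $M_k\in\Rd^{n\times d}$, one has $\hank_{d|p}^\dag(M)=[\hank_d^\dag(M_1)\cdots\hank_d^\dag(M_p)]$. Combined with the block form of $\tilde\Psi^\top$ in \eqref{eq:blockPsi}, the $k$-th column block of $\tilde\Phi C\tilde\Psi^\top$ equals $\tilde\Phi C\tilde\Psi_k^\top$, and the single-channel reconstruction formula \eqref{eq:invfilter} gives $\hank_d^\dag(\tilde\Phi C\tilde\Psi_k^\top)=\tfrac1d\sum_{j=1}^q(\tilde\Phi c_j)\circledast\tilde\psi_j^k$. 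Collecting the $p$ blocks produces \eqref{eq:Znonlocal}, and substituting $\tilde\Phi c_j=\sum_{i=1}^m c_{ij}\tilde\phi_i$ together with the expression for $c_{ij}$ from the previous step yields the fully expanded form \eqref{eq:frameZ}.

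The main obstacle I anticipate is justifying the block-wise action of $\hank_{d|p}^\dag$ and the attendant per-channel formula \eqref{eq:invfilter}: one must verify that applying the pseudo-inverse to the product $\tilde\Phi C\tilde\Psi^\top$, rather than to a genuine extended Hankel matrix, still returns the convolution sum, which rests on the Hankel calculus of Lemma~\ref{lem:calculus}. In particular the individual blocks $\tilde\Phi C\tilde\Psi_k^\top$ need not be Hankel on their own, so the decomposition is legitimate only because the \emph{full} matrix $\tilde\Phi C\tilde\Psi^\top = \hank_{d|p}(Z)$ is extended-Hankel by construction. Everything else is a direct, if index-heavy, generalization of Proposition~\ref{prp:1}, the one genuinely new ingredient being the summation over the $p$ input channels introduced by the stacking \eqref{eq:ehank}.
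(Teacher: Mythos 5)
Your proposal is correct and follows essentially the same route as the paper's own proof: sandwiching $\hank_{d|p}(Z)$ between the two resolutions of the identity, computing $c_{ij}=\sum_{k=1}^p\langle z_k,\phi_i\circledast\psi_j^k\rangle$ via \eqref{eq:inner}, and then applying $\hank_{d|p}^\dag$ block-wise through \eqref{eq:recon2}, \eqref{eq:recon1} and \eqref{eq:invfilter}. Your extra remark about the legitimacy of the block-wise pseudo-inverse is a reasonable point of care, though the paper's Lemma~\ref{lem:calculus} already defines $\hank_d^\dag$ on arbitrary matrices in $\Rd^{n\times d}$, so no additional justification is needed.
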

\begin{proof}
For a given $Z\in \Rd^{n\times p}$,
using the frame condition \eqref{eq:phi0} and \eqref{eq:ri0},  we have 
\begin{eqnarray*}
 \hank_{d|p}(Z)  
 &=&  \tilde \Phi \Phi^\top  \hank_{d|p}(Z)   \Psi \tilde \Psi^{\top} 
 = \tilde \Phi C \tilde\Psi^{\top}  \  .
\end{eqnarray*}
where $C \in \Rd^{m\times q}$ denotes the framelet coefficient matrix  computed by
$$C = \Phi^\top  \hank_{d|p}(Z)  \Psi = \Phi^\top \left( Z \circledast  \overline\Psi\right) $$
and its $(i,j)$-th element is given by
%
\begin{eqnarray*}
c_{ij} =  \phi_i^{\top}\hank_{d|p}(Z)  \psi_j  =  \sum_{k=1}^p \langle z_k, \phi_i \circledast \psi_j^k \rangle
\end{eqnarray*}
Furthermore, using \eqref{eq:recon1}, \eqref{eq:invfilter} and \eqref{eq:recon2}, we have
\begin{eqnarray*}
Z = \hank_{d|p}^\dag\left( \hank_{d|p}(Z) \right) &=&
 \hank_{d|p}^\dag\left( \tilde \Phi C \tilde\Psi^{\top} \right) \\
 &=&  \begin{bmatrix} \hank_d^\dag\left( \tilde \Phi C \tilde\Psi_1^{\top} \right)  & \cdots &  \hank_d^\dag\left( \tilde \Phi C \tilde\Psi_p^{\top} \right)  \end{bmatrix} \\
&=&  \frac{1}{d} \begin{bmatrix} \sum_{j=1}^q   \left(\tilde \Phi c_j\right) \circledast \tilde \psi_j^1 & \cdots &  \sum_{j=1}^q  \left( \tilde \Phi c_j\right) \circledast \tilde \psi_j^p
\end{bmatrix} \\
&=&  \frac{1}{d} \sum_{i=1}^m\sum_{j=1}^q    \sum_{k=1}^p \begin{bmatrix}  \langle z_k, \phi_i \circledast \psi_j^k \rangle \tilde \phi_i \circledast \tilde \psi_j^1 & \cdots &    \langle z_k, \phi_i \circledast \psi_j^k \rangle \tilde \phi_i\circledast \tilde \psi_j^p
\end{bmatrix}
\end{eqnarray*}
This concludes the proof.
\end{proof}

\begin{remark}
Compared to Proposition~\ref{prp:yin},
Propositions~\ref{prp:1} and \ref{prp:2} are more general, since they consider the redundant and non-orthonormal
 non-local and local bases by allowing relaxed conditions, i.e.
 $m\geq n$ or $q\geq d$.  The specific reason for $q\geq d$ is to investigate existing CNNs that have  large number of filter channels at lower layers.
The redundant global basis with $m\geq n$ is also believed to be useful  for future research, so Proposition~\ref{prp:1}
is derived by considering further extension. However, since 
most of the existing deep networks use the condition  $m=n$,  we will mainly focus on this special case for the rest of the paper.
%
\end{remark}

\begin{remark}
For the given SVD  in \eqref{eq:svd0}, the frame conditions \eqref{eq:phi0} and \eqref{eq:ri0}  can be further relaxed to the following conditions:
\begin{eqnarray*}
 \tilde \Phi \Phi^\top =  P_{R(U)} &,\quad & 
  \Psi \tilde \Psi^{\top} =  P_{R(V)}
  \end{eqnarray*}
 due to the following matrix identity:
 $$\hank_d(f) =   P_{R(U)} \hank_d(f)   P_{R(V)} =   \tilde \Phi \left( \Phi^\top \hank_d(f)   \Psi\right) \tilde \Psi^{\top}   .$$
 In these case, the number of bases for non-local and local basis matrix can be smaller than that of Proposition~\ref{prp:1} and Proposition~\ref{prp:2},
 i.e. $m=r<n$ and $q=r< d$. Therefore,
 smaller number of bases still suffices for PR.
\end{remark}


Finally, using Propositions~\ref{prp:1} and \ref{prp:2} we will show that the convolution framelet expansion can be realized by two matched convolution layers, which has
striking similarity to   neural networks with  encoder-decoder structure \cite{noh2015learning}.
Our main contribution is summarized in the following Theorem.
\begin{theorem}[\textbf{Deep Convolutional Framelets Expansion}]
Under the assumptions of Proposition~\ref{prp:2}, we have the following  decomposition of input $Z\in \Rd^{n\times p}$:
\begin{eqnarray}
Z &=& \left(\tilde \Phi C\right) \circledast \nu(\tilde \Psi)  \label{eq:dec}  \\
C &=& \Phi^{\top}\left(Z\circledast \overline\Psi\right) \ , \label{eq:enc}
\end{eqnarray} 
where the decoder-layer convolutional filter $\nu(\tilde\Psi)$ is defined by
\begin{eqnarray}\label{eq:tau}
\nu(\tilde\Psi) &:=&  \frac{1}{d} \begin{bmatrix}  \tilde \psi_1^1 & \cdots &  \tilde \psi_1^p  \\ \vdots & \ddots & \vdots \\
\tilde \psi_q^1 & \cdots &  \tilde \psi_q^p 
\end{bmatrix}  \in \Rd^{dq \times p}
\end{eqnarray}
Similarly, under the assumptions of Proposition~\ref{prp:1}, we have the following decomposition of $f\in \Rd^n$:
\begin{eqnarray}
f &=& \left(\tilde \Phi C\right) \circledast \nu(\tilde \Psi) \label{eq:decf}\\
C &=& \Phi^{\top}\left(f\circledast \overline\Psi\right). \label{eq:encC}
\end{eqnarray}
where
\begin{eqnarray}\label{eq:tau1}
\nu(\tilde\Psi) &:=&  \frac{1}{d} \begin{bmatrix}  \tilde \psi_1 \\ \vdots \\
\tilde \psi_q
\end{bmatrix}  \in \Rd^{dq}
\end{eqnarray}
\end{theorem}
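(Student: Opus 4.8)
The plan is to recognize the claimed identities \eqref{eq:dec}--\eqref{eq:enc} as nothing more than a rewriting of the frame expansion already established in Proposition~\ref{prp:2} in the language of two matched convolution layers. Essentially all of the analytic work has already been done in Propositions~\ref{prp:1} and \ref{prp:2}, so what remains is a careful bookkeeping of channel and filter indices rather than any new estimate.

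First I would dispose of the encoder identity \eqref{eq:enc}. This is immediate: the framelet coefficient matrix $C = \Phi^\top\left(Z\circledast\overline\Psi\right)$ is exactly the quantity defined in \eqref{eq:coef10} of Proposition~\ref{prp:2}, so \eqref{eq:enc} requires no argument beyond invoking that definition. The genuine content of the theorem therefore lies entirely in the decoder identity \eqref{eq:dec}.

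For the decoder I would start from the reconstruction formula \eqref{eq:Znonlocal}, whose $k$-th column reads $z_k = \frac{1}{d}\sum_{j=1}^q (\tilde\Phi c_j)\circledast\tilde\psi_j^k$. The idea is to read this off directly as a single MIMO convolution. Setting $W := \tilde\Phi C \in \Rd^{n\times q}$, so that its $j$-th column is $w_j = \tilde\Phi c_j$, the displayed expression says precisely that the $k$-th output channel is $z_k = \sum_{j=1}^q w_j\circledast\left(\tfrac{1}{d}\tilde\psi_j^k\right)$; that is, the $q$-channel signal $W$ is mapped to the $p$-channel signal $Z$ by a convolution whose filter carrying input channel $j$ to output channel $k$ is $\tfrac{1}{d}\tilde\psi_j^k$. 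Comparing with the MIMO matrix representation \eqref{eq:MIMO}--\eqref{eq:multifilter}, this is exactly $Z = W\circledast\nu(\tilde\Psi)$ provided the decoder filter is the block matrix whose $(j,k)$ block equals $\tfrac{1}{d}\tilde\psi_j^k$, which is precisely the reshaping $\nu(\tilde\Psi)$ defined in \eqref{eq:tau}. Conceptually, $\nu$ is the \emph{block transpose} sending the $pd\times q$ array $\tilde\Psi$ (indexed by input channel and output channel in the encoder sense) to the $dq\times p$ array in which the roles of $p$ and $q$ are interchanged.

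The one point demanding care---and the main obstacle---is the consistent handling of the index reversal (the flip $\overline{\,\cdot\,}$) built into the Hankel representation of convolution together with the channel reindexing when passing from the $p\times q$ filter layout of $\Psi$ to the $q\times p$ layout of $\nu(\tilde\Psi)$. I would need to verify that the scalar $\tfrac{1}{d}$, the stacking order of the length-$d$ blocks, and the flip convention of \eqref{eq:MIMO} all line up, so that the reshaped filter is literally $\nu(\tilde\Psi)$ and not a transposed or flipped variant of it. Once this is checked, the vector statement \eqref{eq:decf}--\eqref{eq:encC} follows as the special case $p=1$: then $Z=f$, the sum over $k$ collapses, \eqref{eq:Znonlocal} reduces to \eqref{eq:frameeq}, the block matrix $\nu(\tilde\Psi)$ degenerates to the single stacked column \eqref{eq:tau1}, and the identical argument applied to Proposition~\ref{prp:1} closes the proof.
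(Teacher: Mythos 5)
Your proposal is correct and follows essentially the same route as the paper: the encoder identity is read off from the definition of the coefficient matrix in Proposition~\ref{prp:2}, the decoder identity is obtained by recognizing \eqref{eq:Znonlocal} as a MIMO convolution with the reshaped filter $\nu(\tilde\Psi)$ via \eqref{eq:MIMO}--\eqref{eq:MIMO_form}, and the vector case follows from \eqref{eq:frameeq} as the $p=1$ (MISO) specialization. The flip/indexing bookkeeping you flag is real but routine, and the paper's own proof handles it only by citing the same MIMO definitions.
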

\begin{proof}
First, note that \eqref{eq:Znonlocal} corresponds to a decoder layer convolution:
\begin{eqnarray*}
Z 
&=&  \frac{1}{d} \begin{bmatrix} \sum_{j=1}^q   \left(\tilde \Phi c_j\right) \circledast \tilde \psi_j^1 & \cdots &  \sum_{j=1}^q  \left( \tilde \Phi c_j\right) \circledast \tilde \psi_j^p \end{bmatrix} \notag\\
&=& \left(\tilde \Phi C\right) \circledast \nu(\tilde \Psi)    \end{eqnarray*} 
where $\nu(\tilde \Psi)$ is defined by \eqref{eq:tau} and
the last equality comes from the definition of MIMO convolution in  \eqref{eq:MIMO} and \eqref{eq:MIMO_form}.
On the other hand,
from \eqref{eq:coef10}, we have
\begin{eqnarray*}
C 
&= &\Phi^{\top}\left(Z\circledast \overline\Psi\right). 
\end{eqnarray*}
Similarly,  using the defintion of the MISO convolution \eqref{eq:MISO},  Eq.~\eqref{eq:frameeq} can be represented by
$$f = \left(\tilde \Phi C\right) \circledast \nu(\tilde \Psi) $$
where $\nu(\tilde \Psi)$ is defined by \eqref{eq:tau1}.
Finally, \eqref{eq:encC} comes from \eqref{eq:coef}.
This concludes the proof.
\end{proof}

\begin{remark}[{\bf Non-local basis as a generalized pooling/unpooling}]
Note that there exists a major difference in the encoder and decoder layer convolutions.
Aside from the difference in the specific convolutional filters,
 the non-local basis matrix $\Phi^\top$ should be applied later to the filtered signal in the case of
encoder \eqref{eq:enc}, whereas the non-local basis matrix  $\tilde\Phi$ should be multiplied first
before the local filtering is applied in the decoder layer.
This is in fact  similar to the pooling and unpooling operations because the pooling is performed after filtering   while
unpooling is applied before filtering. Hence, we can see that the non-local basis is a generalization of  the pooling/unpooling operations.
\end{remark}

 \begin{figure*}[!bt]
\centering
\includegraphics[width=12cm]{./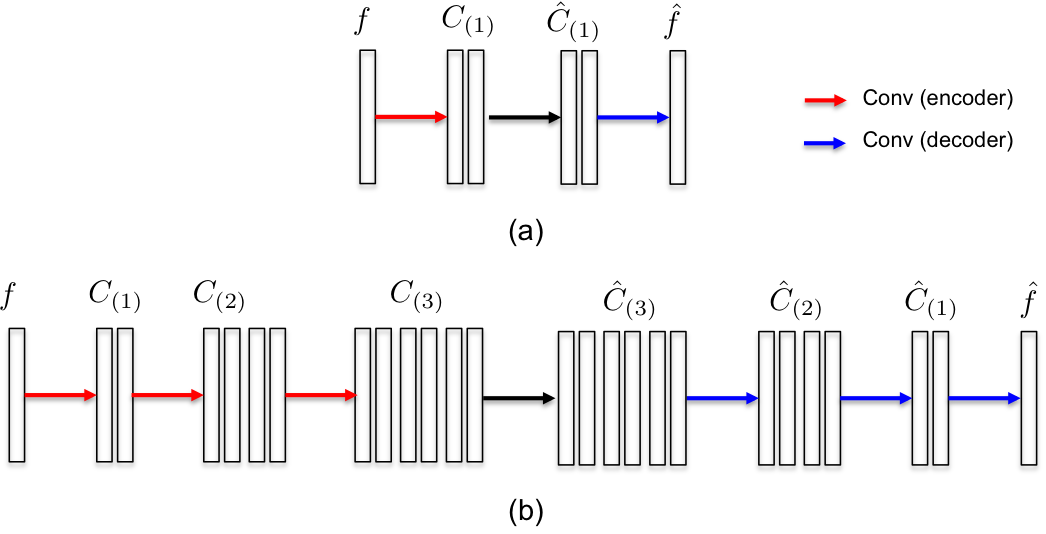}
\caption{(a) One layer encoder-decoder, and (b) multi-layer encoder-decoder architectures.}
\label{fig:ed}
\end{figure*}

Existing CNNs often incorporate the bias estimation for each layer of convolution. 
Accordingly, we are interested in extending our deep convolutional framelet expansion under bias estimation.
Specifically, with the bias estimation, the encoder and decoder convolutions  should be modified as:
\begin{eqnarray}
C &=& \Phi^\top(Z \circledast \overline\Psi  + 1_{n}b_{enc}^\top ) \label{eq:biasEnc}\\
\hat Z &=& \left(\tilde \Phi C\right) \circledast \nu(\tilde \Psi) +1_{n}b_{dec}^\top  \label{eq:biasDec}
\end{eqnarray}
where $1_n \in \Rd^n$ denotes the vector with 1, and $b_{enc}\in \Rd^q$ and $b_{dec}\in \Rd^p$ denote the  encoder and decoder layer biases, respectively.
Then,  the following theorem shows that there exists a unique bias vector $b_{dec} \in \Rd^p$  for
a given encoder bias vector $b_{enc}\in\Rd^q$ that satisfies the PR:

\begin{theorem}[\bf Matched Bias]\label{thm:bias}
Suppose that $\Phi, \tilde\Phi \in \Rd^{n\times m}$ and $\Psi, \tilde \Psi \in \Rd^{pd\times q}$ satisfies
the assumptions of Proposition~\ref{prp:2}.
Then, for a given bias
$b_{enc}\in \Rd^q$ at the encoder layer,  \eqref{eq:biasEnc} and \eqref{eq:biasDec} satisfy the PR if   the decoder bias is given by
\begin{eqnarray}\label{eq:optbias0}
1_{n}b_{dec}^\top  = -  \hank_{d|p}^\dag \left(   1_{n}b_{enc}^\top\tilde \Psi^\top\right),
\end{eqnarray}
or equivalently 
\begin{eqnarray}\label{eq:optbias}
b_{dec}[i] = - 1_d^\top \tilde \Psi_i b_{enc}, \quad i=1,\cdots, p.
\end{eqnarray}
\end{theorem}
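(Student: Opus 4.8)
The plan is to reduce the biased identity to the bias-free perfect reconstruction already available from Proposition~\ref{prp:2} and the Deep Convolutional Framelets Expansion, and then to isolate the purely additive contribution of the two biases. First I would substitute \eqref{eq:biasEnc} into \eqref{eq:biasDec}. Writing $C_0 := \Phi^\top(Z\circledast\overline\Psi)$ for the bias-free encoder coefficients, linearity of the encoder gives $C = C_0 + \Phi^\top 1_n b_{enc}^\top$. Since the decoder convolution coincides with $\hank_{d|p}^\dag(\tilde\Phi C\tilde\Psi^\top)$ (this is the equivalence between \eqref{eq:dec} and the operator identity used in the proof of Proposition~\ref{prp:2}, and it holds for an arbitrary coefficient matrix $C$, not only for a valid encoding), and since $\hank_{d|p}^\dag$ is linear, I can split
\begin{eqnarray*}
(\tilde\Phi C)\circledast\nu(\tilde\Psi) = \hank_{d|p}^\dag\left(\tilde\Phi C_0\tilde\Psi^\top\right) + \hank_{d|p}^\dag\left(\tilde\Phi\Phi^\top 1_n b_{enc}^\top\tilde\Psi^\top\right).
\end{eqnarray*}

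Next I would invoke two structural facts. The first term equals $Z$ by the bias-free PR of Proposition~\ref{prp:2}. In the second term the frame condition \eqref{eq:phi1}, $\tilde\Phi\Phi^\top = I_{n\times n}$, collapses the non-local factor and leaves $\hank_{d|p}^\dag(1_n b_{enc}^\top\tilde\Psi^\top)$. Adding the decoder bias yields
\begin{eqnarray*}
\hat Z = Z + \hank_{d|p}^\dag\left(1_n b_{enc}^\top\tilde\Psi^\top\right) + 1_n b_{dec}^\top,
\end{eqnarray*}
so that $\hat Z = Z$ holds (and, since the middle term is independent of $Z$, this is both necessary and sufficient) exactly when $1_n b_{dec}^\top = -\hank_{d|p}^\dag(1_n b_{enc}^\top\tilde\Psi^\top)$, which is \eqref{eq:optbias0}.

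To pass from \eqref{eq:optbias0} to the componentwise form \eqref{eq:optbias}, I would use the block partition \eqref{eq:blockPsi}. Splitting $1_n b_{enc}^\top\tilde\Psi^\top$ into its $p$ blocks $1_n(\tilde\Psi_k b_{enc})^\top\in\Rd^{n\times d}$ and applying $\hank_{d|p}^\dag$ blockwise reduces the claim to evaluating $\hank_d^\dag(1_n w^\top)$ for a fixed length-$d$ vector $w=\tilde\Psi_k b_{enc}$. The key observation is that a matrix with identical rows is mapped by the generalized inverse of the wrap-around Hankel operator to a \emph{constant} signal: each recovered component is the average of the $d$ matrix entries lying on a common anti-diagonal, and here each such entry is a component of $w$, so the value is $1_d^\top w$ (carrying the $1/d$ normalization built into $\hank_d^\dag$). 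Reading off the $i$-th channel then gives $b_{dec}[i] = -1_d^\top\tilde\Psi_i b_{enc}$, which is \eqref{eq:optbias}.

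The substitution and the linearity manipulations are routine; the step I expect to require the most care is the last one, namely the explicit evaluation of $\hank_d^\dag$ on a constant-row matrix and the careful bookkeeping of the $1/d$ averaging factor shared between $\hank_d^\dag$ and $\nu(\tilde\Psi)$. In particular one must check that the correction term $\hank_{d|p}^\dag(1_n b_{enc}^\top\tilde\Psi^\top)$ really is constant along each of the $p$ channels, since otherwise no constant decoder bias $1_n b_{dec}^\top$ could cancel it; this is precisely what the anti-diagonal averaging property guarantees, and it is the crux that makes a \emph{matched} bias exist.
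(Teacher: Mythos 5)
Your proof is correct and follows essentially the same route as the paper's: you isolate the additive bias contribution via linearity and the frame condition $\tilde\Phi\Phi^\top = I_{n\times n}$ to reach $\hat Z = Z + \hank_{d|p}^\dag\left(1_n b_{enc}^\top\tilde\Psi^\top\right) + 1_n b_{dec}^\top$, and then evaluate the correction blockwise with $a_i=\tilde\Psi_i b_{enc}$. Your anti-diagonal--averaging evaluation of $\hank_d^\dag$ on the constant-row matrix $1_n a_i^\top$ is precisely the paper's step $1_n \circledast a_i = \hank_d(1_n)a_i = 1_n(1_d^\top a_i)$, and you inherit the same mild $1/d$ bookkeeping between \eqref{eq:optbias0} and \eqref{eq:optbias} that the paper itself elides.
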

\begin{proof}
See Appendix~\ref{ap8}.
\end{proof}

The simple convolutional framelet expansion using  \eqref{eq:encC}\eqref{eq:decf} and \eqref{eq:enc}\eqref{eq:dec} (or \eqref{eq:biasEnc} and \eqref{eq:biasDec} when including bias)
is so powerful
that a CNN with the encoder-decoder architecture emerges from them by inserting the encoder-decoder pair
 \eqref{eq:encC} and \eqref{eq:decf} between the encoder-decoder pair \eqref{eq:enc} and \eqref{eq:dec} 
 as illustrated by the red and blue lines, respectively, in  Fig.~\ref{fig:ed}(a)(b).
 In general, the $L$-layer implementation of the convolutional framelets can be  recursively defined.
 For example, 
the first layer encoder-decoder architecture without considering bias is given by
 \begin{eqnarray*}\label{eq:PR2}
f 
&=&   \hank_{d_{(1)}|p_{(1)}}^{\dag} \left(\tilde\Phi^{(1)}  \hat C^{(1)}  \tilde\Psi^{(1)\top} \right)  \notag\\
&=& \left( \tilde\Phi^{(1)}\hat C^{(1)} \right)\circledast \nu(\tilde \Psi^{(1)}) 
\end{eqnarray*}
where 
the decoder part of framelets coefficient  at the $i$-th layer,  $\hat C^{(i)}  \in \Rd^{n\times q_{(i)}}$, $q_{(i)}\geq d_{(i)}p_{(i)}$, is given by
\begin{eqnarray}
\hat C^{(i)}  &=&\begin{cases} \hank_{d_{(i+1)}|p_{(i+1)}}^\dag\left(\tilde\Phi^{(i+1)} \hat C^{(i+1)} \tilde \Psi^{(i+1)}\right) =
\left(\tilde\Phi^{(i+1)} \hat C^{(i+1)} \right) \circledast \nu (\tilde \Psi^{(i+1)}),  & 1\leq i <L \\
C^{(L)}, & i=L \end{cases}
\end{eqnarray}
whereas the encoder part framelet coefficients  $C^{(i)}  \in \Rd^{n\times q_{(i)}}$  are given by
\begin{eqnarray}\label{eq:Cenc}
C^{(i)}  &=&\begin{cases}  \Phi^{(i)\top}  \hank_{d_{(i)}|p_{(i)}}(C^{(i-1)} )  \Psi^{(i)} =  \Phi^{(i)\top} \left(C^{(i-1)} \circledast \overline\Psi^{(i)}\right),  & 1\leq i \leq L \\
f, & i=0 \end{cases}
\end{eqnarray}
Here,  $d_{(i)}$ and $p_{(i)}$ denotes the filter length and the number of input channels at the $i$-th layer, respectively,
and $q_{(i)}$ refers to the number of output channels. The specific number of channels will be analyzed in the following section.

\subsection{Properties of Deep Convolutional Framelets}

In this section, several important properties of deep convolutional framelets are explained in detail.
First, the perfect reconstruction (PR) conditions in \eqref{eq:encC}\eqref{eq:decf} and \eqref{eq:enc}\eqref{eq:dec}
 can be also analyzed in the Fourier domain as shown in the following Proposition:
\begin{proposition}[\textbf{Fourier Analysis of Filter Channels}]\label{prp:PRFourier}
Suppose that  $\Phi$ and $\tilde\Phi$  satisfy the frame condition \eqref{eq:phi0}.
Then, the PR condition given by \eqref{eq:enc} and \eqref{eq:dec} with
$\Psi^\top = \begin{bmatrix}\Psi_1^\top & \cdots & \Psi_p^\top\end{bmatrix}$ 
and
$\tilde\Psi^\top = \begin{bmatrix}\tilde\Psi_1^\top & \cdots & \tilde\Psi_p^\top\end{bmatrix}$ 
can be represented by
\begin{eqnarray}\label{eq:Id}
I_{p\times p} =  \frac{1}{d}  \begin{bmatrix} \widehat{\psi_1^1}^* & \cdots &  \widehat{\psi_q^1}^*  \\ \vdots & \ddots & \vdots \\
\widehat{ \psi_1^p}^* & \cdots &  \widehat{ \psi_q^p}^*
\end{bmatrix}  \begin{bmatrix}  \widehat{\tilde \psi_1^1} & \cdots &  \widehat{\tilde \psi_1^p}  \\ \vdots & \ddots & \vdots \\
\widehat{\tilde \psi_q^1} & \cdots &  \widehat{\tilde \psi_q^p }
\end{bmatrix}  
\end{eqnarray}
where $\widehat{\psi}$ denotes the Fourier transform of $\psi$, and the superscript $^*$ is the complex conjugate.
In particular, the PR condition given by \eqref{eq:encC} and \eqref{eq:decf} can be represented by
\begin{eqnarray}\label{eq:prdual}
\frac{1}{d} \sum_{i=1}^q \widehat{\psi_i}^*\widehat {\tilde \psi_i} = 1 . 
\end{eqnarray}
Furthermore, if the local basis $\Psi$ is orthormal, then we have
\begin{eqnarray}\label{eq:prorth}
\frac{1}{d} \sum_{i=1}^d |  \widehat{\psi_i}|^2  = 1 . 
\end{eqnarray}
\end{proposition}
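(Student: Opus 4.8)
The plan is to reduce the perfect reconstruction (PR) identity to a single convolutional identity among the local filters, and then pass to the Fourier domain via the convolution theorem. First I would start from the encoder--decoder pair \eqref{eq:enc}--\eqref{eq:dec} and eliminate the non-local basis: since $\Phi,\tilde\Phi$ obey the frame condition \eqref{eq:phi0}, the product $\tilde\Phi C = \tilde\Phi\Phi^\top(Z\circledast\overline\Psi) = Z\circledast\overline\Psi$ collapses, so the global bases contribute nothing to the local-filter bookkeeping. The $j$-th column of $\tilde\Phi C$ is then $\sum_{k}z_k\circledast\overline{\psi_j^k}$ by the definition of MIMO convolution \eqref{eq:MIMO}. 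Substituting this into the decoder expression \eqref{eq:Znonlocal} gives, for the $l$-th output channel,
\begin{equation*}
z_l = \frac{1}{d}\sum_{j=1}^q\Big(\sum_{k=1}^p z_k\circledast\overline{\psi_j^k}\Big)\circledast\tilde\psi_j^l = \sum_{k=1}^p z_k\circledast\Big(\frac{1}{d}\sum_{j=1}^q\overline{\psi_j^k}\circledast\tilde\psi_j^l\Big),
\end{equation*}
where I have used the commutativity and associativity of circular convolution to regroup the filters into a single effective filter $g_{kl} := \frac{1}{d}\sum_{j}\overline{\psi_j^k}\circledast\tilde\psi_j^l$ acting on channel $k$.

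Next I would argue that PR forces each $g_{kl}$ to be a (scaled) Kronecker delta. Feeding one-hot channel inputs into the identity above — only $z_k$ nonzero and otherwise arbitrary — shows that $z_l = z_k\circledast g_{kl}$ must hold for every signal $z_k$; hence $g_{kl}$ equals the convolution identity when $k=l$ and the zero filter otherwise, i.e. $g_{kl} = \delta_{kl}\,\delta_0$ with $\delta_0$ the discrete delta. This is the algebraic heart of the PR condition expressed purely through the local filters, and it is equivalent to the local frame condition $\Psi\tilde\Psi^\top = I_{pd\times pd}$ read through the circular-convolution structure.

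Then I would take the $n$-point DFT of $g_{kl} = \delta_{kl}\delta_0$ (zero-padding the length-$d$ filters to length $n$), using the three elementary rules $\widehat{a\circledast b} = \hat a\,\hat b$, $\widehat{\overline a} = \hat a^{*}$ for real $a$, and $\widehat{\delta_0} = 1$. This converts the convolutional identity into the pointwise-in-frequency relation $\frac{1}{d}\sum_{j=1}^q\widehat{\psi_j^k}^{*}\,\widehat{\tilde\psi_j^l} = \delta_{kl}$ at each DFT bin, which is exactly the $(k,l)$ entry of the matrix product in \eqref{eq:Id}. The single-channel statement \eqref{eq:prdual} then follows immediately by setting $p=1$ and dropping the channel superscripts, and \eqref{eq:prorth} follows by further specializing to an orthonormal local basis, where $\tilde\Psi=\Psi$ and $q=d$, so that $\widehat{\psi_j}^{*}\widehat{\tilde\psi_j} = |\widehat{\psi_j}|^2$.

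The main obstacle I anticipate is the careful bookkeeping of flips and conjugates: one must verify that the encoder carries the flip $\overline{\psi_j^k}$ (so its transform appears conjugated) while the decoder filter $\tilde\psi_j^l$ does not, which is exactly what produces the asymmetric conjugation pattern in \eqref{eq:Id}, namely a conjugate on $\widehat{\psi_j^k}$ but none on $\widehat{\tilde\psi_j^l}$. A secondary technical point is the mismatch of transform lengths — the filters live in $\Rd^d$ but the circular convolution is $n$-point — which I would resolve by zero-padding to length $n$ and requiring the identities to hold at all $n$ DFT frequencies. With those conventions fixed, the substitution-plus-convolution-theorem argument is routine.
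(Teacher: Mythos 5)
Your proposal is correct and follows essentially the same route as the paper's proof in Appendix C: collapse the non-local basis via the frame condition, rewrite the encoder--decoder composition as $z_l=\sum_k z_k\circledast\bigl(\tfrac{1}{d}\sum_j\overline{\psi_j^k}\circledast\tilde\psi_j^l\bigr)$ (the paper obtains this from \eqref{eq:lowrank2}), and then apply the convolution theorem together with $\widehat{\overline a}=\hat a^{*}$. Your one-hot-channel argument that PR for all inputs forces each effective filter to be $\delta_{kl}\delta_0$ is a slightly more explicit justification of the step the paper writes as ``$Z=ZM$ is equivalent to $M=I$,'' but the substance is identical.
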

\begin{proof}
See Appendix~\ref{ap2}.
\end{proof}

\begin{remark}
Note that   \eqref{eq:prorth} and  \eqref{eq:prdual} are equivalent to the perfect reconstruction conditions for the orthogonal and biorthogonal 
wavelet decompositions, respectively \cite{mallat1999wavelet,daubechies1992ten}. However, without the frame condition \eqref{eq:phi0} for the non-local bases, such simplification is not true. This is another reason why we are interested in imposing the frame condition \eqref{eq:phi0} for the non-local
bases in order
to use the connection to the classical wavelet theory \cite{mallat1999wavelet}. 
\end{remark}

The following proposition shows that a sufficient condition for fulfilling PR is that the number of input channels should increase multiplicatively along the layers:
\begin{proposition}[\textbf{Number of Filter Channels}]\label{prp:exp}
A sufficient condition to achieve PR is that  the number of output channel $q_{(l)}, l=1,\cdots, L$  
satisfies
\begin{eqnarray}\label{eq:prod}
q_{(l)} \geq  q_{(l-1)}d_{(l)},   
\quad  l=1,\cdots, L  , 
\end{eqnarray} 
where $d_{(i)}$ is the filter length at the $i$-th layer and $q_{(0)}=1$.
\end{proposition}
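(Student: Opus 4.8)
The plan is to reduce the $L$-layer perfect-reconstruction requirement to a collection of single-layer conditions and then to show that the binding constraint at each layer is a dimension count on the local basis. By Proposition~\ref{prp:2}, the encoder/decoder pair at layer $l$, with $p_{(l)}$ input channels, $q_{(l)}$ output channels and filter length $d_{(l)}$, achieves PR whenever the two frame conditions hold: the non-local condition $\tilde\Phi^{(l)}\Phi^{(l)\top}=I_{n\times n}$ and the local condition $\Psi^{(l)}\tilde\Psi^{(l)\top}=I_{p_{(l)}d_{(l)}\times p_{(l)}d_{(l)}}$, where $\Psi^{(l)},\tilde\Psi^{(l)}\in\Rd^{p_{(l)}d_{(l)}\times q_{(l)}}$. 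Since the non-local bases live in $\Rd^{n\times m}$ and can always be chosen (e.g. orthonormal with $m=n$, so that $\Phi^{(l)}=\tilde\Phi^{(l)}$ and $\Phi^{(l)}\Phi^{(l)\top}=I$) independently of the channel counts, the entire channel constraint must come from the local condition.

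First I would isolate the local condition as a right-invertibility statement. The identity $\Psi^{(l)}\tilde\Psi^{(l)\top}=I_{p_{(l)}d_{(l)}\times p_{(l)}d_{(l)}}$ says that the $p_{(l)}d_{(l)}\times q_{(l)}$ matrix $\Psi^{(l)}$ possesses a right inverse $\tilde\Psi^{(l)\top}$, which is possible if and only if $\Psi^{(l)}$ has full row rank $p_{(l)}d_{(l)}$; a matrix with only $q_{(l)}$ columns can attain row rank $p_{(l)}d_{(l)}$ precisely when $q_{(l)}\geq p_{(l)}d_{(l)}$. Conversely, whenever $q_{(l)}\geq p_{(l)}d_{(l)}$ one may choose $\Psi^{(l)}$ of full row rank and set $\tilde\Psi^{(l)\top}=\Psi^{(l)\dag}=\Psi^{(l)\top}\left(\Psi^{(l)}\Psi^{(l)\top}\right)^{-1}$, which gives $\Psi^{(l)}\tilde\Psi^{(l)\top}=I$. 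Hence the local frame condition at layer $l$ is satisfiable exactly when $q_{(l)}\geq p_{(l)}d_{(l)}$, and this choice makes the layer satisfy PR.

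Next I would propagate this through the layer recursion. From the definition of $C^{(i)}$ in \eqref{eq:Cenc}, the encoder at layer $l$ operates on $C^{(l-1)}\in\Rd^{n\times q_{(l-1)}}$ through $\hank_{d_{(l)}|p_{(l)}}(C^{(l-1)})$, so the number of input channels equals the number of output channels of the previous layer, $p_{(l)}=q_{(l-1)}$, with $q_{(0)}=1$ because the network input $f\in\Rd^{n}$ is a single-channel signal. Substituting $p_{(l)}=q_{(l-1)}$ into the per-layer condition $q_{(l)}\geq p_{(l)}d_{(l)}$ yields exactly $q_{(l)}\geq q_{(l-1)}d_{(l)}$ for $l=1,\dots,L$. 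When this holds at every layer, each encoder-decoder pair can be endowed with bases satisfying both frame conditions, so each layer is individually a perfect reconstruction, and the composition of the nested encoder-decoder pairs reproduces $f$ exactly.

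The step I expect to require the most care is the correct bookkeeping of dimensions in the local condition: recognizing that the \emph{row} dimension of the layer-$l$ local basis is $p_{(l)}d_{(l)}$ rather than merely $d_{(l)}$, and that this row dimension accumulates the previous channel count $q_{(l-1)}$ through the stacking in the extended Hankel operator $\hank_{d|p}$. Once the local frame condition is correctly read as ``the $p_{(l)}d_{(l)}\times q_{(l)}$ matrix $\Psi^{(l)}$ must admit a right inverse,'' the multiplicative growth $q_{(l)}\geq q_{(l-1)}d_{(l)}$ is immediate from elementary rank counting.
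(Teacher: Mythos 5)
Your proposal is correct and follows essentially the same route as the paper's proof: reduce PR to the per-layer local frame condition $\Psi^{(l)}\tilde\Psi^{(l)\top}=I_{p_{(l)}d_{(l)}\times p_{(l)}d_{(l)}}$, observe that this forces the $p_{(l)}d_{(l)}\times q_{(l)}$ matrix $\Psi^{(l)}$ to have at least $p_{(l)}d_{(l)}$ columns, and chain the layers via $p_{(l)}=q_{(l-1)}$. Your added detail on right-invertibility and the explicit pseudoinverse construction of $\tilde\Psi^{(l)}$ is a welcome elaboration of a step the paper merely asserts, but it does not change the argument.
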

\begin{proof} 
See Appendix~\ref{ap3}.
\end{proof}

An intuitive explanation of Proposition~\ref{prp:exp} is that
the cascaded application of two  multi-channel filter banks is equivalent to applying a combined filter bank generated by each combination
of two filters in each stage,  so the output dimension of the filter bank increases multiplicatively.
As a special case of Proposition~\ref{prp:exp},  we
can derive the following sufficient condition:
$$q_{(l)}=\prod_{i=1}^l d_{(i)}, \quad l=1,\cdots, L \quad ,$$
which is obtained by choosing the minimum number of output channels at each layer. 
This implies that 
 the number of channels
increase exponentially with respect to layers  as shown in Fig.~\ref{fig:channel},  which is difficult to meet in practice due
to the memory requirement.
Then, a natural
question is why we still prefer a deep network to a shallow one.  Proposition~\ref{prp:depth} provides an answer to this question.

  \begin{figure}[!bt] 
\center{\includegraphics[width=6cm]{./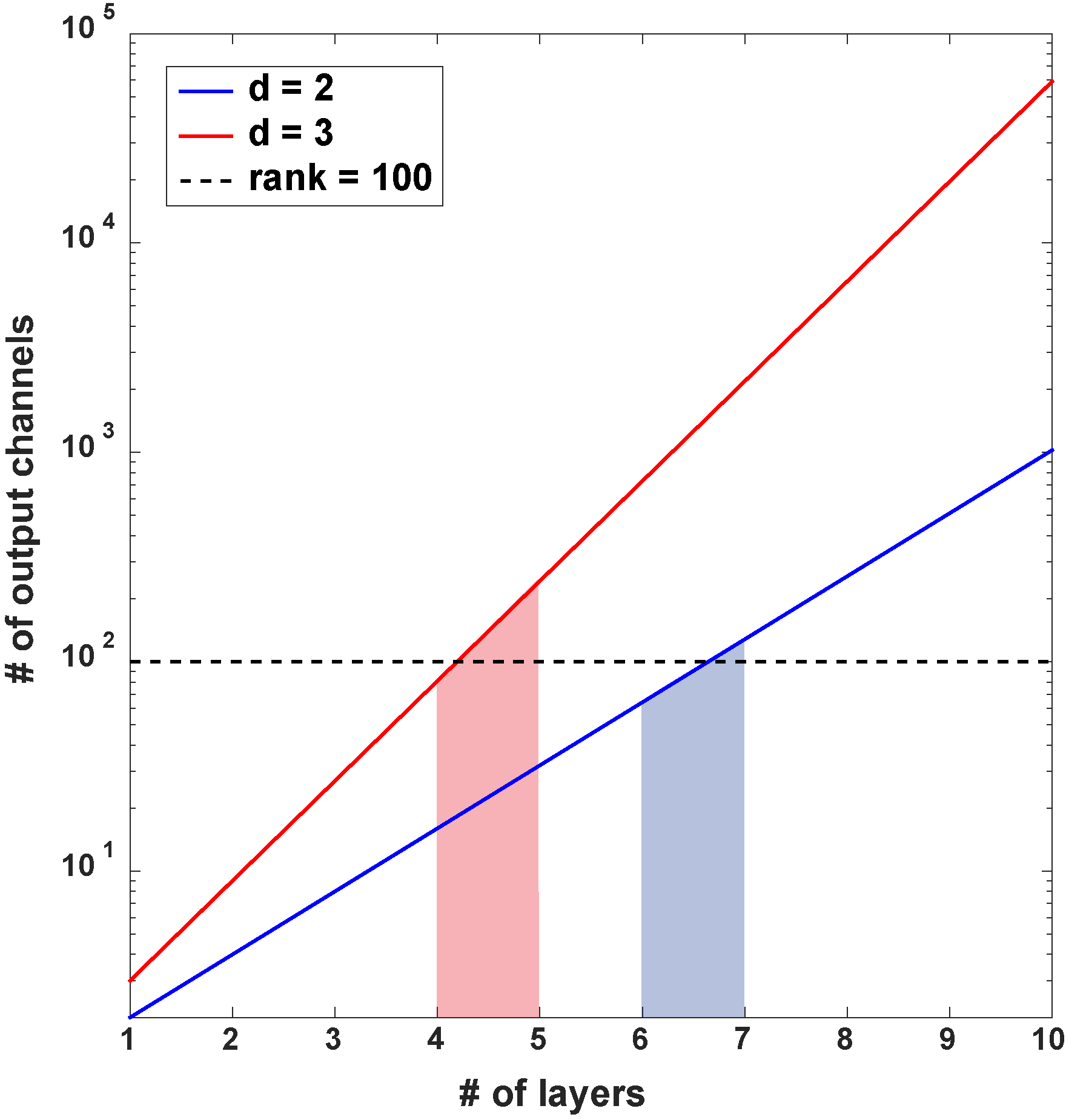}}
\caption{The exponential increase in the number of output channels.  Note that the $y$-axis is in the log-scale.
}
\label{fig:channel}
\end{figure}

\begin{proposition}[\textbf{Rank Bound of Hankel Matrix}]\label{prp:depth}
Suppose that $\Phi^{(l)} = I_{n\times n}, \forall l\geq 1$. Then,   the rank of the extended Hankel matrix $\hank_{d_{(l)}|p_{(l)}}(C^{(l-1)})$ in \eqref{eq:Cenc}
is upper-bounded by
\begin{eqnarray}\label{eq:rankdepth}
\rank \hank_{d_{(l)}|p_{(l)}}(C^{(l-1)}) & \leq &
 \min \{ \rank  \hank_{n}(f), d_{(l)} p_{(l)} \}
\end{eqnarray}
\end{proposition}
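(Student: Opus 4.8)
The plan is to exploit the hypothesis $\Phi^{(l)} = I_{n\times n}$ to strip the non-local bases out of the encoder recursion \eqref{eq:Cenc}, reducing each layer to a pure multi-channel convolution, and then to recognize that \emph{every} channel of $C^{(l-1)}$ is a single circular convolution of the original input $f$. Once this structural fact is in hand, the decomposition \eqref{eq:Ydecomp} factors the extended Hankel matrix through $\hank_n(f)$, and both terms of the minimum in \eqref{eq:rankdepth} fall out of the rank inequality exactly as in the derivation of \eqref{eq:rankYch}.

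First I would note that setting $\Phi^{(i)} = I$ collapses \eqref{eq:Cenc} to $C^{(i)} = \hank_{d_{(i)}|p_{(i)}}(C^{(i-1)})\Psi^{(i)} = C^{(i-1)} \circledast \overline\Psi^{(i)}$, which is a MIMO convolution in the sense of \eqref{eq:MIMO}--\eqref{eq:multifilter}. The heart of the argument is the claim that, for each $i$, every column (channel) $c_k^{(i)}$ of $C^{(i)}$ can be written as $c_k^{(i)} = f \circledast \overline h_k^{(i)}$ for some filter $h_k^{(i)}$. I would prove this by induction on $i$. The base case $i=0$ is immediate, since $C^{(0)} = f$ is already a convolution of $f$ with a unit impulse. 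For the inductive step, the MIMO convolution \eqref{eq:MIMO} expresses each channel of $C^{(i)}$ as $c_k^{(i)} = \sum_j c_j^{(i-1)} \circledast \overline\psi_k^{j}$, where the $\psi_k^{j}$ are the filters comprising $\Psi^{(i)}$; substituting the inductive hypothesis $c_j^{(i-1)} = f \circledast \overline h_j^{(i-1)}$ and using the commutativity and associativity of circular convolution, the entire sum collapses into $f \circledast \bigl(\sum_j \overline h_j^{(i-1)} \circledast \overline\psi_k^{j}\bigr)$, which is again a single convolution of $f$ with one filter. (The collapsed filter may have length up to $n$, but this is harmless, since $\circul_d(\cdot)$ in \eqref{eq:circul} is defined for any $h\in\Rd^m$ with $m\le n$.)

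Having established that $C^{(l-1)} = [z_1 \cdots z_{p_{(l)}}]$ with each $z_k = f \circledast \overline h_k$, I would invoke \eqref{eq:Ydecomp} to write $\hank_{d_{(l)}|p_{(l)}}(C^{(l-1)}) = \hank_n(f)\begin{bmatrix} \circul_{d_{(l)}}(h_1) & \cdots & \circul_{d_{(l)}}(h_{p_{(l)}}) \end{bmatrix}$. The rank inequality $\rank(AB)\le\min\{\rank A,\rank B\}$ then yields both bounds simultaneously: the left factor contributes $\rank\hank_n(f)$, while the right factor, being an $n\times d_{(l)}p_{(l)}$ matrix, has rank at most $d_{(l)}p_{(l)}$. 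Taking the minimum gives \eqref{eq:rankdepth}.

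The main obstacle is the inductive claim itself, and in particular the bookkeeping required to keep the flip ($\overline{\cdot}$) convention and the block/channel indexing of Section~2 consistent through the recursion, so that the MIMO output genuinely reduces to one convolution with $f$. Everything downstream is a direct reuse of results already in the excerpt (the factorization \eqref{eq:Ydecomp} and the rank inequality behind \eqref{eq:rankYch}); the only genuinely new content is the propagation of the ``single-convolution-of-$f$'' structure across the layers, which rests entirely on the associativity of circular convolution.
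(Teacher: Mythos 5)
Your proof is correct and takes essentially the same route as the paper's: both rest on factoring the extended Hankel matrix through $\hank_n(f)$ via circulant matrices built from the filters (the decomposition \eqref{eq:Ydecomp}) and then applying the rank product inequality $\rank(AB)\leq\min\{\rank A,\rank B\}$. The only difference is organizational — you unroll the layer recursion into an explicit induction showing each channel of $C^{(l-1)}$ is a single circular convolution of $f$ with a composite filter, whereas the paper reaches the same factorization by chaining the matrix-level inequalities $\rank \hank_{n|p_{(i)}}(C^{(i-1)}) \leq \rank \hank_{n|p_{(i-1)}}(C^{(i-2)})$ layer by layer without ever naming the composite filters.
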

\begin{proof}
See Appendix~\ref{ap4}.
\end{proof}

\begin{remark}
Proposition~\ref{prp:depth} is derived by assuming the identity matrix as non-local basis. While we expect  similar results for general non-local
basis that satisfies the frame condition \eqref{eq:phi0}, the introduction of such non-local basis makes the analysis very complicated due to the non-commutative
nature of matrix multiplication and convolution, so we defer its analysis for future study.
\end{remark}

We believe that Eq.~\eqref{eq:rankdepth} is the key inequality to reveal the role of the depth.
Specifically,  to exploit the low-rank structure of the input signal, 
the upper bound of \eqref{eq:rankdepth} should be determined by the intrinsic property of the signal, $r:=\rank  \hank_{n}(f)$,
instead of the number of columns of the Hankel matrix, $d_{(l)} p_{(l)}$, which can be chosen arbitrary.
Thus,   if $d_{(l)}=d, \forall l$,  then Eqs.\eqref{eq:rankdepth} and \eqref{eq:rankdepth} inform  that the number of encoder-decoder layer depth $L$ is given by
\begin{eqnarray}
r \leq d^L  & \Longleftrightarrow  & L \geq \log_d (r) \  .
\end{eqnarray}
This implies that the number of minimum encoder-decoder layer is dependent on the rank structure of the input signal and
 we need a deeper network  for a more complicated
signal, which is consistent with the empirical findings.
Moreover, as shown in Fig.~\ref{fig:channel}, for a given intrinsic rank, the depth of the network also depends on the filter length.
For example, if the intrinsic rank of the Hankel matrix is 100, the network depth with respect to $d=2$ is 7, whereas
it is 5 when longer filter with $d=3$ is used.

Suppose that there are not sufficient number of output channels at a specific layer, say $l=l^*$, and the
signal content is approximated. 
Then, from the proof of Proposition~\ref{prp:depth} (in particular, \eqref{eq:rank1}), we can easily see
that $\rank \hank_{d_{(l)}|p_{(l)}}(C^{(l-1)})$ is upper-bounded by the rank structure
of the approximated signal at $l=l_*$. 
Moreover, if the numbers of channels are not sufficient for all layers, the rank will gradually decrease along the layers.
 This theoretical prediction will be confirmed later in Discussion using empirical data.

\subsection{ReLU Nonlinearity}

In order to reveal the  link between the convolutional framelets and the deep network,  we further need to consider
a nonlinearity.
 Note that the ReLU nonlinearity \cite{glorot2011deep,he2015delving} is currently most widely used 
for  deep learning approaches.
%
%
Specifically, the ReLU $\rho(\cdot)$ is an element-wise operation for a matrix such that for a matrix $X=[x_{ij}]_{i,j=1}^{n,m} \in \Rd^{n\times m}$, the ReLU
operator provides non-negative part, i.e.
$\rho(X) =  \left[\max(0,x_{ij})\right]_{i,j=1}^{n,m}$.
By inserting ReLUs,
the $L$- layer encoder-decoder architecture neural network with bias
is defined  (with a slight abuse of notation) by
%
 \begin{eqnarray}\label{eq:G}
\Qc(f) &:=& \Qc\left(f ;\{\Psi^{(j)},\tilde \Psi^{(j)},b_{enc}^{(j)},b_{dec}^{(j)}\}_{j=1}^L\right) \notag\\
&=&  \left( \tilde\Phi^{(1)}\rho(\hat C^{(1)}) \right)\circledast \nu(\tilde \Psi^{(1)})  + 1_{n}b_{dec}^{(1)\top} 
 \end{eqnarray}
 where 
\begin{eqnarray}
\hat C^{(i)}  &=&\begin{cases} 
\left(\tilde\Phi^{(i+1)} \rho(\hat C^{(i+1)}) \right) \circledast \nu (\tilde \Psi^{(i+1)})+ 1_{n}b_{dec}^{(i+1)\top} ,  & 1\leq i <L \\
\rho(C^{(L)}), & i=L \end{cases}
\end{eqnarray}
and 
\begin{eqnarray}
C^{(i)}  &=&\begin{cases}  \Phi^{(i)\top} \left(\rho(C^{(i-1)}) \circledast \overline\Psi^{(i)} + 1_{n}b_{enc}^{(i)\top} \right),  & 1\leq i \leq L \\
f, & i=0 \end{cases}
\end{eqnarray}

Recall that the PR condition for the deep convolutional framelets was derived without assuming any nonlinearity.
Thus, the introduction of ReLU appears counter-intuitive in the context of PR.
Interestingly, in spite of ReLU nonlinearity, the following theorem shows that the PR condition can be satisfied
 when  filter channels  and bias having opposite phase are available.

  \begin{proposition}[\textbf{PR under ReLU using Opposite Phase Filter Channels}]\label{prp:prN}
  Suppose that 
   $\tilde\Phi^{(l)}\Phi^{(l)\top}=I_{n\times n}$ 
  and  $\Psi^{(l)},\tilde \Psi^{(l)} \in \Rd^{p_{(l)}d_{(l)}\times 2m_{(l)}}$  with $m_{(l)}\geq p_{(l)}d_{(l)}$ 
  for all $l=1,\cdots, L$.
  Then, the neural network output in \eqref{eq:G} satisfies the perfect reconstruction 
  condition, i.e. $f =  \Qc\left(f \right)$ if  the encoder-decoder filter channels are given by 
%
 \begin{eqnarray}\label{eq:CReLU}
\Psi^{(l)} = \begin{bmatrix} \Psi_+^{(l)} & -\Psi_+^{(l)} \end{bmatrix}&,\quad &\tilde\Psi^{(l)} = \begin{bmatrix} \tilde\Psi_+^{(l)} & -\tilde\Psi_+^{(l)} \end{bmatrix}  
\end{eqnarray}
satisfying the condition:
\begin{eqnarray}\label{eq:id}
\Psi_+^{(l)}\tilde \Psi_+^{(l)\top}= I_{p_{(l)}d_{(l)}\times p_{(l)}d_{(l)}}, \quad\quad\quad \Psi_+^{(l)},\tilde \Psi_+^{(l)} \in \Rd^{p_{(l)}d_{(l)}\times m_{(l)}}
\end{eqnarray}
and the encoder-decoder bias pairs $ b_{enc}^{(l)}, b_{dec}^{(l)}$ are given by
\begin{eqnarray}\label{eq:Cbias}
b_{enc}^{(l)}= \begin{bmatrix} b_{enc,+}^{(l)T} & -b_{enc,+}^{(l)T} \end{bmatrix}&,\quad  & 1_{n}b_{dec}^{(l)\top}  = -  \hank_{d|p}^\dag \left(   1_{n}b_{enc,+}^{(l)\top}\tilde \Psi^{(l)\top}\right) \  .
\end{eqnarray}
\end{proposition}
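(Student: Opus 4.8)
The plan is to show that, layer by layer, the concatenated opposite-phase structure in \eqref{eq:CReLU} together with the biases in \eqref{eq:Cbias} makes the ReLU completely transparent, so that each nonlinear encoder--decoder stage collapses to the linear one already handled by the Deep Convolutional Framelets Expansion theorem and Theorem~\ref{thm:bias}. The single scalar fact driving everything is the identity $\rho(a)-\rho(-a)=a$ for every $a\in\Rd$ (equivalently $\rho(X)-\rho(-X)=X$ entrywise); the design \eqref{eq:CReLU}--\eqref{eq:Cbias} is engineered precisely so that this cancellation is what the decoder computes.

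First I would establish a single-layer lemma. Fix a layer and an arbitrary input $g$, and compute the pre-ReLU encoder coefficient $C=\Phi^\top\bigl(g\circledast\overline\Psi+1_{n}b_{enc}^\top\bigr)=\Phi^\top\bigl(\hank_{d|p}(g)\Psi+1_{n}b_{enc}^\top\bigr)$. Substituting $\Psi=\begin{bmatrix}\Psi_+ & -\Psi_+\end{bmatrix}$ and $b_{enc}=\begin{bmatrix}b_{enc,+}^\top & -b_{enc,+}^\top\end{bmatrix}^\top$ shows that $C$ has the opposite-phase block form $C=\begin{bmatrix}C_+ & -C_+\end{bmatrix}$ with $C_+=\Phi^\top\bigl(\hank_{d|p}(g)\Psi_++1_{n}b_{enc,+}^\top\bigr)$, so $\rho(C)=\begin{bmatrix}\rho(C_+) & \rho(-C_+)\end{bmatrix}$. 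Forming the decoder's Hankel-domain reconstruction $\tilde\Phi\rho(C)\tilde\Psi^\top$ with $\tilde\Psi=\begin{bmatrix}\tilde\Psi_+ & -\tilde\Psi_+\end{bmatrix}$ and invoking $\rho(C_+)-\rho(-C_+)=C_+$ collapses the two channels into $\tilde\Phi C_+\tilde\Psi_+^\top$. The frame condition $\tilde\Phi\Phi^\top=I_{n\times n}$ from the hypothesis and $\Psi_+\tilde\Psi_+^\top=I_{pd\times pd}$ from \eqref{eq:id} then give $\tilde\Phi C_+\tilde\Psi_+^\top=\hank_{d|p}(g)+1_{n}b_{enc,+}^\top\tilde\Psi_+^\top$; applying $\hank_{d|p}^\dag$ (so that $\hank_{d|p}^\dag\hank_{d|p}=\id$) leaves $g+\hank_{d|p}^\dag\bigl(1_{n}b_{enc,+}^\top\tilde\Psi_+^\top\bigr)$, and adding the matched decoder bias \eqref{eq:Cbias}, which is precisely the bias prescribed by Theorem~\ref{thm:bias} for the reduced pair $(\Psi_+,\tilde\Psi_+,b_{enc,+})$, cancels the residual and returns $g$. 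Thus the entire ReLU stage reproduces its input.

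Next I would run a downward induction to assemble the single-layer lemma into the full statement \eqref{eq:G}. The claim to carry is $\hat C^{(i)}=\rho(C^{(i)})$ for $i=L,\dots,1$. The base case $\hat C^{(L)}=\rho(C^{(L)})$ is the definition of the innermost coefficient. For the inductive step I would use the idempotency $\rho(\rho(\cdot))=\rho(\cdot)$ to rewrite $\rho(\hat C^{(i+1)})=\rho(C^{(i+1)})$, recognize that the decoder update for $\hat C^{(i)}$ is then exactly the single-layer reconstruction of the layer-$(i+1)$ encoder whose input is $g=\rho(C^{(i)})$, and conclude $\hat C^{(i)}=\rho(C^{(i)})$. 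Feeding $\hat C^{(1)}=\rho(C^{(1)})$ into the outermost decoder and applying the single-layer lemma once more (with $g=\rho(C^{(0)})=\rho(f)$) yields $\Qc(f)=\rho(f)$, which equals $f$ on the nonnegative image inputs of interest; equivalently, if the first layer is read without an input ReLU, $\Qc(f)=f$ for all $f$.

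I expect the crux to be the single-layer reduction: verifying that the opposite-phase encoder coefficients survive the two non-local multiplications $\Phi^\top$ and $\tilde\Phi$, and that the signed decoder recombination realizes the CReLU cancellation $\rho(C_+)-\rho(-C_+)=C_+$ rather than, say, $\rho(C_+)+\rho(-C_+)=|C_+|$. Once this transparency is established, the remaining work is bookkeeping: the bias cancellation is a direct appeal to Theorem~\ref{thm:bias} for the reduced pair $(\Psi_+,\tilde\Psi_+)$, and the layer-to-layer assembly is routine because ReLU is idempotent and the encoder/decoder coefficients obey the same recursion underlying \eqref{eq:PRZ}.
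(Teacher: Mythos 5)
Your proposal is correct and follows essentially the same route as the paper's proof in Appendix E: the CReLU cancellation $\rho(a)-\rho(-a)=a$ applied to the opposite-phase coefficient blocks, followed by the frame conditions \eqref{eq:id} and the matched bias \eqref{eq:Cbias}, assembled layer by layer from $l=L$ down to $1$. Your bookkeeping is in fact slightly more careful than the paper's, since you track $\hat C^{(i)}=\rho(C^{(i)})$ rather than $\hat C^{(i)}=C^{(i)}$ and explicitly flag that the recursion as written returns $\rho(f)$, which coincides with $f$ only for nonnegative inputs.
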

\begin{proof}
See Appendix~\ref{ap5}.
\end{proof}

\begin{remark}
 Eq.~\eqref{eq:CReLU} in Proposition~\ref{prp:prN} predicts the existence of   filter pairs with opposite phase. 
Amazingly, this theoretical prediction coincides
with the empirical observation in deep learning literature.  For example, Shang et al \cite{shang2016understanding} observed 
 an intriguing property that the filters in the lower layers form pairs (i.e., filters with opposite phase). To exploit this property for further network
 performance improvement, the authors proposed
 so called concatenated ReLU (CReLU) network to explicitly retrieve the negative part of $\Phi^{\top}F\Psi_+$ using
 $\rho\left(\Phi^{\top}F(-\Psi_+)\right)$ \cite{shang2016understanding}.
\end{remark}

\begin{remark}
Note that there are infinite number of filters satisfying \eqref{eq:CReLU}.
In fact, the most important requirement for PR is the {\em existence} of  opposite phase filters as in  \eqref{eq:CReLU} 
satisfying the frame condition \eqref{eq:id} rather than the specific filter coefficients.
This may suggest the excellent generalization performance of a deep network even from small set of training data set.  
\end{remark}

Proposition~\ref{prp:prN}  deals with the PR condition using redundant local filters
 $\Psi \in \Rd^{pd\times 2m}$ with $m \geq pd$. 
 This can be easily satisfied at the lower layers of the deep convolutional
framelets; however, the number of filter channels for PR   grows exponentially according to layers as shown in \eqref{eq:prod}.
Thus, at higher layers of deep convolutional framelets, the condition  $m \geq pd$ for Proposition~\ref{prp:prN} may not be satisfied.
However, even in this case, we can still achieve PR as long as the extended Hankel matrix at that layer is sufficiently
low-ranked.

 \begin{proposition}[\textbf{Low-Rank Approximation with Insufficient Filter Channels}]\label{prp:PRinsufficient}
For a given input $X\in\Rd^{n\times p}$, let $\hank_{d|p}(X)$ denotes its extended Hankel matrix whose
rank is $r$.
Suppose, furthermore, that $\tilde\Phi\Phi^\top=I_{n\times n}$. 
Then,
there exists  $\Psi\in \Rd^{pd\times 2m}$  and $\tilde \Psi\in\Rd^{pd\times 2m}$ with  $r\leq m< pd$  such that 
$$X = \hank_{d|p}^\dag( \Phi\rho\left(\Phi^\top\hank_{d|p}(X)\Psi\right) \tilde \Psi^\top).$$
\end{proposition}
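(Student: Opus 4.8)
The plan is to combine two independent ideas: first, to neutralize the ReLU nonlinearity by the opposite-phase filter construction already used in Proposition~\ref{prp:prN}; and second, to replace the full-rank frame identity $\Psi_+\tilde\Psi_+^\top = I_{pd\times pd}$ (which would force $m\ge pd$) by the weaker \emph{projection} identity $\Psi_+\tilde\Psi_+^\top = P_{R(V)}$ onto the row space of the extended Hankel matrix. Since that row space is only $r$-dimensional, it can be resolved with merely $m\ge r$ channels, which is exactly what makes the ``insufficient'' regime $r\le m<pd$ work. Accordingly I would set $\Psi = [\Psi_+ \ -\Psi_+]$ and $\tilde\Psi = [\tilde\Psi_+ \ -\tilde\Psi_+]$ with $\Psi_+,\tilde\Psi_+\in\Rd^{pd\times m}$ to be chosen, and write $H := \hank_{d|p}(X)$, so that $\rank H = r$.

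The first step is mechanical. The argument of the ReLU is the block-row matrix $\Phi^\top H\Psi = [\Phi^\top H\Psi_+ \ -\Phi^\top H\Psi_+]$, so applying $\rho$ columnwise gives $[\rho(\Phi^\top H\Psi_+)\ \rho(-\Phi^\top H\Psi_+)]$. Multiplying on the right by $\tilde\Psi^\top$ and using the elementary identity $\rho(A)-\rho(-A)=A$ collapses the two branches:
$$\rho(\Phi^\top H\Psi)\,\tilde\Psi^\top = \left(\rho(\Phi^\top H\Psi_+)-\rho(-\Phi^\top H\Psi_+)\right)\tilde\Psi_+^\top = \Phi^\top H\Psi_+\tilde\Psi_+^\top,$$
which eliminates the nonlinearity entirely, just as in the proof of Proposition~\ref{prp:prN}.

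The genuinely new step is the low-rank choice of filters. Taking the thin SVD $H = U\Sigma V^\top$ with $V\in\Rd^{pd\times r}$ having orthonormal columns, I would choose $\Psi_+ = \tilde\Psi_+ = [\,v_1\ \cdots\ v_r\ \ 0\ \cdots\ 0\,]\in\Rd^{pd\times m}$, i.e.\ the first $r$ columns equal to the right singular vectors and the remaining $m-r$ columns zero; this is legitimate precisely because $m\ge r$ (a rank-$r$ projector cannot be factored through fewer than $r$ columns). Then $\Psi_+\tilde\Psi_+^\top = \sum_{i=1}^r v_iv_i^\top = VV^\top = P_{R(V)}$. Invoking the non-local frame condition $\tilde\Phi\Phi^\top = I_{n\times n}$, together with the fact that $R(V)$ is exactly the row space of $H$ so that $HP_{R(V)}=H$, the reconstruction becomes purely linear:
$$\tilde\Phi\left(\rho(\Phi^\top H\Psi)\tilde\Psi^\top\right) = \tilde\Phi\Phi^\top H\Psi_+\tilde\Psi_+^\top = H\,P_{R(V)} = H.$$
Applying the generalized inverse and the identity $\hank_{d|p}^\dag\hank_{d|p}(X)=X$ then returns $X$, as claimed.

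The main obstacle is conceptual rather than computational: one must recognize that perfect reconstruction under a deficient number of channels does not require the local frame to resolve the entire $pd$-dimensional filter space, but only the $r$-dimensional row space of the Hankel matrix, whose rank budget is furnished by the low-rankness hypothesis. Once the ReLU is defused by the opposite-phase pairing, the whole problem reduces to engineering the single identity $\Psi_+\tilde\Psi_+^\top = P_{R(V)}$. A secondary point I would check is that the zero padding of $\Psi_+$ and $\tilde\Psi_+$ is harmless: the padded columns contribute zero through both ReLU branches and through the subsequent multiplication, so they neither disturb the cancellation $\rho(A)-\rho(-A)=A$ nor perturb the projection identity.
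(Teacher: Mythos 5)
Your proof is correct and takes essentially the same route as the paper's: defuse the ReLU via the opposite-phase pairing $\rho(A)-\rho(-A)=A$ (borrowed from Proposition~\ref{prp:prN}) and then choose $\Psi_+\tilde\Psi_+^\top=P_{R(V)}$ so that $\hank_{d|p}(X)P_{R(V)}=\hank_{d|p}(X)$. The only differences are cosmetic: you make the existence of such filters explicit by zero-padding the right singular vectors where the paper merely asserts it, and you place $\tilde\Phi$ in the outer position (consistent with the stated hypothesis $\tilde\Phi\Phi^\top=I_{n\times n}$) where the paper writes $\Phi$ and invokes orthonormality.
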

\begin{proof}
See Appendix~\ref{ap6}.
\end{proof}
%
However, as the network gets deeper, more layers cannot satisfy the condition for Proposition~\ref{prp:PRinsufficient}. 
To address this, the residual
net \cite{he2016deep}  is useful.
Recall that   the residual net (ResNet) has been widely used for image classification as well as image reconstruction.
More specifically, the residual net architecture shown in Fig.~\ref{fig:resnet}(a) can be represented by
\begin{eqnarray}\label{eq:resnet}
\hat X  = R(F;\Psi,\tilde\Psi): = \rho\left(  F- \rho( F\Psi)\tilde \Psi^\top\right)
\end{eqnarray}
where $F:=\hank_{d|p}(X)$. 
The following result shows that the
residual network truncates the  least significant subspaces:

  \begin{figure}[!bt] 
\center{\includegraphics[width=8cm]{./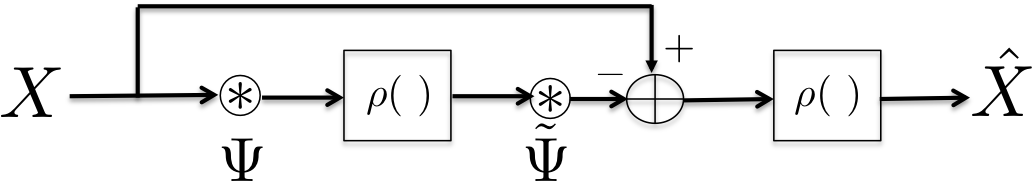}}
\centerline{\mbox{(a)}}
\vspace{0.5cm}
\center{\includegraphics[width=7cm,height=1.5cm]{./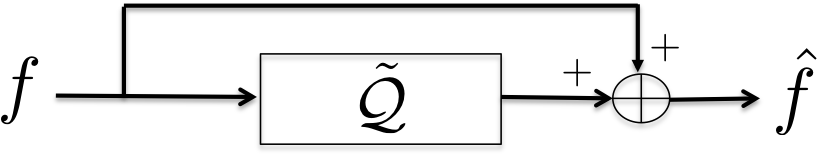}}
\centerline{\mbox{(b)}}
\vspace{-0.5cm}
\caption{Block diagram of  (a) a residual block and (b) skipped connection.
}
\label{fig:resnet}
\end{figure}

 \begin{proposition}[\textbf{High-rank Approximation using Residual Nets}]\label{prp:resnet}
 For a given nonnegative input $X\in\Rd^{n\times p}$, let $\hank_{d|p}(X)$ denotes its extended Hankel matrix
and its  SVD  is given by
\begin{eqnarray}\label{eq:SVD}
\hank_{d|p}(X)=U\Sigma V^\top = \sum_{i=1}^{r}\sigma_iu_iv_i^\top,
\end{eqnarray}
where
$u_i$ and $v_i$ denotes the left and right singular vectors, and 
$\sigma_1\geq \sigma_2 \geq \cdots \geq \sigma_{r}> 0$ are the singular values and $r$ denotes the rank.
Then,
there exists   $\Psi\in \Rd^{pd\times 2m}$  and   $\tilde \Psi\in\Rd^{pd\times 2m}$ with  $r < pd$  such that 
\begin{eqnarray}\label{eq:PRres}
X = \hank_{d|p}^\dag(  R(F;\Phi,\Psi,\tilde\Psi) ),
\end{eqnarray}
where $R(F;\Psi,\tilde\Psi)$ is given by \eqref{eq:resnet}.
\begin{proof}
See Appendix~\ref{ap7}.
\end{proof}
\end{proposition}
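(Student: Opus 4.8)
The plan is to collapse the residual block of \eqref{eq:resnet} to a single linear map by using opposite-phase filters, and then to choose that linear map to be the orthogonal projection onto the \emph{complement} of the signal subspace, so that it annihilates the clean lifted signal and the outer ReLU returns $F$ unchanged. Two elementary facts set this up. First, since $X\geq 0$ entrywise, every entry of $F:=\hank_{d|p}(X)$ is a coordinate of $X$ and is therefore nonnegative, so $\rho(F)=F$. Second, the lifting is left-invertible, $\hank_{d|p}^\dag(\hank_{d|p}(X))=X$ — the recovery identity already invoked in the proof of Proposition~\ref{prp:2}. Consequently it suffices to exhibit $\Psi,\tilde\Psi$ for which $R(F;\Psi,\tilde\Psi)=F$, because then $\hank_{d|p}^\dag(R(F;\Psi,\tilde\Psi))=\hank_{d|p}^\dag(F)=X$, which is \eqref{eq:PRres}.

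Next I would impose the opposite-phase structure $\Psi=\begin{bmatrix}\Psi_+ & -\Psi_+\end{bmatrix}$ and $\tilde\Psi=\begin{bmatrix}\tilde\Psi_+ & -\tilde\Psi_+\end{bmatrix}$ exactly as in \eqref{eq:CReLU}, with $\Psi_+,\tilde\Psi_+\in\Rd^{pd\times m}$. Then $F\Psi=\begin{bmatrix}F\Psi_+ & -F\Psi_+\end{bmatrix}$, and applying the scalar identity $\rho(a)-\rho(-a)=a$ entrywise gives
\begin{equation*}
\rho(F\Psi)\tilde\Psi^\top=\rho(F\Psi_+)\tilde\Psi_+^\top-\rho(-F\Psi_+)\tilde\Psi_+^\top=(F\Psi_+)\tilde\Psi_+^\top,
\end{equation*}
so the inner branch reduces to the purely linear map $F\mapsto F\Psi_+\tilde\Psi_+^\top$ with the nonlinearity removed.

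The key step is then to make this linear map the projection onto $R(V)^\perp$. Let $V_\perp\in\Rd^{pd\times(pd-r)}$ have orthonormal columns spanning $R(V)^\perp$, and set $\Psi_+=\tilde\Psi_+=V_\perp$ (padding with zero columns if a larger $m$ is desired), so that $\Psi_+\tilde\Psi_+^\top=I-VV^\top=P^\perp_{R(V)}$. Because the rows of $F=U\Sigma V^\top$ lie in $R(V)$ and $V^\top V=I_r$, we have $FVV^\top=U\Sigma(V^\top V)V^\top=F$, hence $F\Psi_+\tilde\Psi_+^\top=F-FVV^\top=0$. Substituting back into \eqref{eq:resnet} yields $R(F;\Psi,\tilde\Psi)=\rho(F-0)=\rho(F)=F$, completing the reduction above. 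Note this construction requires only $m\geq pd-r$, in contrast with the $m\geq r$ needed in Proposition~\ref{prp:PRinsufficient}: the residual connection subtracts precisely the least-significant (complementary) subspace, so it is efficient exactly in the high-rank regime where $r$ is large, which explains the title.

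I expect the main obstacle to be conceptual rather than computational: recognizing that, unlike the non-residual insufficient-channel case which projects onto $R(V)$, the residual architecture instead calls for the complementary projection $P^\perp_{R(V)}$, and that the opposite-phase pairing is the device that turns the nonlinear inner branch into this clean linear projection. Once the correct projection is identified, its annihilation of $F$ is immediate from the SVD \eqref{eq:SVD}; the only bookkeeping to check carefully is the dimension count $\rank P^\perp_{R(V)}=pd-r$, so that $r<pd$ guarantees $m\geq pd-r\geq 1$ and the construction is nontrivial.
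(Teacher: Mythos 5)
Your proposal is correct and follows essentially the same route as the paper's proof in Appendix~\ref{ap7}: use the opposite-phase pairing of Proposition~\ref{prp:prN} to collapse $\rho(F\Psi)\tilde\Psi^\top$ to the linear map $F\Psi_+\tilde\Psi_+^\top$, choose $\Psi_+$ with columns in $R(V)^\perp$ (your explicit $\Psi_+=\tilde\Psi_+=V_\perp$ is a particular instance of the paper's condition $V^\top\Psi_+=0$) so that $F\Psi_+\tilde\Psi_+^\top=0$, and invoke the nonnegativity of $X$ to conclude $\rho(F)=F$. The only cosmetic difference is that the paper merely requires $F\Psi_+=0$ rather than insisting that $\Psi_+\tilde\Psi_+^\top$ be the full complementary projection, so no lower bound on $m$ is actually needed beyond the existence of a nontrivial null direction, which $r<pd$ provides.
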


Note that the PR condition for the residual network is much more relaxed than that of Proposition~\ref{prp:PRinsufficient},
since we do not have any constraint on $m$ that determines the output filter channels number.  Specifically,  for the case of Proposition~\ref{prp:PRinsufficient}, the number of output channel filters should be  $q=2m\geq 2r$.  On the other hand,  the PR condition for residual net only requires
the existence of the null space in the extended Hankel matrix (i.e. $r< pd$), and it does not depend on the number of output channels. Therefore,
we have more freedom to choose filters.

In addition, Proposition~\ref{prp:PRinsufficient} also implies the low rank approximation, if the number of  output channels is not sufficient.
Specifically, with matched opposite phase filters, we have
\begin{eqnarray}\label{eq:mrank}
 \Phi\rho\left(\Phi^\top\hank_{d|p}(X)\Psi\right) \tilde \Psi^\top)=\hank_{d|p}(X)\Psi_+ \tilde \Psi_+^\top \  .
 \end{eqnarray}
Thus, we can choose $\Psi_+,  \tilde \Psi_+$  such that  \eqref{eq:mrank} results in the rank-$m$ approximation of
$\hank_{d|p}(X)$.
On the other hand,   from the proof in Appendix~\ref{ap7}, we can see that 
 \begin{eqnarray}\label{eq:resnet_insuf}
R(F;\Phi,\Psi_+,\tilde\Psi_+)  
 &=&  \rho\left(\hank_{d|p}(X) -  U\Sigma V^\top \Psi_+\tilde \Psi_+^\top \right)
\end{eqnarray}
Accordingly, by choosing  $\Psi_+,  \tilde \Psi_+$  such that $\Psi_+\tilde \Psi_+^\top = P_{R(v_{pd})}$ where $v_{pd}$ is the singular vector for the least
singular value,   we can minimize the error of approximating $\hank_{d|p}(X)$ using $R(F;\Psi_+,\tilde\Psi_+)$.
This is why we refer Proposition~\ref{prp:resnet}  to as the high rank approximation using residual net.

\subsection{Role of perfect reconstruction condition and shrinkage behaviour of network}


So far, we have investigated the perfect recovery (PR) condition for  deep convolutional framelets.
Here,  we are now ready to
explain why the PR  is useful for inverse problems. 

Suppose that an analysis operator $\Wc$, which are composed of frame bases, and  the associated synthesis operator  $\tilde\Wc^\top$ 
satisfy the following frame condition:
\begin{eqnarray}\label{eq:PRC}
f = \tilde \Wc^\top \Wc f , \quad \forall f \in H
\end{eqnarray}
where $H$ denotes the Hilbert space of our interest.
Signal processing using frame basis satisfying \eqref{eq:PRC} has been extensively studied for frame-based image processing \cite{mallat1999wavelet,cai2008framelet,cai2009convergence}.
One of the important advantages of these frame-based algorithm is the proven convergence \cite{cai2008framelet,cai2009convergence},
which makes the algorithm powerful.

For example, consider a signal denosing algorithm to  recover noiseless signal $f^*$ from the
noisy measurement $g=f^*+e$, where $e$ is the additive noise. 
Then, a frame-based denoising algorithm \cite{daubechies2003framelets,cai2009split,cai2012image} recovers the unknown signal  by substituting $g$  for $f$ in the right side of \eqref{eq:PRC}  and applies
a shrinkage operator $S_\tau$ to the framelet coefficients to eliminate the small magnitude noise signals:
\begin{eqnarray}\label{eq:denoising}
\hat f = \tilde \Wc^\top \Sc_\tau( \Wc g)
\end{eqnarray}
where $\tau$ denotes a shrinkage parameter. In denoising, soft- or hard- thresholding shrinkage operations are most widely used.
Thus, to make the frame-based denoising algorithm successful, the frame should be energy compacting so that most of the signals are concentrated in
a small number of framelet coefficients, whereas the noises are spread out across all framelet coefficients.

As an another example, consider  a signal inpainting problem \cite{cai2008framelet,cai2009convergence}, where 
for a given signal $f\in H$,
we measure $g$ only on the index set $\Lambda$ and our 
 goal is then to estimate the unknown image $f$ on the complementary index set $\Lambda^c$.
Let $P_\Lambda$ be the diagonal matrix with diagonal entries 1 for the indices in $\Lambda$ and 0 otherwise. 
Then, we have the following identities:
\begin{eqnarray}
f 
&=& \mu P_\Lambda f + (I-\mu P_\Lambda) f \notag\\
&=& \mu P_\Lambda g + (I-\mu P_\Lambda) \tilde\Wc^\top \Wc f  \label{eq:update0}
\end{eqnarray}
where we use the  frame condition \eqref{eq:PRC} for the last equality.
A straight-forward frame-based inpainting algorithms \cite{cai2008framelet,cai2009convergence} can be then obtained from \eqref{eq:update0} as: 
\begin{eqnarray}\label{eq:inpainting}
f_{n+1} = \mu P_\Lambda g + (I-\mu P_\Lambda) \Wc^\top \Sc_\tau \left( \Wc f_n \right) \  ,
\end{eqnarray}
where $\Sc_\tau$ denotes a shrinkage operator with the parameter $\tau$ and $f_0$ is initialized to $P_\Lambda g$.

Now,  note that the computation of our deep convolutional framelet coefficients can  represent an 
analysis operator:
$$\Wc f: = C = \Phi^\top (f\circledast \overline \Psi)$$
whereas the synthesis operator is given by the decoder part of convolution:
$$\tilde \Wc^\top C := (\Phi C) \circledast \nu (\tilde \Psi). $$
If the non-local and local basis satisfy the frame condition \eqref{eq:phi0} and \eqref{eq:ri0} for all layers,  they satisfy the  PR condition \eqref{eq:PRC}.
Thus, we could use \eqref{eq:denoising} and \eqref{eq:inpainting} for denoising and inpainting applications.
Then, what is the shrinkage operator in our deep convolutional framelets?  

Although one could still use similar soft- or hard- thresholding operator,
one of the unique aspects of deep convolutional framelets is that
 the number of filter channels can control  the shrinkage behaviour.
 Moreover, the optimal local basis are learnt from the training data such that they give the best shrinkage behavior.
  More specifically, 
   the
low-rank shrinkage behaviour emerges  when the number of output filter channels are not sufficient.
These results  are very useful in practice, since the low-rank approximation of Hankel matrix is good for reducing the noises and artifacts as demonstrated in image denoising \cite{jin2015sparse+}, artifact removal \cite{jin2016mri} 
and deconvolution \cite{min2015fast}.
 
 To understand this claim, consider the following regression problem under the Hankel structured low-rank constraint:
\begin{eqnarray}
\min_{f\in \Rd^{n}}  & \|f^* -f\|^2 \notag\\
\mbox{subject to }  &\quad \rank \hank_d(f) \leq r < d . \label{eq:fcost}
\end{eqnarray}
where $f^*\in \Rd^d$ denotes the ground-truth signal and we are interested in finding rank-$r$ approximation.
Then, for any feasible solution $f$ for \eqref{eq:fcost}, its Hankel structured matrix $\hank_d(f)$
 has  the singular value decomposition
$\hank_{d}(f) = U \Sigma V^{\top}$
where $U =[u_1 \cdots u_r] \in \Rd^{n\times r}$ and $V=[v_1\cdots v_r]\in \Rd^{d\times r}$ denote the left and the right singular vector bases matrices, respectively;
$\Sigma=(\sigma_{ij})\in\mathbb{R}^{r\times r}$ is the diagonal matrix with singular values.  
Then,  we can find two matrices pairs $\Phi, \tilde \Phi \in \Rd^{n\times n}$ and $\Psi$, $\tilde \Psi\in \Rd^{d\times r}$ satisfying the conditions
\begin{eqnarray}
\tilde \Phi \Phi^\top = I_{n\times n}, \qquad \Psi \tilde \Psi^{\top} = P_{R(V)}, \label{eq:noframe}
\end{eqnarray}
such that
$$\hank_d(f) = \tilde \Phi \Phi^\top  \hank_d(f)  \Psi \tilde \Psi^{\top},$$
which leads to the decomposition of $f$ using a
single layer deep convolutional framelet expansion:
\begin{eqnarray}
f = 
\left(\tilde\Phi C\right) \circledast \nu(\tilde \Psi),  && \quad \mbox{where}\quad
C = \Phi^\top \left( f \circledast \overline \Psi  \right) \label{eq:finsuf}
\end{eqnarray}
Note that 
\eqref{eq:finsuf} is the general form of the signals that are associated with a rank-$r$ Hankel structured matrix.

Note that the basis ($\Phi, \tilde \Phi$) and ($\Psi$, $\tilde \Psi$) are not specified in  \eqref{eq:newcost}.
In our deep convolutional framelets,  $\Phi$ and $\tilde \Phi$ correspond to the generalized pooling and unpooling which are chosen based on
domain knowledges, so we are interested in only estimating the filters $\Psi$, $\tilde \Psi$.
To restrict the search space furthermore, 
let $\Hc_r$ denote the space of such signals  that have positive framelet coefficient, i.e. $C \geq 0$: 
\begin{eqnarray}
\Hc_r &=& \left\{ f\in \Rd^n~|~ f = \left(\tilde\Phi C\right) \circledast \nu(\tilde \Psi)+ 1_n b_{dec}, \quad
C = \Phi^\top \left( f \circledast \overline \Psi  + 1_n b_{enc}^\top\right)\geq 0 \right\} 
\end{eqnarray}
where $b_{enc} \in \Rd^{d}, b_{dec} \in \Rd$ denotes the encoder and decoder biases, respectively.
Then, the main goal of the neural network training  is to learn   ($\Psi$, $\tilde \Psi$, $b_{enc}, b_{dec}$) from training data  $\{(f_{(i)}, f_{(i)}^*)\}_{i=1}^N$ assuming that
$\{f_{(i)}^*\}$ are associated with rank-$r$ Hankel matrices.
More specifically,  our regression problem under low-rank Hankel matrix constraint in \eqref{eq:fcost} for the training data can be equivalently represented by
\begin{eqnarray}\label{eq:newcost}
\min_{\{f_{(i)}\}\in \Hc_r}   \sum_{i=1}^N\|f_{(i)}^* - f_{(i)}\|^2  &=& \min_{( \Psi, \tilde\Psi,b_{enc},b_{dec})}  \sum_{i=1}^N\left\|f_{(i)}^* - \Qc(f_{(i)};\Psi,\tilde\Psi,b_{enc},b_{dec})\right\|^2
\end{eqnarray}
where
$$\Qc(f_{(i)};\Psi,\tilde\Psi,b_{enc},b_{dec})= \left(\tilde\Phi C[f_{(i)}]\right) \circledast \nu(\tilde \Psi)+ 1_n b_{dec}$$
$$\quad C[f_{(i)}] =\rho\left(
\Phi^\top \left( f_{(i)} \circledast \overline \Psi  + 1_n b_{enc}^\top\right)\right),$$
where $\rho(\cdot)$ is the ReLU to impose the positivity.
After the network is fully trained, the inference for a given noisy input $f$ is simply done by
$\Qc(f;\Psi,\tilde\Psi,b_{enc},b_{dec})$, which is equivalent to find a denoised solution that has the rank-$r$ Hankel structured matrix.
Therefore, using deep convolutional framelets with insufficient channels, we do not need an explicit shrinkage operation.
The idea can be further extended to the multi-layer deep convolutional framelet expansion as follows:
\begin{definition}[\bf Deep Convolutional Framelets Training]
Let $\{f_{(i)},f_{(i)}^*\}_{i=1}^N$ denote the input and target sample pairs. Then, the deep convolutional framelets training problem is given by
\begin{eqnarray}\label{eq:training}
\min_{\{\Psi^{(j)},\tilde \Psi^{(j)},b_{enc}^{(j)}, b_{(dec)}^{(j)}\}_{j=1}^L}\sum_{i=1}^N \left\|f_{(i)}^* - \Qc\left(f_{(i)};\{\Psi^{(j)},\tilde \Psi^{(j)},b_{enc}^{(j)}, b_{(dec)}^{(j)}\}_{j=1}^L\right)\right\|^2
\end{eqnarray}
where $\Qc$ is defined by \eqref{eq:G}. 
\end{definition}

Thanks to the inherent shrinkage behaviour from the neural network training,   \eqref{eq:denoising} and \eqref{eq:inpainting} can be written as:
 \begin{eqnarray}
 \hat f &=& \Qc(g)  \label{eq:deepdenoising} \\
 f_{n+1} &=&  \mu P_\Lambda g + (I-\mu P_\Lambda) \Qc(f_n) \label{eq:deepinpainting}
 \end{eqnarray}
where $\Qc(\cdot)$ denotes the deep convolutional framelet output.
Note that \eqref{eq:deepdenoising} is in fact the existing deep learning denoising algorithm \cite{xie2012image,burger2012image,bae2016beyond},
whereas the first iteration of \eqref{eq:deepinpainting} corresponds to the existing deep learning-based inpainting algorithm \cite{xie2012image,yeh2016semantic}.
This again confirms that deep convolutional framelets is a general deep learning framework. Later, we will provide numerical experiments using \eqref{eq:deepdenoising} and \eqref{eq:deepinpainting} for image denosing
and inpainting applications, respectively.

%
%
%
%
%

\subsection{Bypass-Connection versus  No Bypass-Connection}

The  by-pass connection in Fig.~\ref{fig:resnet}(b) is  closely related to  ResNet in Fig.~\ref{fig:resnet}(a), except that the final level of nonlinearity is not commonly used in by-pass connection and
the by-pass connection is normally placed between the  input and the output of the network.
In practice, researchers empirically determine whether to use bypass connections or not by trial and error.

In order to understand the role of the by-pass connection,  we need to revisit the single layer training procedure in \eqref{eq:newcost}.
Then,   the network with the bypass connection can be written by
$$\Qc(f_{(i)};\Psi,\tilde\Psi,b_{enc},b_{dec})  = \tilde\Qc(f_{(i)};\Psi,\tilde\Psi, b_{enc},b_{dec}) + f_{(i)},$$
where 
$$\tilde\Qc(f_{(i)};\Psi,\tilde\Psi,b_{enc},b_{dec}) =\left(\tilde\Phi C[f_{(i)}]\right) \circledast \nu(\tilde \Psi) + 1_n b_{dec},$$
$$C[f_{(i)}]=
\Phi^\top \left( f_{(i)}\circledast \overline \Psi  +1_n b_{enc}^\top \right).$$
Next,  note that $f_{(i)}$ is contaminated with artifacts so that  it can be written by
$$f_{(i)} = f_{(i)}^*+h_{(i)}, $$
where $h_{(i)}$ denotes the noise components and $f_{(i)}^*$ refers to the noise-free ground-truth.
Therefore, 
the network training \eqref{eq:newcost} using the by-pass connection can be equivalently written by
\begin{eqnarray}\label{eq:training2}
 \min_{( \Psi, \tilde\Psi,b_{enc},b_{dec})}  \sum_{i=1}^N\left\|h_{(i)}+ \tilde\Qc(f_{(i)}^*+h_{(i)};\Psi,\tilde\Psi,b_{enc},b_{dec})\right\|^2
 \end{eqnarray}
Therefore, if we can find an encoder filter $\overline \Psi $ such that
it approximately annihilates the true signal $f_{(i)}^*$, i.e. 
\begin{eqnarray}\label{eq:fanal}
 f_{(i)}^*\circledast \overline \Psi \simeq 0   \  , 
\end{eqnarray}
then we have
\begin{eqnarray*}
 C[f_{(i)}^*+h_{(i)}] &=& \Phi^\top \left( (f_{(i)}^*+h_{(i)})\circledast \overline \Psi  +1_n b_{enc}^\top \right) \\
 &\simeq & \Phi^\top \left( h_{(i)}\circledast \overline \Psi  +1_n b_{enc}^\top \right) \\
 &=& C[h_{(i)}]
 \end{eqnarray*}
With this filter $\overline\Psi$,  the decoder filter $\tilde\Psi$ can be designed such that
\begin{eqnarray*}
\left(\tilde\Phi C[h_{(i)}]\right) \circledast \nu(\tilde \Psi) 
&=&\left(\tilde\Phi   \Phi^\top \left( (h_{(i)})\circledast \overline \Psi  \right)\right) \circledast \nu(\tilde \Psi) \\
&=& h_{(i)}\circledast \overline \Psi  \circledast \nu(\tilde \Psi) \\
&\simeq& - h_{(i)} \  ,
\end{eqnarray*}
by minimizing the cost \eqref{eq:training2}.
Thus, our deep convolutional framelet with a by-pass connection can  recover the artifact signal $h_{(i)}$ (hence, it recovers $f_{(i)}^*$ by subtracting it from $f_{(i)}$).
 In fact, \eqref{eq:fanal} is equivalent to the filter condition in \eqref{eq:resnet_insuf} in ResNet which spans the minimum singular vector 
 subspace.

Using the similar argument, we can see that if the encoder filter annihilates the artifacts, i.,e.
\begin{eqnarray}
 h_{(i)}\circledast \overline \Psi \simeq 0  
\end{eqnarray}
then $C[f_{(i)}^*+h_{(i)}] \simeq C[f_{(i)}^*]$
and our deep convolutional framelet {\em without} the by-connection can recover the ground-truth signal $f_{(i)}^*$ 
i.e.
$\Qc(f_{(i)};\Psi,\tilde\Psi) \simeq f_{(i)}^*$
by minimizing the cost \eqref{eq:training2}.

In brief,   if the true underlying signal has lower dimensional structure than the artifact,
then  the annihlating filter relationship in \eqref{eq:fanal} is more easier to achieve \cite{vetterli2002sampling}; thus,  the neural network with by-pass connection achieves better performance.
On the other hand, if the artifact has lower dimensional structure,  a neural network without by-pass connection is better.
This coincides with many empirical findings. For example, in image denoising \cite{kang2017wavelet,zhang2016beyond}  or streaking artifact problems \cite{han2017framing,jin2016deep},
a residual network works better, since the artifacts are random and have complicated distribution.
In contrast,  to remove the cupping artifacts 
 in the X-ray interior tomography problem,  a CNN without skipped
connection is better, since the cupping artifacts are usually smoothly varying \cite{han2017interior}.

\section{Multi-Resolution Analysis via Deep Convolutional Framelets}

In deep convolutional framelets,  for a given non-local basis, 
the local  convolution filters are learnt to give the best shrinkage behaviour.
Thus,
non-local basis $\Phi$ is an important design parameter that controls the performance.
In particular, the energy compaction
property for the deep convolutional framelets is significantly affected by $\Phi$.
Recall that   the SVD  basis for the Hankel matrix results in the best energy compaction property; however, 
the SVD basis varies depending on input signal type so that we cannot use the same basis for various input data.

Therefore, we should choose an analytic non-local  basis $\Phi$ such that it can approximate the SVD basis and result in good energy compaction property.
Thus,   wavelet is one of the preferable choices  for piecewise continuous signals and images \cite{daubechies1992ten}.
Specifically, in wavelet basis, the standard  pooling and unpooling networks are used  as low-frequency path of wavelet transform, but there exists additional high-frequency paths from wavelet transform.
Another important motivation for multi-resolution analysis of convolutional framelets  is the exponentially large receptive field.
For example,
Fig. \ref{fig:receptive_field} compares  the network depth-wise effective receptive field of a multi-resolutional
network with pooling  against that of  a baseline network without pooling layers.
With the same size convolutional filters, the effective receptive field is enlarged in the network with pooling layers.
Therefore, our multi-resolution analysis (MRA) is indeed derived to supplement the enlarged receptive field from
pooling layers with the fine detailed processing using high-pass band convolutional framelets.

   \begin{figure}[!hbt]
    \centerline{\includegraphics[width=7cm]{./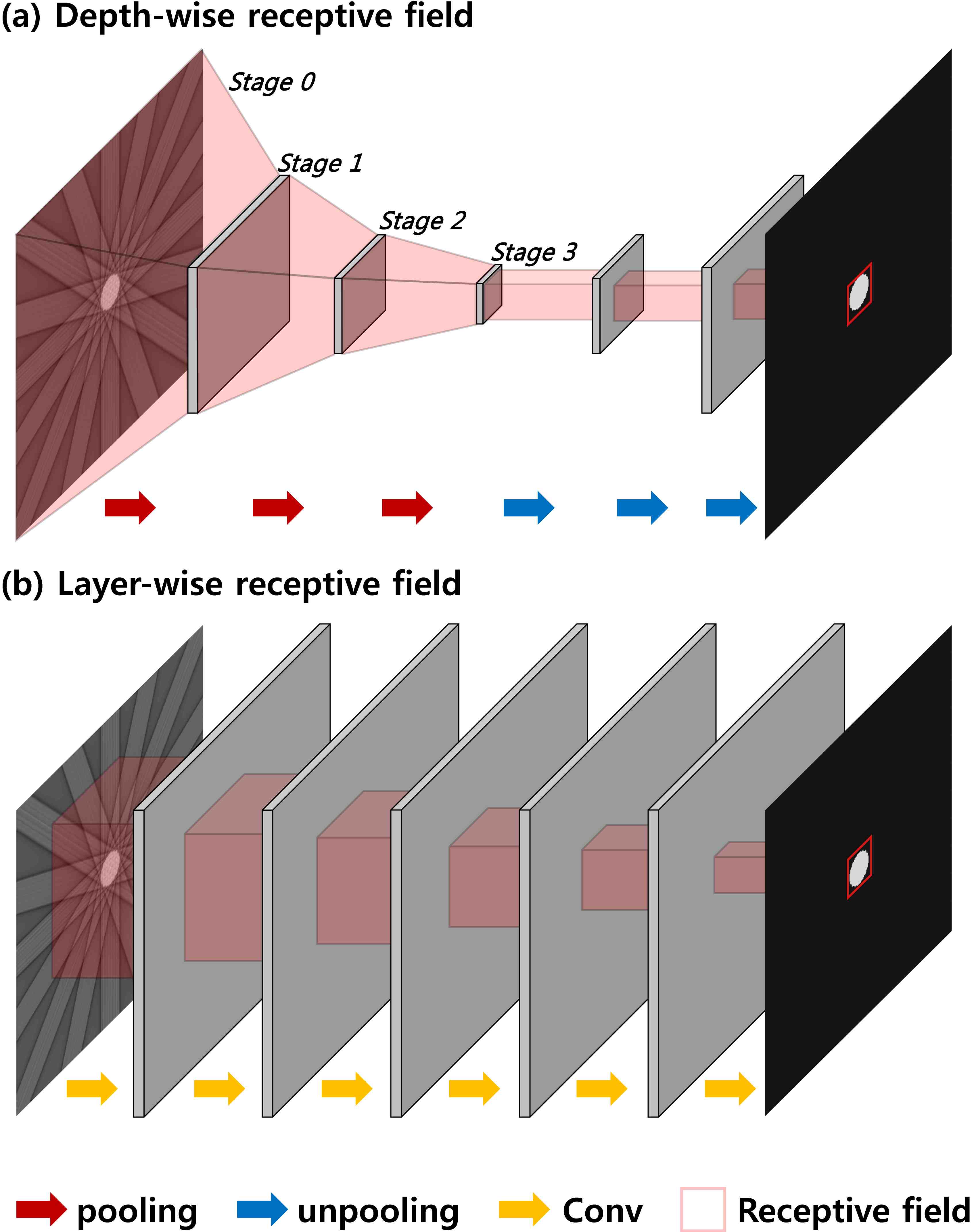}}
    \caption{Effective receptive field comparison. (a)  Multi-resolution network,  and (b) CNN without pooling.}
    \label{fig:receptive_field}
\end{figure}

\subsection{Limitation of U-Net}
\label{sec:limitation}


Before we explain our multi-resolution deep convolutional framelets,  we first discuss the limitations of the popular multi-resolution
deep learning architecture called U-net \cite{ronneberger2015u}, which
 is a composed of encoder and decoder network with a skipped connection.
The U-Net utilizes the pooling and unpooling as shown in Fig.~\ref{fig:proposed}(a) to obtain the exponentially large receptive field. 
%
%
Specifically,   the  average and max pooling operators $\Phi_{ave}, \Phi_{max} \in \Rd^{n\times \frac{n}{2}}$  for $f\in \Rd^n$ used in U-net is defined as follows:
\begin{eqnarray}
\Phi_{ave}= \frac{1}{\sqrt{2}} \begin{bmatrix} 1 & 0 & \cdots & 0 \\ 1 & 0 & \cdots & 0 \\  0 & 1 & \cdots & 0 \\ 
 0 & 1 & \cdots & 0 \\  \vdots & \vdots & \ddots & \vdots \\ 0 & 0 & \cdots & 1  \\ 0 & 0 & \vdots & 1 \end{bmatrix}  \label{eq:apool}
 &,&
 \Phi_{max}=  \begin{bmatrix} b_1 & 0 & \cdots & 0 \\ 1-b_1 & 0 & \cdots & 0 \\  0 & b_2 & \cdots & 0 \\ 
 0 & 1-b_2 & \cdots & 0 \\  \vdots & \vdots & \ddots & \vdots \\ 0 & 0 & \cdots & b_{\frac{n}{2}}  \\ 0 & 0 & \vdots & 1-b_{\frac{n}{2}} \end{bmatrix}
 \end{eqnarray}
where $\{b_i\}_i$ in max pooling are random $(0,1)$ binary numbers that are determined by the signal statistics.
We can easily see that the columns of  max pooling or average pooling are orthogonal to each other; however, 
it does not constitute a basis because it  does not span $\Rd^n$.
Then, what does this network perform?

Recall that for the case of average pooling,  the unpooling layer $\tilde \Phi$ has the same form as the pooling, i.e. $\tilde \Phi =\Phi$.
In this case, under the frame condition for the local bases $\Psi\tilde \Psi^{\top}=I_{d\times d}$,
the signal after pooling and unpooling becomes:
\begin{eqnarray*}
\hat f &=& \hank_d^\dag\left( \Phi ( \Phi^{\top}\hank_d(f)\right) = \Phi  \Phi^{\top}f
\end{eqnarray*}
which is basically a low-pass filtered signal and the detail signals are lost.
To address this limitation and retain the fine detail,
U-net has by-pass connections and concatenation layers as shown in Fig.~\ref{fig:proposed}(a).
%
Specifically, combining the low-pass and by-pass connection, the augmented convolutional framelet coefficients $C_{aug}$ can be  represented by
\begin{eqnarray}\label{eq:Y}
C_{aug} &=& \Phi_{aug}^\top \hank_d(f)\Psi = \Phi_{aug}^\top  (f\circledast \overline\Psi)  = \begin{bmatrix} C \\ S \end{bmatrix} 
\end{eqnarray}
where 
\begin{eqnarray}\label{eq:A}
\Phi_{aug}^\top := \begin{bmatrix} I   \\ \Phi^\top \end{bmatrix}, \quad& C := f\circledast \Psi, & \quad S := \Phi^\top  (f\circledast  \overline \Psi)
\end{eqnarray}
%
%
%
After unpooling, the low-pass branch signal becomes $\Phi S = \Phi\Phi^\top (f\circledast  \overline \Psi)$, so the
signals at the concatenation layer is then given by
\begin{eqnarray}
W  &=& 
 \begin{bmatrix} \hank_d(f)\Psi  &  \Phi\Phi^\top \hank_d(f)\Psi \end{bmatrix} = \begin{bmatrix} f\circledast \overline \Psi   & \Phi\Phi^\top( f\circledast \overline \Psi   ) \end{bmatrix}  
\end{eqnarray}
where the first element in $W$ comes from the by-pass connection.
The final step of recovery can be then represented by:
\begin{eqnarray}
\hat f &=& \hank_d^\dag \left(W \begin{bmatrix} \tilde\Psi_1^\top \\ \tilde\Psi_2^\top \end{bmatrix} \right)  \notag \\
&=&  \hank_d^\dag (\hank_d(f)\Psi\tilde\Psi_1^\top) + \hank_d^\dag\left(  \Phi\Phi^\top \hank_d(f)\Psi\tilde\Psi_2^\top\right)  \notag \\
&=& \frac{1}{d} \sum_{i=1}^q \left( f \circledast \overline\psi_i \circledast \tilde\psi_i^1 +
  \Phi\Phi^\top(f \circledast \overline\psi_i) \circledast \tilde\psi_i^2\right) \label{eq:fright}
\end{eqnarray}
where  
the last equality comes from \eqref{eq:lowrank1} and \eqref{eq:recon1} in  Lemma~\ref{lem:calculus}.
Note that  this does not guarantee the PR because the low frequency component
$ \Phi\Phi^\top(f\circledast \psi_i)$ is contained  in both terms of \eqref{eq:fright}; so the low-frequency component is overly emphasized, which is believed to be the main source
of smoothing.

\subsection{Proposed multi-resolution analysis}

To address the limitation of U-net, here we propose a novel multi-resolution analysis using wavelet non-local basis.
As discussed before,  at the first layer, we are   interested in learning  $\Psi^{(1)}$  and $\tilde\Psi^{(1)}$  such that 
\begin{eqnarray*}
\hank_{d_{(1)}}(f)= \Phi^{(1)}C^{(1)}\tilde \Psi^{(1)\top} ,\quad \mbox{where} \quad C^{(1)}:=\Phi^{(1)\top}\hank_{d_{(1)}}(f) \Psi^{(1)}
\end{eqnarray*}
For MRA,  we  decompose the nonlocal
orthonormal basis $\Phi^{(1)}$ into the low and high frequency subbands, i.e.
$$\Phi^{(1)} = \begin{bmatrix} \Phi_{low}^{(1)}  & \Phi_{high}^{(1)} \end{bmatrix} \ .$$
For example, if we use Haar wavelet,  
the first layer operator $\Phi_{low}^{(1)},\Phi_{high}^{(1)} \in \Rd^{n\times \frac{n}{2}}$  are given by
$$\Phi_{low}^{(1)}= \frac{1}{\sqrt{2}} \begin{bmatrix} 1 & 0 & \cdots & 0 \\ 1 & 0 & \cdots & 0 \\  0 & 1 & \cdots & 0 \\ 
 0 & 1 & \cdots & 0 \\  \vdots & \vdots & \ddots & \vdots \\ 0 & 0 & \cdots & 1  \\ 0 & 0 & \vdots & 1 \end{bmatrix} ,\quad
 \Phi_{high}^{(1)}= \frac{1}{\sqrt{2}} \begin{bmatrix} 1 & 0 & \cdots & 0 \\ -1 & 0 & \cdots & 0 \\  0 & 1 & \cdots & 0 \\ 
 0 & -1 & \cdots & 0 \\  \vdots & \vdots & \ddots & \vdots \\ 0 & 0 & \cdots & 1  \\ 0 & 0 & \vdots & -1 \end{bmatrix}
  $$
  Note that $\Phi_{low}^{(1)}$ is exactly the same as the average pooling operation in \eqref{eq:apool};
  however, unlike the pooling in U-net,  $\Phi^{(1)} = \begin{bmatrix} \Phi_{low}^{(1)}  & \Phi_{high}^{(1)} \end{bmatrix} $
  now constitutes an orthonormal basis in $\Rd^n$ thanks to $\Phi_{high}^{(1)} $. 
  We also define the approximate signal $C_{low}^{(1)}$ and the detail signal $ C_{high}^{(1)}$:
 \begin{eqnarray*}
 C_{low}^{(1)} &:=&  \Phi_{low}^{(1)\top}\hank_{d_{(1)}}(f) \Psi^{(1)} = \Phi_{low}^{(1)\top} ( f \circledast \overline \Psi^{(1)}) \\
  C_{high}^{(1)} &:=&  \Phi_{high}^{(1)\top}\hank_{d_{(1)}}(f) \Psi^{(1)} = \Phi_{high}^{(1)\top} ( f \circledast \overline \Psi^{(1)}) 
 \end{eqnarray*}
  such that
 \begin{eqnarray*}
 C^{(1)} = \Phi^{(1)\top}\hank_{d_{(1)}}(f) \Psi^{(1)} =  
  \begin{bmatrix} C_{low}^{(1)} \\ C_{high}^{(1)} \end{bmatrix}
 \end{eqnarray*}
 Note that this operation corresponds to the local filtering followed by non-local basis matrix multiplication as shown in the red block of Fig.~\ref{fig:structure}(a).
Then, at the first layer, we have the following decomposition:
\begin{eqnarray*}
\hank_{d_{(1)}}(f)=
\Phi^{(1)}C^{(1)}\tilde \Psi^{(1)\top} = \Phi_{low}^{(1)}C_{low}^{(1)}\tilde \Psi^{(1)\top} +  \Phi_{high}^{(1)}C_{high}^{(1)}\tilde \Psi^{(1)\top} 
\end{eqnarray*}

  \begin{figure}[!tb] 
\center{\includegraphics[width=16cm]{./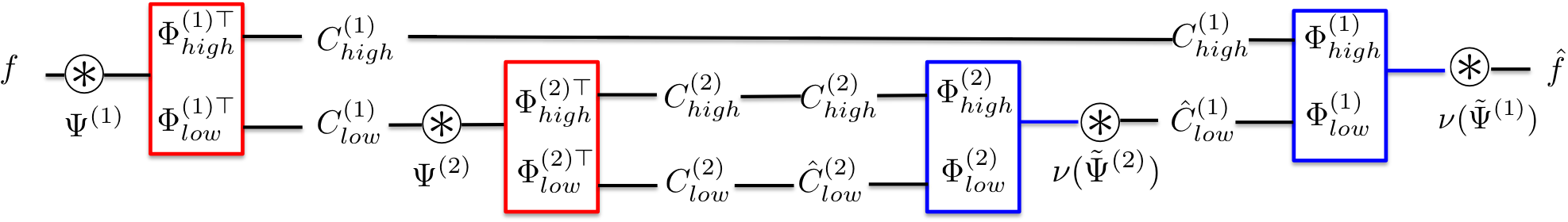}}
\centerline{\mbox{(a)}}
\center{\includegraphics[width=12cm]{./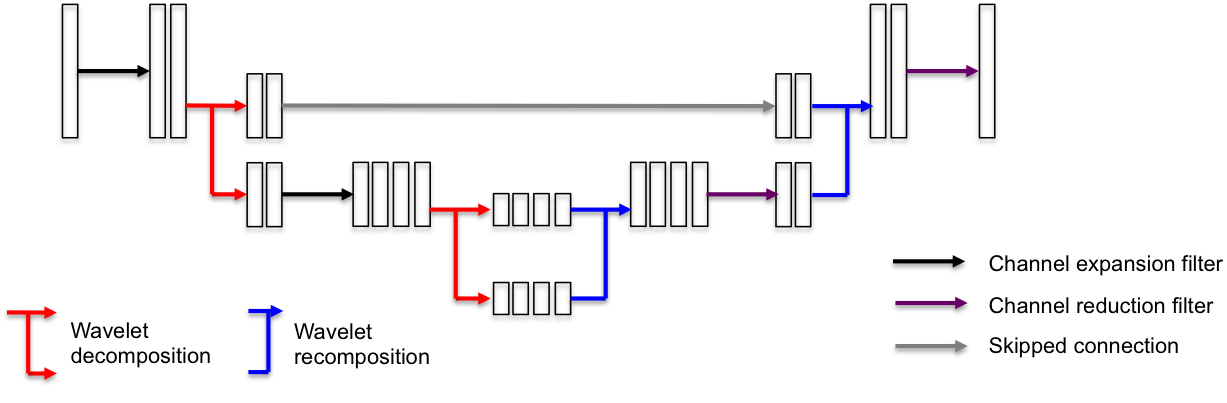}}
\centerline{\mbox{(b)}}
\caption{(a) Proposed multi-resolution analysis of  deep convolutional framelets. Here, $\circledast$ corresponds to the convolution operation;
the red and blue blocks correponds to the encoder and decoder blocks, respectively. (b) An example of multi-resolution deep convolutional framelet decomposition with a length-2 local filters. 
}
\label{fig:structure}
\end{figure}

At the second layer,  we proceed similarly using the approximate signal $C_{low}^{(1)}$. More specifically, we are interested in using orthonormal non-local bases:
$\Phi^{(2)}=\begin{bmatrix} \Phi_{low}^{(2)} & \Phi_{high}^{(2)} \end{bmatrix}$,
where $\Phi_{low}^{(2)}$ and $\Phi_{high}^{(2)}$ transforms the approximate signal $C_{low}^{(1)} \in \Rd^{n/2\times d_{(1)}}$ to  low and high bands, respectively (see Fig.~\ref{fig:structure}(a)):
\begin{eqnarray*}
\hank_{d_{(2)}|p_{(2)}}(C_{low}^{(1)})= \Phi_{low}^{(2)}C_{low}^{(2)}\tilde \Psi^{(2)\top} + \Phi_{high}^{(2)}C_{high}^{(2)}\tilde \Psi^{(2)\top},
\end{eqnarray*}
where $p_{(2)}=d_{(1)}$ denotes the number of Hankel blocks in \eqref{eq:ehank},  $d_{(2)}$ is the second layer convolution filter length, and 
\begin{eqnarray}\label{eq:2nd}
C_{low}^{(2)}  &:=& \Phi_{low}^{(2)\top}\hank_{d_{(2)}|p_{(2)}}(C_{low}^{(1)}) \Psi^{(2)} =  
\Phi_{low}^{(2)\top} (C_{low}^{(1)}\circledast \overline \Psi^{(2)} ) \notag\\
 C_{high}^{(2)} &:=& \Phi_{high}^{(2)\top}\hank_{d_{(2)}|p_{(2)}}(C_{low}^{(1)}) \Psi^{(2)} = \Phi_{high}^{(2)\top} (C_{low}^{(1)}\circledast \overline\Psi^{(2)} )
\end{eqnarray}
Again,  $\Phi_{low}^{(2)}$ corresponds to the standard average pooling operation.
Note that we need a lifting operation to an extended Hankel matrix with $p_{(2)}=d_{(1)}$ Hankel blocks in \eqref{eq:2nd}, because the first layers generates
$p_{(2)}$ filtered output which needs to be convolved with $d_{(2)}$-length filters in the second layer.

Similarly, the approximate signal needs further processing from the following layers.
In general, for $l=1,\cdots,L$,  we have
\begin{eqnarray*}
\hank_{d_{(l)}|p_{(l)}}(C_{low}^{(l-1)})= \Phi_{low}^{(l)}C_{low}^{(l)}\tilde \Psi^{(l)\top} +  \Phi_{high}^{(l)}C_{high}^{(l)}\tilde \Psi^{(l)\top}  ,  
\end{eqnarray*}
where
\begin{eqnarray*}
 C_{low}^{(l)} &:=& \Phi_{low}^{(l)\top}\hank_{d_{(l)}|p_{(l)}}(C_{low}^{(l-1)}) \Psi^{(l)} =  \Phi_{low}^{(l)\top} (C_{low}^{(l-1)} \circledast \overline\Psi^{(l)}) \\
   C_{high}^{(l)}&:=& \Phi_{high}^{(l)\top}\hank_{d_{(l)}|p_{(l)}}(C_{low}^{(l-1)}) \Psi^{(l)} =  \Phi_{high}^{(l)\top} (C_{low}^{(l-1)} \circledast \overline\Psi^{(l)}) 
   \end{eqnarray*}
where  $p_{(l)}$ denotes the dimension of local basis at $l$-th layer.
This results in $L$-layer deep  convolutional framelets using Haar wavelet.  

The multilayer implementation of convolution framelets now results in an interesting encoder-decoder deep network structure
as shown in Fig.~\ref{fig:structure}(a), where the red and blue blocks represent encoder and decoder blocks, respectively.
In addition,  Table~\ref{tbl:dimension} summarizes the dimension of the $l$-th layer matrices.  
More specifically, with the ReLU,  the encoder parts are given as follows:
\begin{eqnarray*}
\begin{cases}
 \quad C_{low}^{(1)} = \rho\left(\Phi_{low}^{(1)\top}	\left( f\circledast \overline \Psi^{(1)}\right)\right), &   C_{high}^{(1)} =\Phi_{high}^{(1)\top}\left( f\circledast \overline \Psi^{(1)}\right) \\
 \quad C_{low}^{(2)} =  \rho\left(\Phi_{low}^{(2)\top}\left(C_{low}^{(1)}\circledast \overline \Psi^{(2)}\right)\right), &   C_{high}^{(2)} =\Phi_{high}^{(2)\top}\left(C_{low}^{(1)}\circledast \overline \Psi^{(2)}\right) \\
 &\vdots  \\
   \quad C_{low}^{(L)} = \rho\left(\Phi_{low}^{(L)\top}\left(C_{low}^{(L-1)} \circledast \overline \Psi^{(L)}\right)\right), &   C_{high}^{(L)} =\Phi_{high}^{(L)\top}\left(C_{low}^{(L-1)} \circledast \overline \Psi^{(L)}\right)
\end{cases}
\end{eqnarray*}
On the other hand,  the decoder part is given by
\begin{eqnarray}\label{eq:decoder}
\begin{cases}
  \hat C_{low}^{(L-1)}
 &=
\rho\left(\hank_{d_{(L)}|p_{(L)}}^{\dag}\left(\Phi^{(L)}\hat C^{(L)}\tilde\Psi^{(L)\top}\right)\right) =
 \rho \left( \left(\Phi^{(L)}\hat C^{(L)}\right) \circledast \nu(\tilde\Psi^{(L)\top})\right)\\
&\vdots\\
\hat C_{low}^{(1)}&=  
\rho\left(\hank_{d_{(2)}|p_{(2)}}^{\dag}\left( \Phi^{(2)}\hat C^{(2)}\tilde\Psi^{(2)\top}\right)\right) =
 \rho \left( \left(\Phi^{(2)}\hat C^{(2)}\right) \circledast \nu(\tilde\Psi^{(2)\top})\right)
\\
 \hat f&=  \hank_{d_{(1)}}^{\dag}\left( \Phi^{(1)}\hat C^{(1)}\tilde\Psi^{(1)\top}\right) =
 \rho \left( \left(\Phi^{(1)}\hat C^{(1)}\right) \circledast \nu(\tilde\Psi^{(1)\top})\right)
\end{cases}
\end{eqnarray}
 where  $\nu(\Psi)$ is defined in \eqref{eq:tau} and we use
 \begin{eqnarray*}\label{eq:Chat}
 \Phi^{(l)}\hat C^{(l)}\tilde\Psi^{(l)\top} &= \Phi_{low}^{(l)}\hat C_{low}^{(l)}\tilde\Psi^{(l)\top}+ \Phi_{high}^{(l)} \hat C_{high}^{(l)}\tilde\Psi^{(l)\top}   \ .
 \end{eqnarray*}
where  we could further process high frequency components as
\begin{eqnarray}
 \hat C_{high}^{(L)} =   C_{high}^{(L)} \circledast  H^{(L)}
\end{eqnarray}
for some filter $H^{(L)}$, and 
$\hat C_{low}^{(L)}$ is the decoded low frequency band
from $(L-1)$-th resolution layer, which can be further processed with additional filters. 

\begin{table}[!hbt]
\begin{center}
\caption{The  nomenclature and dimensions of the matrices at the $l$-th layer MRA using deep convolutional framelet.
Here, $d_{(l)}$ denotes the filter length, and $p_{(l)}=p_{(l-1)}d_{(l-1)}$  with $p_{(0)}=d_{(0)}=1$ refers to the number of Hankel block.}
\begin{tabular}{c|c|c}
\hline
Name &  Symbol & Dimension\\
\hline
Non-local basis & $\Phi^{(l)}$  &  $\frac{n}{2^{l-1}}\times \frac{n}{2^{l-1}}$\\
Low-band non-local basis & $\Phi_{low}^{(l)}$  &  $\frac{n}{2^{l-1}}\times \frac{n}{2^{l}}$\\
High-band non-local basis & $\Phi_{high}^{(l)}$  &  $\frac{n}{2^{l-1}}\times \frac{n}{2^{l}}$\\
Local basis & $\Psi^{(l)}$ & $p_{(l)}d_{(l)}\times p_{(l)}d_{(l)}$ \\
Dual local basis & $\tilde\Psi^{(l)}$ & $p_{(l)}d_{(l)}\times p_{(l)}d_{(l)}$  \\
Signal and its estimate & $C^{(l)}, \hat C^{(l)} $   & $\frac{n}{2^{l-1}} \times p_{(l)}d_{(l)}$ \\
Approximate signal and its estimate & $C_{low}^{(l)},\hat C_{low}^{(l)}$   & $\frac{n}{2^{l}} \times p_{(l)}d_{(l)}$ \\
Detail signal & $C_{high}^{(l)}$   & $\frac{n}{2^{l}} \times p_{(l)}d_{(l)}$ \\
Hankel lifting of low-band signals  &  $\hank_{d_{(l)}|p_{(l)}}(C_{low}^{(l-1)})$  &  $\frac{n}{2^{l-1}} \times p_{(l)}d_{(l)}$ \\
\hline
\end{tabular}
\label{tbl:dimension}
\end{center}
\end{table}

Fig.~\ref{fig:structure}(b) shows the overall structure of multi-resolution analysis with convolutional framelets when length-2 local filters are used.
Note that the structure is quite similar to U-net structure \cite{ronneberger2015u}, except for the high pass filter pass.
This again confirms a close relationship between the deep convolutional framelets and deep  neural networks.


\section{Experimental Results}

In this section, we investigate  various inverse problem applications  of deep convolutional framelets, including image denoising,
 sparse view CT reconstruction, and inpainting. In particular,
we will focus on our novel multi-resolution deep convolutional framelets using Haar wavelets.
For these applications, our multi-resolution deep convolutional framelets should be extended to 2-D structure,  so the architecture
in Fig.~\ref{fig:structure}(b) should be modified.  The resulting  architectures are illustrated in Figs.~\ref{fig:proposed}(b)(c).
The 2-D U-Net, which is used as one of  our reference networks,  is shown in  Fig.~\ref{fig:proposed}(a).
Note that the U-Net in Fig.~\ref{fig:proposed}(a)  is exactly the same as the reconstruction network for inverse problems in
  Jin et al \cite{jin2016deep}  and Han et al \cite{han2016deep}.

More specifically, the networks are composed of convolution layer, ReLU, and skip connection with concatenation. Each stage consists of four $3\times3$ convolution layers  followed by ReLU, except for the final stage and the last layer. The final stage is composed of two $3\times3$ convolution layers for each low-pass and high-pass branch, and the last layer is $1\times1$ convolution layer. In Figs.~\ref{fig:proposed}(b)(c), we use  
the 2-D Haar wavelet decomposition and recomposition instead of the standard pooling and unpooling. 
Thus, at each pooling layer,  the wavelet transform generates four subbands: HH, HL, LH, and LL bands.
Then, the LL band is processed using convolutional layers followed by another wavelet-based pooling layer.
As shown in  Figs.~\ref{fig:proposed}(b)(c), the channels are doubled after wavelet decomposition. Therefore, the number of convolution kernels increases from 64 in the first layer to 1024 in the final stage. 
In Fig.~\ref{fig:proposed}(b)
we have additional filters for the high-pass branches of the highest  layer,
whereas  Fig.~\ref{fig:proposed}(c) only has skipped connection.
The reason why we do not include an additional  filter in Fig.~\ref{fig:proposed}(c) is to verify that the improvement  over U-net
is not from the additional filters in the high-pass bands but rather comes from wavelet-based non-local basis.
For a fair comparison with  U-net structure  in Fig.~\ref{fig:proposed}(a),  our proposed network in Figs.~\ref{fig:proposed}(b)(c) also have concatenation layers that stack all
subband signals before applying filtering.

Although Figs.~\ref{fig:proposed}(b)(c) and Fig.~\ref{fig:proposed}(a) appear similar, there exists fundamental differences due to the
additional high-pass connections. In particular,  in Figs.~\ref{fig:proposed}(b)(c), there exist skipped connections at the
HH, HL, and LH subbands, whereas U-Net structure in Fig.~\ref{fig:proposed}(a) does not have high pass filtering before the bypass connection.
Accordingly,  as explained in Section~\ref{sec:limitation}, the U-Net does not satisfy the frame condition, emphasizing the low-pass components.
On the hand, our networks in Figs.~\ref{fig:proposed}(b)(c) do satisfy the frame condition so we conjecture that high pass signals can be better
recovered by the proposed network. Our numerical results in the following indeed provide empirical evidence of our  claim.

  \begin{figure}[!hbt] 
\center{\includegraphics[width=14cm]{./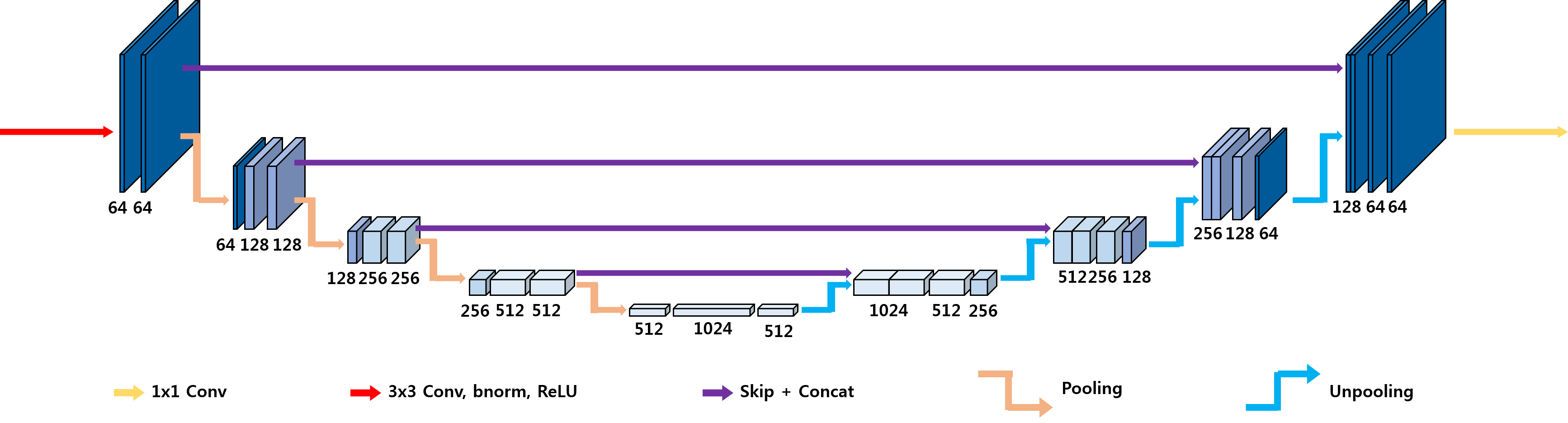}}
\centerline{\mbox{(a)}}
\center{\includegraphics[width=14cm]{./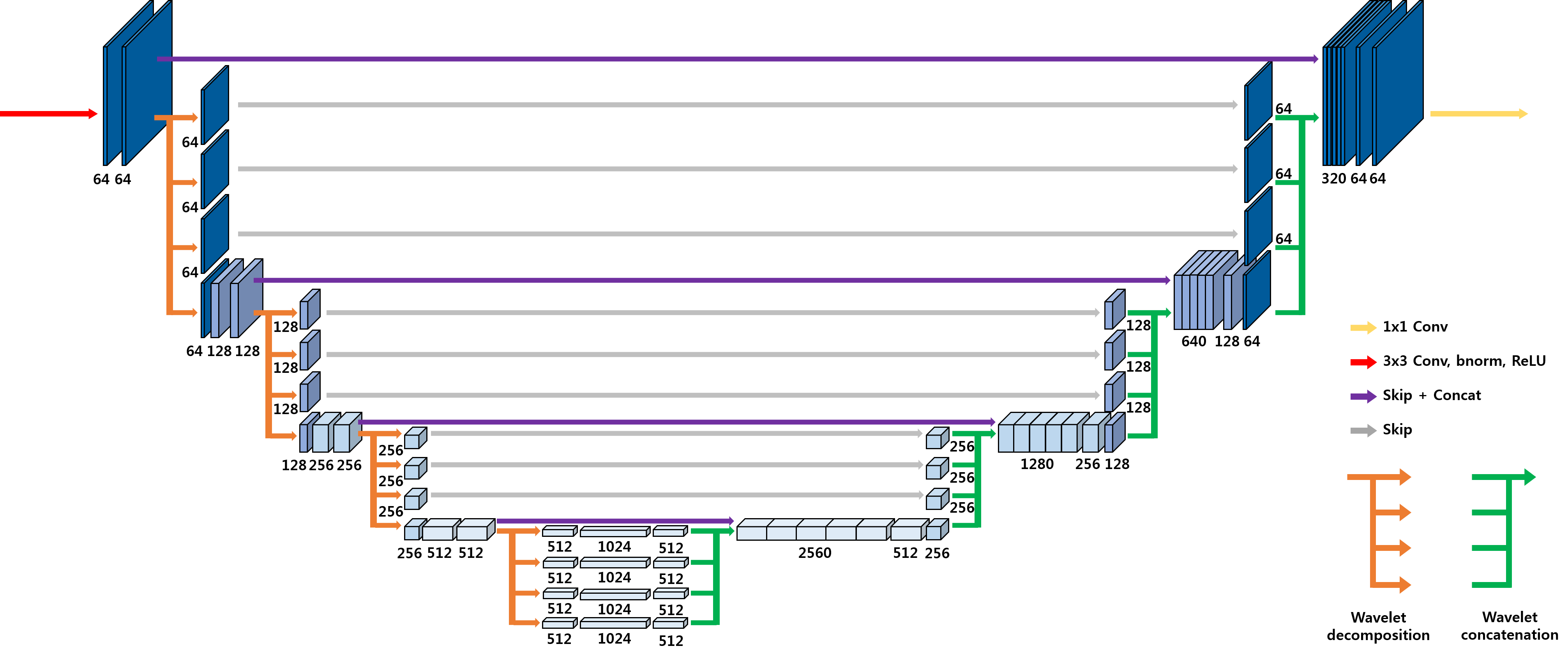}}
\centerline{\mbox{(b)}}
\center{\includegraphics[width=14cm]{./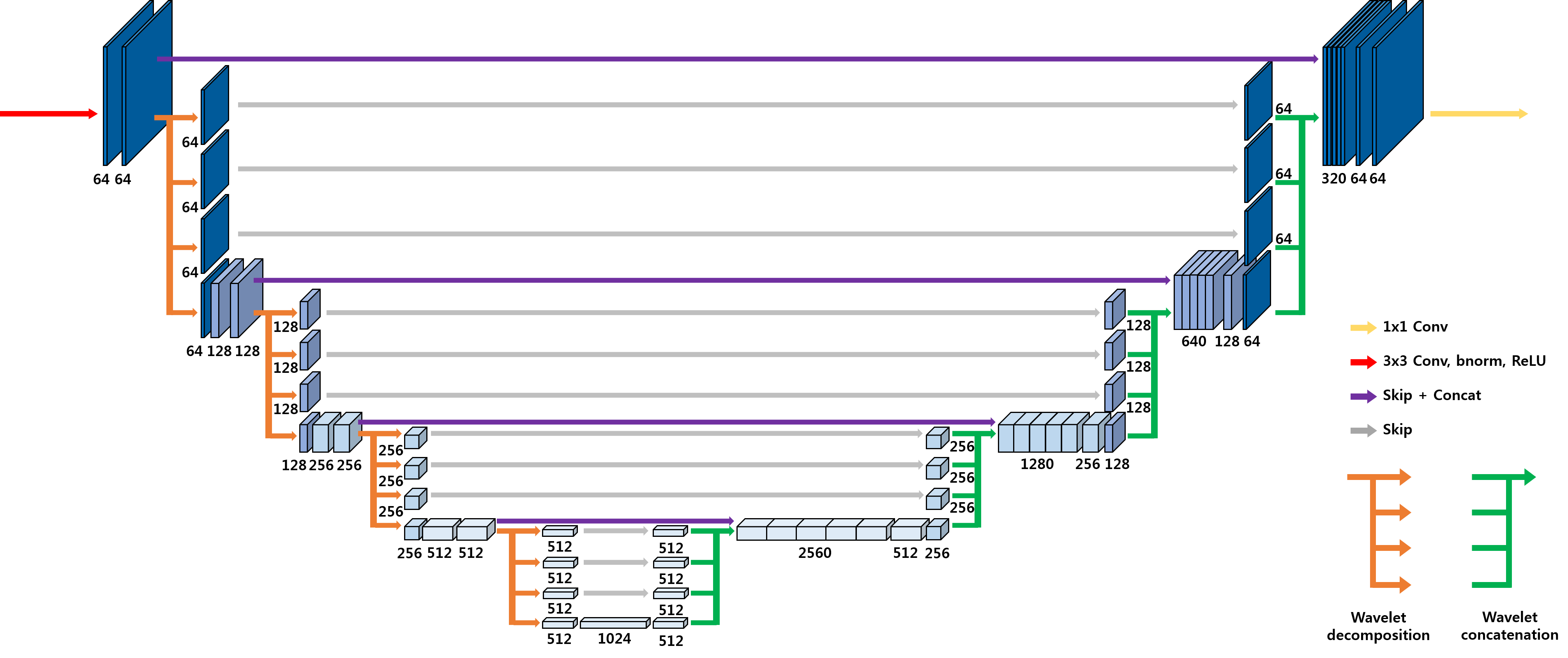}}
\centerline{\mbox{(c)}}
\caption{(a) U-Net structure used in \cite{jin2016deep}, which is used for our comparative studies. Proposed multi-resolution deep convolutional framelets structures
  for (b) our denoising, and in-painting experiments, and  for (c) sparse-view CT reconstruction, respectively.
}
\label{fig:proposed}
\end{figure}

\subsection{Image denoising}
\label{subsec:denoising}
Nowadays, deep CNN-based algorithms have achieved great performance in image denoising \cite{bae2016beyond, zhang2017beyond}. In this section, we will therefore show that the proposed multi-resolution deep convolutional framelet outperforms the standard U-Net  in denoising task.
Specifically, 
 the proposed network and other networks were trained to learn the noise pattern similar to the existing work \cite{zhang2017beyond}.  Then, the noise-free image can be obtained by subtracting the estimated noises.

For training and validation, DIVerse 2K resolution images (DIV2K) dataset \cite{Agustsson_2017_CVPR_Workshops} was used to train the proposed network
in Fig.~\ref{fig:proposed}(b). Specifically, 800 and 200 images from the dataset were used for training and validation, respectively.  The noisy input images were generated by adding Gaussian noise of $\sigma = 30$. To train the network with various noise patterns, the Gaussian noise was re-generated in every epoch during training. 

The proposed network was trained by Adam optimization \cite{kingma2014adam} with the momentum $\beta_1 = 0.5$. The initial learning rate was set to 0.0001, and it was divided in half at every 25 iterations, until it reached around 0.00001. The size of patch was $256\times256$, and 8 mini-batch size was used. The network was trained using 249 epochs. The proposed network was implemented in Python using TensorFlow library \cite{abadi2016tensorflow} and trained using a GeForce GTX 1080. The Gaussian denoising network took about two days for training.

The standard U-net structure in Fig.~\ref{fig:proposed}(a) was used for the baseline network for comparison.
In addition, RED-Net \cite{mao2016image} was used as another baseline network for comparison. For a
fair comparison, RED-Net was implemented using the identical hyperparameters.
More specifically, the number of filter channels at the finest scale was 64 and we use $3\times 3$ filters.
Furthermore, we used 8 mini-batsh size. 
These networks were trained under the same conditions. To evaluate the trained network, we used Set12, Set14, and BSD68, and the peak signal-to-noise ratio (PSNR) and structural similarity (SSIM) index \cite{wang2004image} were calculated for a quantitative evaluation. The PSNR is used to measure the quality of the reconstructed image, which is defined as
\begin{eqnarray}
	PSNR 
		 &=& 20 \cdot \log_{10} \left(\dfrac{MAX_{Y}}{\sqrt{MSE(\widehat{X}), Y)}}\right), 
\label{eq:psnr}		 
\end{eqnarray}
where $\widehat{X}$ and $Y$ denote the reconstructed image and noise-free image (ground truth), respectively. $MAX_{Y}$ is the maximum value of noise-free image.
SSIM is used to measure the similarity between original image and distorted image due to deformation, and it is defined as
\begin{equation}
	SSIM = \dfrac{(2\mu_{\widehat{X}}\mu_{Y}+c_1)(2\sigma_{\widehat{X}Y}+c_2)}{(\mu_{\widehat{X}}^2+\mu_{Y}^2+c_1)(\sigma_{\widehat{X}}^2+\sigma_{Y}^2+c_2)},
\end{equation}
where $\mu_{M}$ is a average of $M$, $\sigma_{M}^2$ is a variance of $M$ and $\sigma_{MN}$ is a covariance of $M$ and $N$. 
To stabilize the division, $c_1=(k_1R)^2$ and $c_2=(k_2R)^2$ are defined in terms of $R$, which is the dynamic range of the pixel values. We followed the default values of $k_1 = 0.01$ and $k_2 = 0.03$.

Table \ref{table:table_denoising} shows the quantitative comparison of denoising performance.
The proposed network was superior to U-Net and RED-Net in terms of PSNR for all test datasets with Gaussian noise $\sigma = 30$. Specifically, edge
structures were smoothed out  by the standard U-Net and RED-Net, whereas
 edge structures were quite accurately recovered by the proposed network, as shown in Fig.~\ref{fig:noise_30}. This is because of the additional high-pass branches in the proposed network,  which make the image detail  well recovered. These results confirm that imposing the frame condition for the non-local basis
is useful in recovering high resolution image as predicted by our theory.

\begin{table}[h]
\renewcommand{\arraystretch}{1.3}
\label{table:table_denoising}
\caption{Performance comparison in the PSNR/SSIM index for different data sets in the noise removal tasks from Gaussian noise with $\sigma=30$.}
\centering
\resizebox{.5\paperwidth}{!}{
\begin{tabular}{|c|c|c|c|c|}
\hline
	Dataset ($\sigma$)	&  Input	& RED-Net \cite{mao2016image} & U-Net & Proposed \\  \hline \hline

Set12 (30)	& 18.7805/0.2942	& 28.7188/0.8194 & 28.9395/0.8118   & \textbf{29.5126}/\textbf{0.8280}  \\ \hline 
Set14 (30)	&	18.8264/0.3299 & 28.4994/\textbf{0.7966} & 27.9911/0.7764   & \textbf{28.5978}/0.7866  \\ \hline
BSD68 (30)	& 18.8082/0.3267 & 27.8420/\textbf{0.7787}	& 27.7314/0.7749  & \textbf{27.8836}/0.7761  \\ \hline

\end{tabular}}
\end{table}

\begin{figure}[t] 
\center{\includegraphics[width=16cm]{./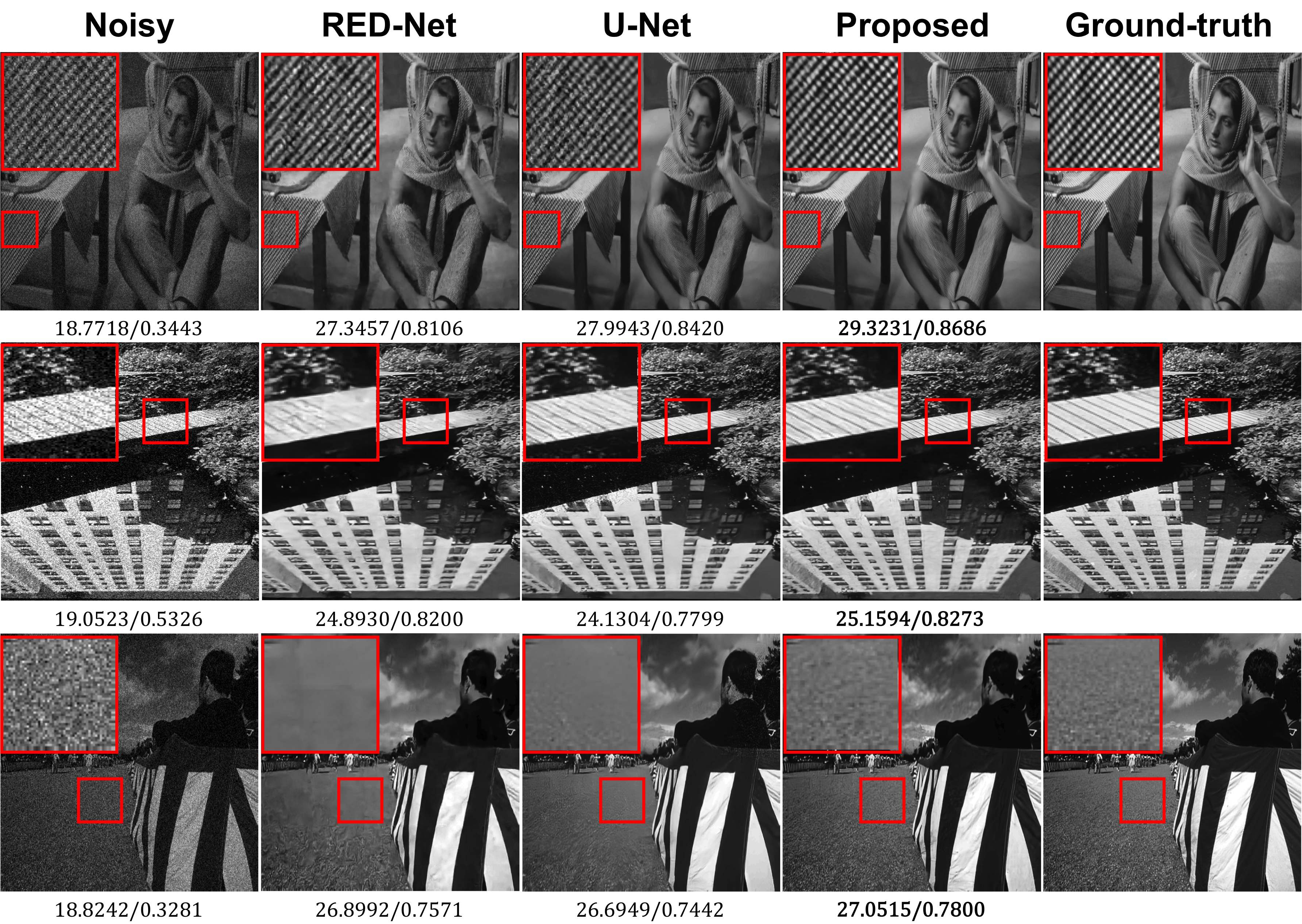}}
\caption{Denoising results for U-Net, RED-Net \cite{mao2016image}, and the proposed deep convolutional framelets from Gaussian noise with $\sigma = 30.$ 
}
\label{fig:noise_30}
\end{figure}

\subsection{Sparse-view CT Reconstruction}

In X-ray CT,
due to the potential risk of radiation exposure, 
the main research thrust is  to reduce the  radiation dose.  Among various approaches for  low-dose CT, sparse-view CT is a recent proposal that reduces the
radiation dose by reducing the number of projection views \cite{sidky2008image}. 
However, due to the insufficient 
projection views,  standard reconstruction using the
 filtered back-projection (FBP) algorithm exhibits severe streaking artifacts that are globally distributed.  Accordingly,  researchers have extensively employed 
compressed sensing approaches   \cite{donoho2006compressed} that minimize  the total variation (TV) or other sparsity-inducing penalties under the data fidelity  \cite{sidky2008image}. 
These approaches are, however, computationally very expensive due to the repeated applications of projection and back-projection during iterative update steps.

Therefore,   the main goal of this experiment is to apply the proposed network for sparse view CT reconstruction such that  it outperforms the
existing approaches in its computational speed as well as reconstruction quality. 
To address this,  our network is trained to learn streaking artifacts as suggested in \cite{han2016deep,han2017framing,jin2016deep}, 
using the new network architecture in Fig.~\ref{fig:proposed}(c).
As a training data, we used the ten patient data provided by AAPM Low Dose CT Grand Challenge (http://www.aapm.org/GrandChallenge/LowDoseCT/). 
The initial images were reconstructed by 3-D CT 2304 projection data. To generate several sparse view images, the measurements were re-generated by ${radon}$ operator in MATLAB. 
The data is composed of 2-D CT projection data from 720 views.
Artifact-free original images were generated by $iradon$ operator in MATLAB using all 720 projection views. 
The input images with streaking artifacts were generated using $iradon$ operator from 60, 120, 240, and 360 projection views, respectively. These sparse view images correspond to each donwsampling factor x12, x6,  x3, and x2.
Then, the network was trained to remove the artifacts.

Among the ten patient data,  eight patient data were used for training, one patient data was used for validation, and a test was conducted using the remaining another patient data. 
This corresponds to  3720 and 254 slices of $512\times 512$ images for the training and validation data, respectively, and 486 slices of $512\times 512$ images for the test data.
The training data was augmented by conducting horizontal and vertical flipping.
%

As for the baseline network for comparison, we use the U-net structure in Fig.~\ref{fig:proposed}(a)   and a single resolution CNN similar to the experimental set-up in  Jin et al \cite{jin2016deep}  and Han et al \cite{han2016deep}. 
The single resolution CNN  has the same architecture with U-net in Fig.~\ref{fig:proposed}(a), except that pooling and unpooling were not used.
All these networks were trained  similarly using the same data set.
For quantitative evaluation, we use  
the normalized mean square error (NMSE) and the peak signal-to-noise ratio (PSNR).

 The proposed network was trained by stochastic gradient descent (SGD).  The regularization parameter was $\lambda = 10^{-4}$. The learning rate was set from $10^{-3}$ to $10^{-5}$ which was gradually reduced at each epoch. The number of epoch was 150. A mini-batch data using image patch was used,  and the size of image patch was $256\times256$. 
The network was implemented using MatConvNet toolbox (ver.24) \cite{vedaldi2015matconvnet} in MATLAB 2015a environment (Mathwork, Natick). We used a GTX 1080 Ti graphic processor and i7-7770 CPU (3.60GHz). The network takes about 4 days for training.
Baseline networks were trained similarly.

\begin{figure*}[!bt]
    \centerline{\includegraphics[width=0.95\linewidth]{./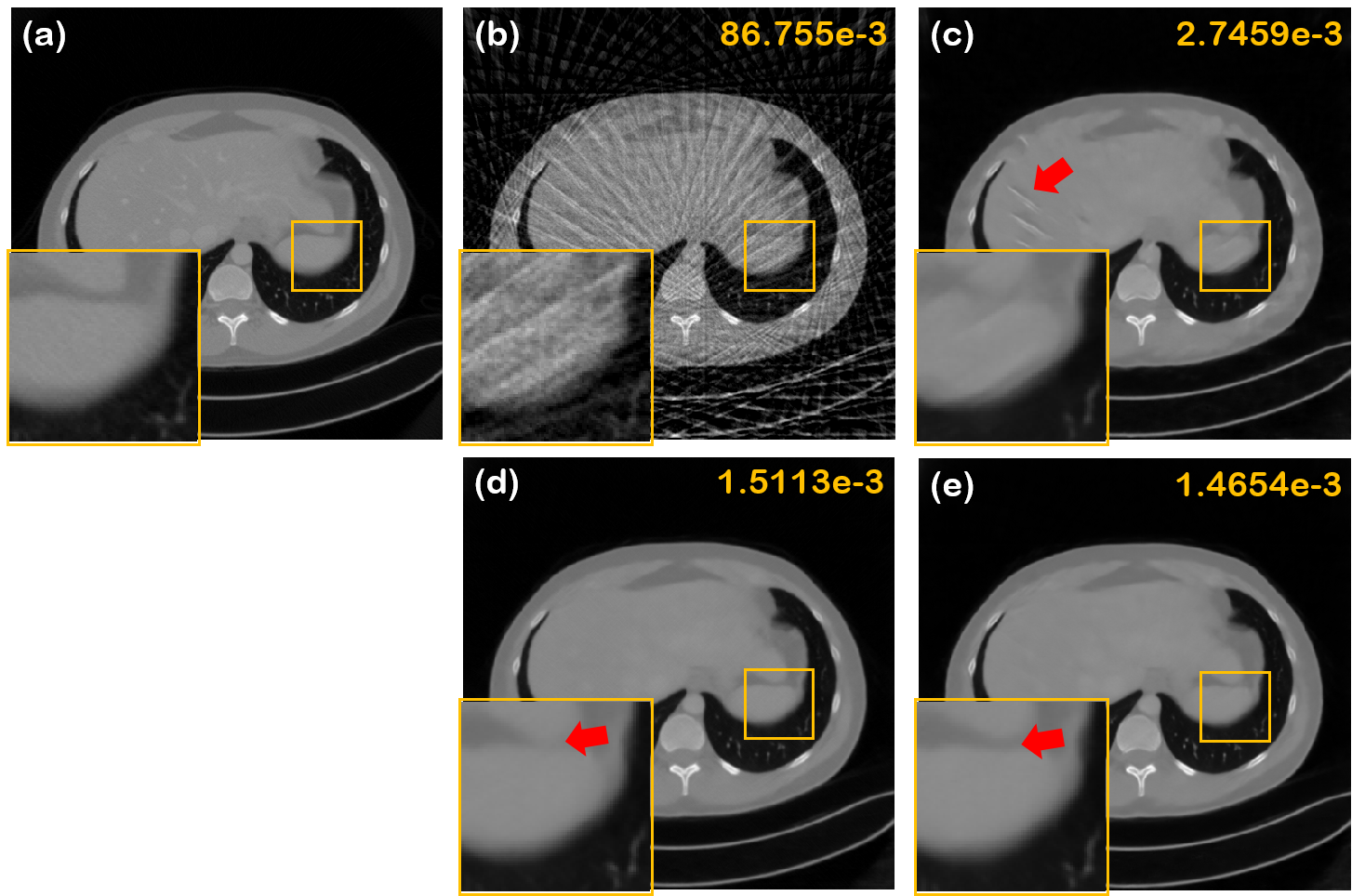}}
    \caption{CT reconstruction result comparison   from 60 views.  The number on the top
    right corner represents the NMSE values, and the red arrow refers to the area of noticeable differences. The yellow boxes denote the zoomed area.  
    FBP reconstruction results from (a) full projection views, and (b) 60 views. Reconstruction results by (c) CNN, (d) U-net, and (e) the proposed multi-resolution deep convolutional framelets.}
    \label{fig:60view}
\end{figure*}

\begin{figure*}[!bt]
    \centerline{\includegraphics[width=0.95\linewidth]{./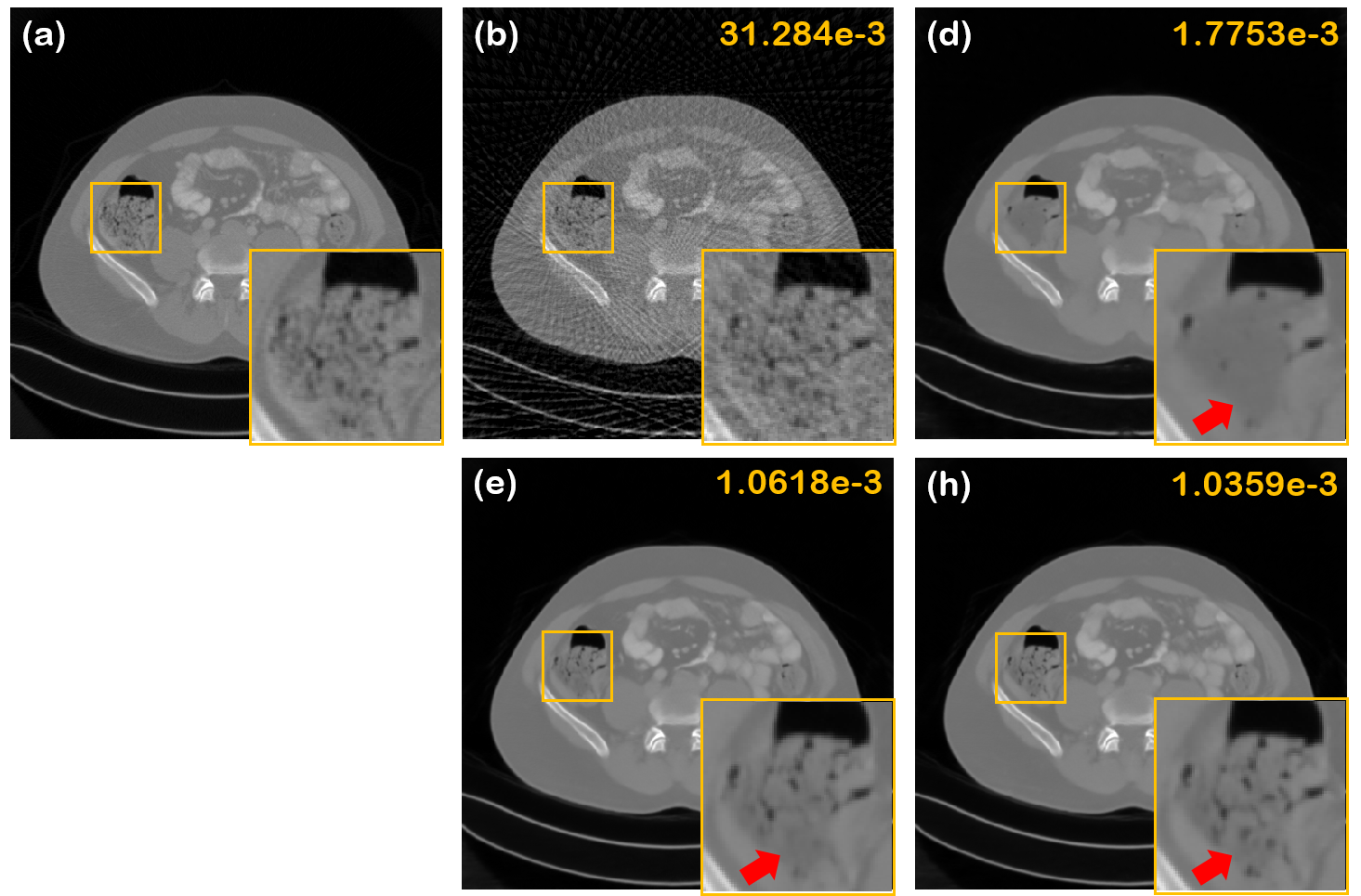}}
    \caption{CT reconstruction result comparison   from 90 views. The number on the top
    right corner represents the NMSE values, and the red arrow refers to the area of noticeable differences. The yellow boxes denote the zoomed area.  
		FBP reconstruction results from (a) full projection views, and (b) 90 views. Reconstruction results by (c) CNN, (d) U-net, and (e) the proposed multi-resolution deep  convolutional framelets.}
    \label{fig:90view}
\end{figure*}

\begin{table}[!b]
\caption{Average PSNR results comparison for  reconstruction results from various projection views and algorithms.
Here, CNN refers to the single resolution network. }
\begin{center}
\begin{tabular}{c|c|c|c|c|c|c}
\hline
\multirow{2}{*}{ PSNR [dB]} & 60 views	& 90 views 	& 120 views & 180 views & 240 views & 360 views\\
					& (x12)			& (x8)			& (x6)			& (x4)			& (x3) 			& (x2)		 \\
\hline\hline
FBP				& 22.2787		&	25.3070		& 27.4840		& 31.8291 	& 35.0178 	& 40.6892	\\
CNN			&	36.7422		& 38.5736		& 40.8814		& 42.1607		& 43.7930		& 44.8450 \\
U-net 		&	38.8122		& 40.4124		& 41.9699		& 43.0939		& 44.3413		& 45.2366	\\
Proposed 	& \textbf{38.9218}		& \textbf{40.5091}		& \textbf{42.0457}		& \textbf{43.1800}		& \textbf{44.3952}		& \textbf{45.2552 } \\
\hline
\end{tabular}
\label{tbl:ct}
\end{center}
\end{table}


Table \ref{tbl:ct} illustrates the average PSNR values for reconstruction results from various number of projection views.
Due to the high-pass branch of the network, the deep convolutional framelets produced consistently improved images quantitatively
across all view downsampling factors.  Moreover, visual improvements from the proposed network are more remarkable.
For example, Fig. \ref{fig:60view}(a)-(e) shows reconstruction results  from 60 projection views.  Due to the severe view downsampling,
the FBP reconstruction result in Fig. \ref{fig:60view}(b) provides severely corrupted images with significant streaking artifacts.
Accordingly,  all the reconstruction results in Fig. \ref{fig:60view}(c)-(e) were not compatible to the full view reconstruction results in Fig. \ref{fig:60view}(a).
In particular, there are significant remaining streaking artifacts for the conventional CNN architecture (Fig. \ref{fig:60view}(c)), which were reduced using U-net as shown in Fig. \ref{fig:60view}(d).  However, as indicated by the arrow, some blurring artifacts were visible in Fig. \ref{fig:60view}(d).
On the other hand, the proposed network removes the streaking and blurring artifact as shown in Fig. \ref{fig:60view}(e).  Quantitative evaluation
also showed that the proposed deep convolutional framelets has the minimum NMSE values.

As for reconstruction results from larger number of projection views,
 Fig. \ref{fig:90view}(a)-(e) show reconstruction results  from 90 projection views.  All the algorithms significantly improved compared to the 60 view reconstruction.
 However, in the reconstruction results by single resolution CNN  in Fig. \ref{fig:90view}(b) and U-net in  Fig. \ref{fig:90view}(c),  the details have disappeared.
 On the other hand, most of the detailed structures were well reconstructed by the proposed deep convolutional
 framelets  as shown in  Fig. \ref{fig:90view}(e). Quantitative evaluation
also showed that the proposed deep convolutional framelets has the minimum NMSE values.
The zoomed area in Fig. \ref{fig:90view}(a)-(e) also  confirmed the findings. The reconstruction result by the deep convolutional
framelets provided very realistic image, whereas the other results are somewhat blurry.

These experimental results clearly confirmed that the proposed network is quite universal in the sense it can be used for various artifact patterns.
This is due to the network structure retaining the high-pass subbands, which automatically adapts the resolutions even though
various scale of image artifacts are present.

\subsection{Image inpainting}

{I}{m}age inpainting is a classical image processing problem whose goal is to estimate the missing pixels in an image.
Image inpainting has
 many scientific and engineering applications.
 Recently, the field of image inpainting has been dramatically changed due to the advances of
CNN-based inpainting algorithms \cite{xie2012image,pathak2016context,yeh2016semantic}.
One of the remarkable aspects of these approaches is the superior performance improvement over the existing methods
in spite of its ultra-fast run time speed.
Despite this stellar performance,  
 the link between  deep learning  and  classical inpainting approaches 
  remains poorly understood.
In this section, 
inspired by the classical frame-based inpainting algorithms  \cite{cai2008framelet,cai2012image,cai2009convergence},
we will show that the CNN-based image inpainting algorithm is indeed the first iteration of 
 { deep convolutional framelet  inpainting}, so  the inpainting performance 
 can be  improved with multiple iterations of inpainting and image update steps using CNN.

%
More specifically, similar to  the classical frame-based inpainting algorithms  \cite{cai2008framelet,cai2012image,cai2009convergence},
we use the update algorithm  in Eq.~\eqref{eq:deepinpainting}  derived from PR condition, which is written again as follows:
\begin{eqnarray}
f_{n+1} = \mu P_\Lambda g + (I-\mu P_\Lambda)  \Qc(f_n) \ , 
\end{eqnarray}
where $\Qc$  is our deep convolutional framelet output.
However, unlike the existing works using tight frames \cite{cai2008framelet,cai2009convergence},  our deep convolutional
framelet does not satisfy the tight frame condition; so we relax the iteration using Krasnoselskii-Mann (KM) method \cite{bauschke2011convex}  as
summarized in Algorithm 1. 


\begin{algorithm}
\caption{Pseudocode implementation.}
\label{alg:Pseudocode}
\begin{algorithmic}[1]
\State Train a deep network $\Qc$ using training data set.
\State  Set $0\leq\mu< 1$ and  $0<\lambda_n<1, \forall n$.
\State Set initial guess of $f_0$ and $f_1$.
\For{$n=1,2, \dots,$ until convergence}
    \State  $q_n:= \Qc(f_n)$.
    \State $\bar f_{n+1} := \mu P_\Lambda g + (I-\mu P_\Lambda) q_n$
    \State $f_{n+1} := f_n + \lambda_n (\bar f_{n+1}- f_n )$
    \EndFor
\end{algorithmic}
\end{algorithm}

\begin{figure*}[!hbt]
\centering
\includegraphics[width=7cm]{./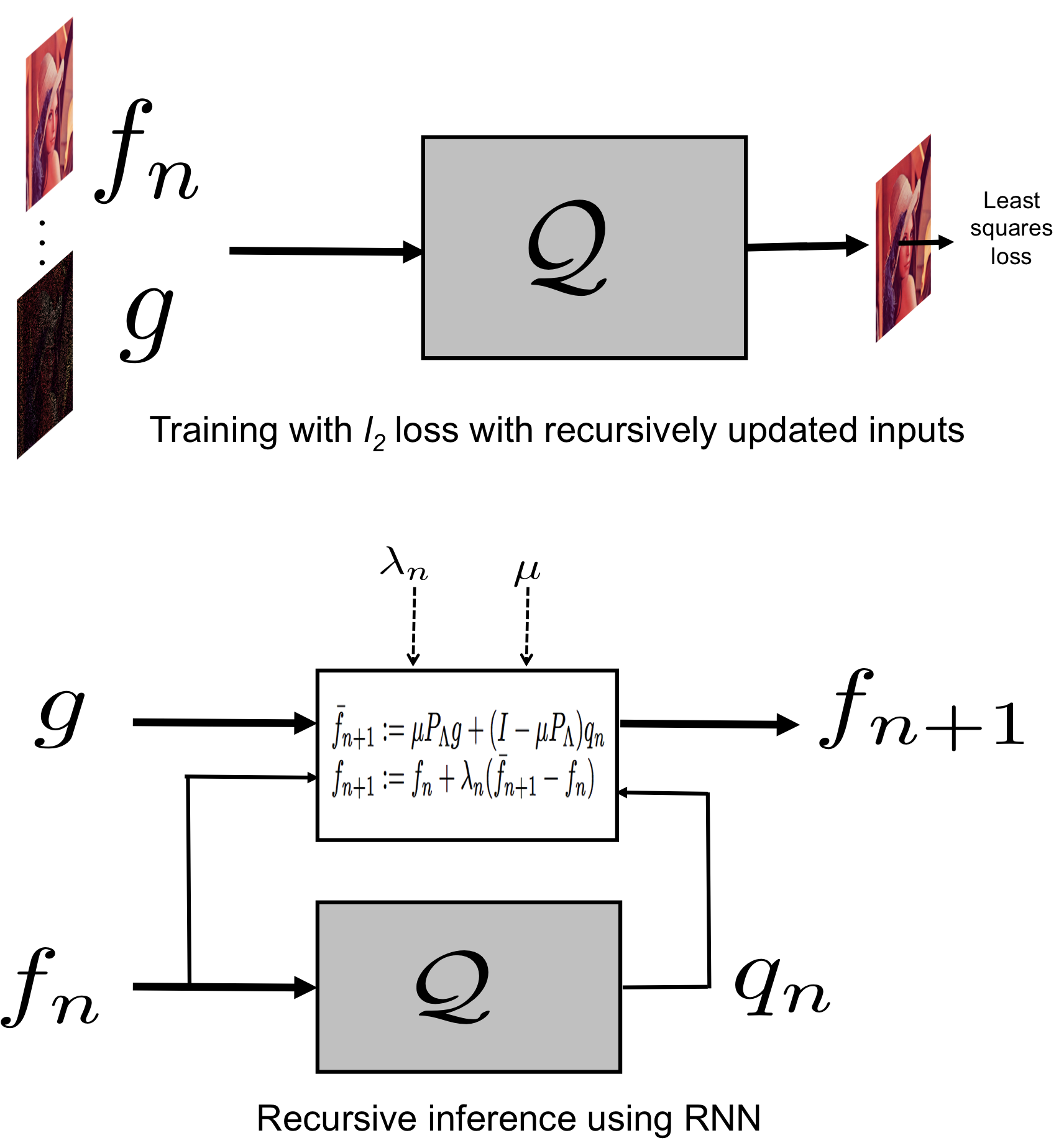}
\caption{Proposed RNN architecture using deep convolutional framelets for training and inference steps.}
\label{fig:RNN}
\end{figure*}

Note that the resulting inpainting algorithm assumes the form of the recursive neural network (RNN), because the CNN
output is used as the input for the CNN for another iteration. The corresponding inference step based on Algorithm 1 is illustrated in Fig.~\ref{fig:RNN}. As for the CNN building block of the proposed RNN,
the multi-resolution deep convolution framelets in Fig.~\ref{fig:proposed}(b) is used.

\begin{figure*}[!tb]
\centering
\includegraphics[width=16cm]{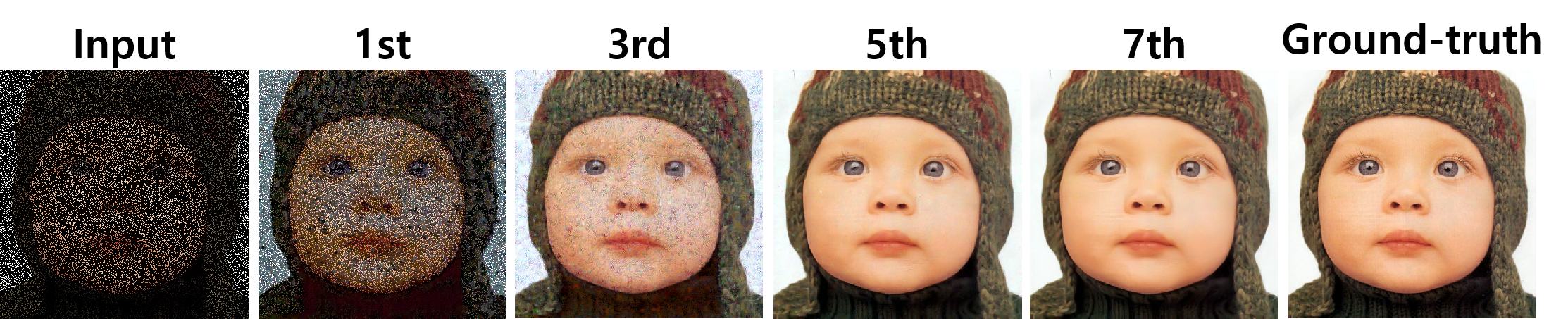}
\caption{The proposed RNN results along iteration.}
\label{fig:rnn_result}
\end{figure*}

\begin{figure}[!tb]
\centering
\includegraphics[width=5cm]{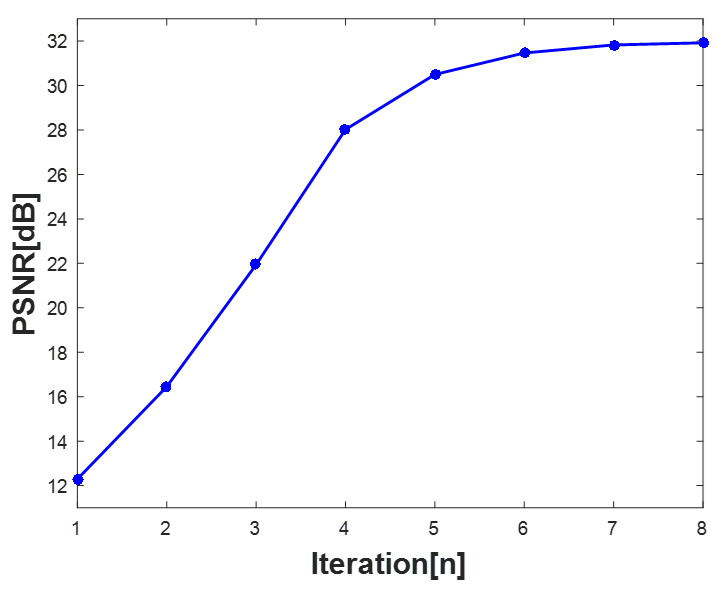}
\caption{PSNR versus iteration for the results in Fig.~\ref{fig:rnn_result}.}
\label{fig:psnr}
\end{figure}

We performed inpainting experiments using randomly sub-sampled images.
We used DIV2K dataset \cite{Agustsson_2017_CVPR_Workshops} for our experiments. 
Specifically,  800 images from the database were used for training,  and 200 images was used for the validation. In addition, the training data was augmented by conducting horizontal and vertical flipping and rotation. For the inpainting task of random sub-sampled images, 75$\%$, 80$\%$, and 85$\%$ of pixels in images were randomly removed from the images
for every other epoch during the training. 
All images were rescaled to have values between 0 and 1. 
For training, Adam optimization \cite{kingma2014adam} with the momentum $\beta_1 = 0.5$ was  used. The learning rate for the generators was set to 0.0001, and it was divided in half every 50 iterations, until it reached around 0.00001. 
The size of patch was $128\times128$.  We used 32 mini-batch size for training of the random missing images. 

Since our network should perform inferences from intermediate reconstruction images,  the network should be trained with respect to the intermediate results.
Therefore, this training procedure is implemented using multiple intermediate results as inputs 
as shown in Fig.~\ref{fig:RNN}.  In particular, we trained the network according to multiple stages. In the stage 1, we trained the network using the initial dataset, $D_{init}$, which is composed of missing images and corresponding label images. After the training of network using $D_{init}$ converged, the input data for network training was replaced with the first inference result $f_1 = \Qc_k(f_0)$ and the label images, where $\Qc_k$ is $k$-th trained network and $f_k$ denotes the $k$-th inference result. That is, at the $k$-th stage, the network was trained to fit the $k$-th inference result $f_k= \Qc_l(f_{k-1})$ to the corresponding label data. 

The proposed network was implemented in Python using TensorFlow library \cite{abadi2016tensorflow} and trained using a GeForce GTX 1080ti. 
The training time of inpainting network for randomly missing data was about six days.


\begin{figure*}[!bt]
\centering
\includegraphics[width=16cm]{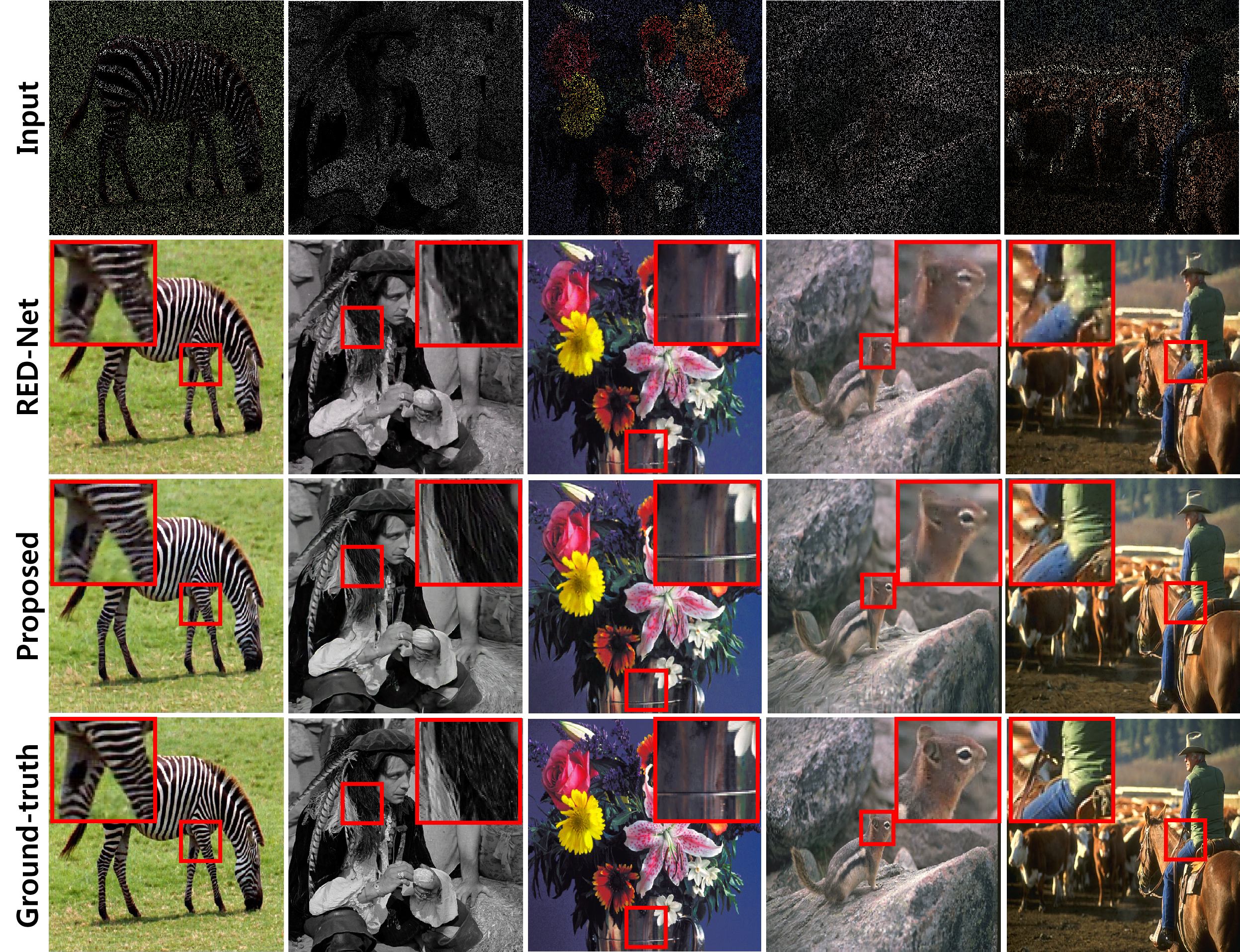}
\caption{Image inpainting results for input images with $\%80$ missing pixels in random locations. }
\label{fig:comp1}
\end{figure*}

To evaluate the trained network, we used Set5, Set14 and BSD100 dataset for testing. Fig.~\ref{fig:rnn_result} shows typical image update from the proposed RNN structure. As iteration goes on, the images are gradually improved, as the CNN works as a
image restoration network during the RNN update.  The associated PSNR graph in Fig.~\ref{fig:psnr} confirms that the algorithm
converges after six RNN update steps.
As a  reference network for comparison, we compared our algorithm with  RED-Net \cite{mao2016image}. For a fair comparison, we trained RED-Net using the same $3\times 3$ filter
 and the same number of channels of 64, which were also used for our proposed method. As shown in Fig.~\ref{fig:comp1}, the proposed method can restore the details and edges of images much better than RED-Net. 
 In addition, the PSNR results in Table~\ref{table:table_inpaint} for 80\% missing pixel images
  indicated that the proposed method outperformed RED-Net in Set5 and Set14 data and was comparable for BSD100 datasets.

\begin{table}[!bt]
\renewcommand{\arraystretch}{1.3}
\label{table:table_inpaint}
\caption{Performance comparison in terms of PSNR/SSIM index for various dataset in the inpainting task for 80$\%$ missing images.}
\centering
\resizebox{.5\paperwidth}{!}{
\begin{tabular}{|c|c|c|c|c|}
\hline
	Dataset 	&  Input	& RED-Net & Proposed \\  \hline \hline

Set5	& 7.3861/0.0992	 &	28.0853/0.9329 & \textbf{28.4792}/\textbf{0.9368}\\ \hline
Set14 	& 6.7930/0.0701  &	24.4556/0.8545 & \textbf{25.7447}/\textbf{0.8649}\\ \hline
BSD100  & 7.8462/0.0643	 &  \textbf{25.5847}/0.8510& 25.4588/\textbf{0.8530}\\ \hline

\end{tabular}}
\end{table}

While most of the existing inpainting networks are based on
feed-forward network  \cite{xie2012image,pathak2016context,yeh2016semantic}, our theory of the deep convolutional framelets leads to a recursive neural network (RNN) that gradually improved  the image with
CNN-based image restoration, making the algorithm more accurate for various missing patterns.
These results confirmed that the theoretical framework of deep convolutional framelets is promising in designing new deep learning
algorithms for inverse problems.

\section{Discussions}

We also investigate whether our theory can answer current theoretical issues and intriguing empirical findings from machine learning community.
Amazingly, our theoretical framework gives us many useful insights.

\subsection{Low rankness of the extended Hankel matrix}

In our theory, we showed that  deep convolutional framelet is closely related to the Hankel matrix decomposition, so the
multi-layer implementation of the convolutional framelet refines the bases such that maximal energy compaction can be achieved
using a deep convolutional framelet expansion.
In addition, we  have previously shown that with insufficient filter channels, the rank structure of the extended Hankel matrix in successive layers is bounded by that of the previous layers.
This perspective suggests that the energy compaction happens across layers, and this can be
investigated by the singular value spectrum of the extended Hankel matrix.

Here, we  provide  empirical evidence that the singular
value spectrum of the extended  Hankel matrix is compressed by going through more convolutional layers. For this experiment, we used
 the single-resolution CNN with  encoder-decoder architecture as shown in Fig. \ref{fig:denoise_single_resolution} for the sake of simplicity. Hyperparameters and dataset for the training were same as those introduced in  our denoising experiments.
Since energy compaction occurs at the convolutional framelet coefficients,
we have  considered only the encoder part corresponding  to the network from the first module to the fifth module. 
As shown in Fig. \ref{fig:rank}, the singular value spectrum of  the extended Hankel matrix is compressed
by going through the layer.  This confirms our conjecture  that CNN
is closely related to the low-rank Hankel matrix approximation.

\begin{figure}[t] 
\center{\includegraphics[width=14cm]{./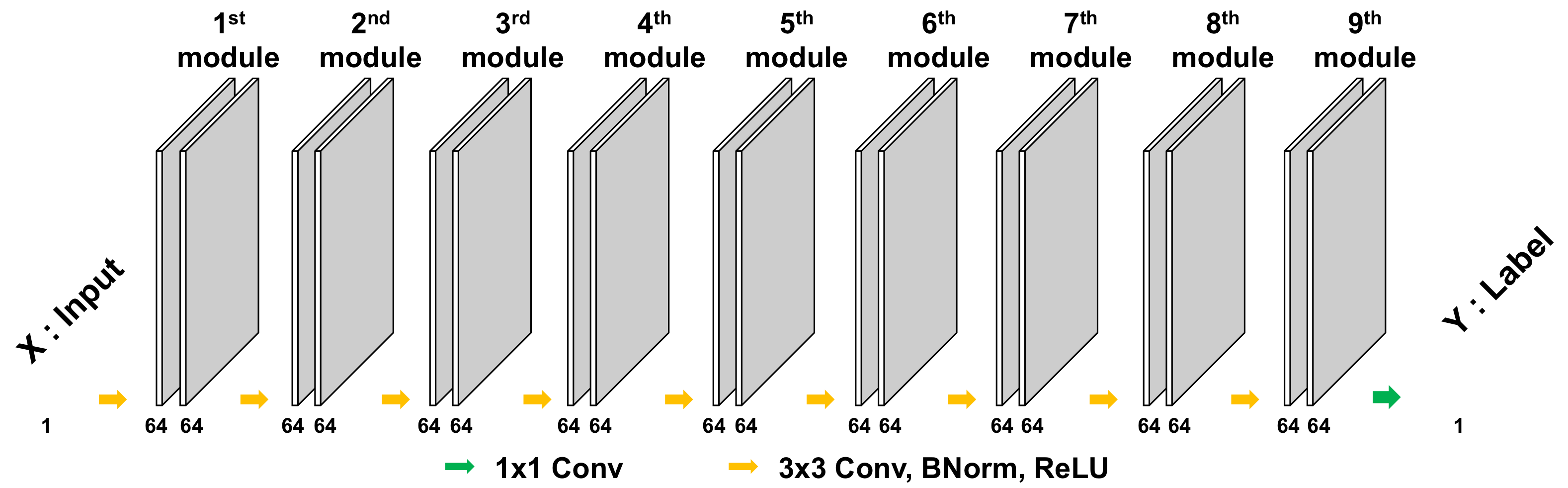}}
\caption{Single-resolution CNN architecture for image denoising.}
\label{fig:denoise_single_resolution}
\end{figure}

\begin{figure}[h] 
\center{\includegraphics[width=7cm]{./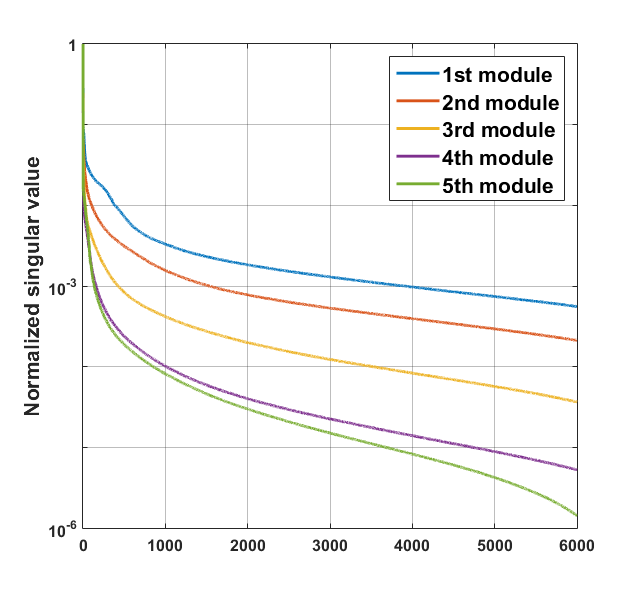}}
\caption{Normalized singular value plot of Hankel matrix constructed using the feature map from the $1st$ module to the $5th$ module.  
}
\label{fig:rank}
\end{figure}

\subsection{Insights on classification networks}

While our mathematical theory of deep convolutional framelet was derived for inverse problems, there are many important implications of our finding to general deep learning networks.
For example, we conjecture that the classification network corresponds to the encoder part of our deep convolutional framelets.  More specifically, the encoder part of the deep convolutional
framelets works for the energy compaction, so the classifier attached to the encoder can discriminate the input signals based on the  compressed energy
distributions. This is similar with the classical classifier design, where the feature vector is first obtained by a dimensionality reduction algorithm, after which support vector machine (SVM) type classifier is used.  
Accordingly, the role of residual net,  redundant channels, etc. are believed to hold for  classifier networks as well.
It is also important to note that the rank structure of Hankel matrix, which determines the energy distribution of convolutional framelets coefficients,
is translation and rotation invariant as shown in \cite{jin2015annihilating}. The invariance property was considered the most important property which gives the theoretical motivation
for Mallat's wavelet scattering  network \cite{mallat2012group,bruna2013invariant}.  Therefore,  there may be an important connection between the deep
convolutional framelets and wavelet scattering.  However, this is beyond scope 
of current paper and will be left for future research.

\subsection{Finite sample expressivity}

Another interesting observation is that the perfect reconstruction is directly related to finite sample expressivity of a neural network 
\cite{zhang2016understanding}.  Recently,  there appeared a very intriguing article providing empirical evidences that 
the  traditional statistical learning theoretical approaches fail to explain why large neural networks generalize well in practice \cite{zhang2016understanding}.  To explain
this, the authors showed that 
 simple depth-two neural networks already have perfect finite sample expressivity as soon as the number of parameters exceeds the number of data points  \cite{zhang2016understanding}. 
We conjecture that the  perfect finite sample expressivity  is closely related to the perfect reconstruction condition, saying that any finite sample size input can be reproduced perfectly using a neural network.  The intriguing link between the PR condition and finite sample expressivity needs further investigation.

\subsection{Relationship to pyramidal residual network}
Another interesting aspect of our convolutional framelet analysis is the increases of filter channels as shown in \eqref{eq:prod}.
While this does not appear to follow the conventional implementation of the convolutional filter channels,
there is a very interesting article that provides a strong empirical evidence supporting our theoretical prediction.
In the recent paper on pyramidal residual network \cite{han2016deepPy}, 
 the authors gradually increase the feature
channels across layers. This design was proven to be an effective means
of improving generalization ability. This coincides with our prediction in \eqref{eq:prod}; that is,  in order to guarantee the PR condition, the filter channel 
should increase. 
This again suggests the theoretical potential of   the proposed deep convolutional framelets.

\subsection{Revisit to the existing deep networks for inverse problems}

Based on our theory for  deep convolutional framelets,
we now revisit the existing deep learning algorithms for inverse problems and  discuss their pros and cons. 

By extending the work in \cite{kang2016deep},
Kang et al \cite{kang2017deep,kang2017wavelet} proposed a  {\em wavelet domain residual learning (WavResNet)} for low-dose CT reconstruction as shown in
Fig.~\ref{fig:wavresnet}.
The key idea of WavResNet is to apply the directional wavelet transform first, after which a neural network is trained such that it can learn the mapping between 
noisy input wavelet coefficients and noiseless ones  \cite{kang2017deep,kang2017wavelet}. 
In essence, this can be interpreted as a deep convolutional framelets  with the
nonlocal transform being performed first. The remaining layers are then  composed of CNN with  local filters and residual blocks.
Thanks to the global transform using directional wavelets, the signal becomes more compressed,  which is the main source of the advantages compared to the simple CNN.
Another uniqueness of the WavResNet is the concatenation layer at the ends that performs additional filters by using all the intermediate results.
This layer  performs a signal {\em boosting}  \cite{kang2017deep}. 
However, due to the lack of pooling layers,  the receptive field size is smaller than that of multi-scale network as shown in Fig.~\ref{fig:receptive_field}. Accordingly,
the architecture was better suited 
for localized artifacts from low-dose CT noise, but it is not effective 
 for removing globalized artifact patterns from sparse view CT.


  \begin{figure}[h] 
\center{\includegraphics[width=14cm]{./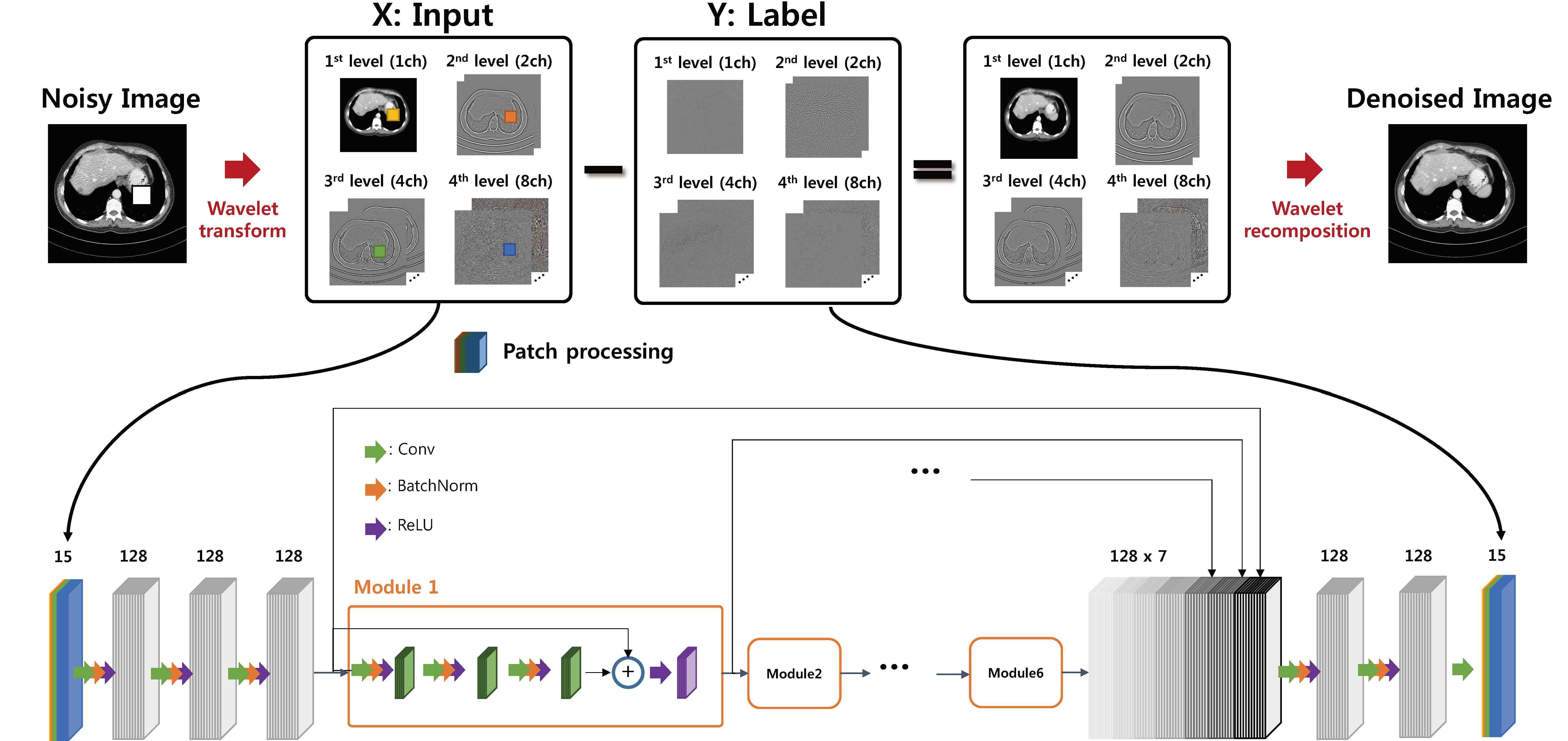}}
\caption{WavResNet architecture for low-dose CT reconstruction \cite{kang2017deep,kang2017wavelet}. 
}
\label{fig:wavresnet}
\end{figure}


{\em AUtomated TransfOrm by Manifold APproximation (AUTOMAP)}
 \cite{zhu2017image} is a recently proposed neural network approach for image reconstruction, which is claimed to be general for various
imaging modalities such as MRI, CT, etc.
A typical architecture is given in Fig.~\ref{fig:automap}.
This architecture  is similar to a standard CNN, except that at the first layer nonlocal basis matrix  is learned as a fully connected layer.
Moreover, the original signal domain is the measurement domain, so only local filters are followed in successive layer without additional
fully connected layer for inversion.
In theory, learning based non-local transform can be optimally adapted to the signals, so
it  is believed to be the advantageous over standard CNNs.
However, in order to use the fully connected layer as nonlocal bases,  a huge size network is required.
For example,   in order to recover $N\times N$ image, 
the number of parameters for the fully connected layer is $2N^2 \times N^2$ as shown in Fig.~\ref{fig:automap} (see  \cite{zhu2017image} for more calculation of required parameter numbers).
Thus, if one attempts to learn the CT image of size $512\times 512$ (i.e. $N=2^{9}$) using AUTOMAP, the required memory becomes $2N^4=2^{37}$, which is neither possible
to store nor to avoid any overfitting during the learning.
This is another reason  we prefer to use analytic non-local bases.
However, if the measurement size is sufficiently small, the approach by AUTOMAP may be an interesting direction to investigate.

  \begin{figure}[!h] 
\center{\includegraphics[width=10cm]{./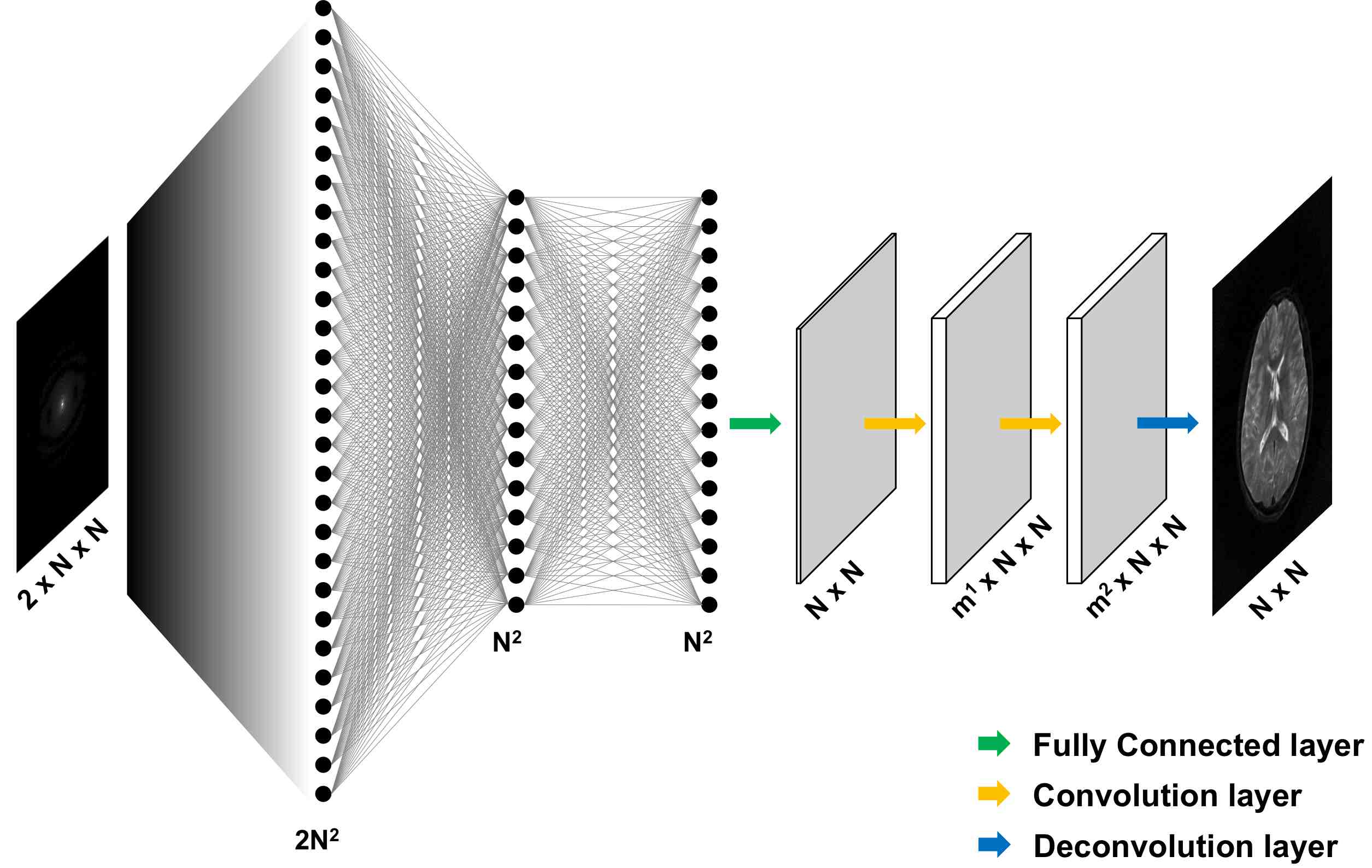}}
\caption{AUTOMAP architecture \cite{zhu2017image}. 
}
\label{fig:automap}
\end{figure}

\section{Conclusions}

In this paper, we propose a general deep learning framework called deep convolutional framelets  for inverse problems.
The proposed network architectures
were obtained based on key fundamental theoretical advances  we have achieved. 
First,  we show that the deep learning is closely related to the
 existing theory of  annihilating filter-based
low-rank Hankel matrix approaches  and   convolution framelets.  In particular, our theory was motivated by  the observation that when a signal is lifted to a high dimensional
Hankel matrix, it usually results in a low-rank structure. Furthermore, the lifted Hankel matrix can be decomposed using non-local and local bases.
We further showed that the convolution framelet expansion can be equivalently represented as encoder-decoder convolutional
layer structure.
By extending this idea furthermore, we also derived the multi-layer convolution framelet expansion and associated encoder-decoder network.
Furthermore, we investigated the perfect reconstruction condition for the deep convolutional framelets. In particular, we showed that the perfect reconstruction is still possible
when the framelet coefficients are processed with ReLU.  
We also proposed a novel class of deep network using  multi-resolution convolutional framelets.

Our discovery provided a theoretical rationale for many existing deep learning architectures and components. In addition, our theoretical framework can address some of the fundamental questions that we raised at Introduction.
More specifically, we showed that the convolutional filters work as local bases and the number of channels can be determined based on the perfect reconstruction condition. Interestingly, by controlling the number of filter channels we can achieve a low-rank based shrinkage behavior.
We further showed that ReLU can disappear when paired filer channels
with opposite polarity are available.
Another important and novel theoretical contribution is that, thanks to the lifting to the Hankel structured matrix, we can show that
the pooling and un-pooling layers actually come from  non-local bases, so they should be augmented with high-pass branches
to meet the frame condition.
Our deep convolutional framelets can also explain the role of the by-pass connection.
Finally, we have shown that the depth of the network is determined considering the intrinsic rank of the signal and the convolution filter length.
Our numerical results also showed that the proposed deep convolutional framelets can provide improved reconstruction performance
under various conditions.

One of the limitations of the current work  is that our analysis, such as PR, is based on a deterministic framework, while most of the mysteries
 of deep learning are probabilistic in nature, which apply in expectation over complex high-dimensional image distributions.
 To understand the link between the deterministic and probabilistic views of deep learning is very important, which need to be further explored 
 in the future.

\appendix
\numberwithin{equation}{section}

\section{Basic properties of Hankel Matrices}
\label{ap1}

 The following basic properties  of Hankel matrix, which are mostly novel,  
will be useful in proving main theoretical results. 
\begin{lemma}\label{lem:calculus}
For a given $f\in \Rd^n$,  let  $\hank_d(f) \in \Hc(n,d)$ denote the associated Hankel matrix.
Let $\Phi \in \Rd^{n\times m}$ and   $\Psi \in \Rd^{d\times q}$ be matrices
 whose dimensions are chosen such that $\Phi$ and $\Psi^\top$ can be multiplied to the left and right to $\hank_d(f)$.
Suppose, furthermore,  that $\tilde\Phi\in \Rd^{n\times m}$ and $\tilde\Psi\in \Rd^{d\times q}$ are another set of  matrices that
match the dimension of $\Phi$ and $\Psi$, respectively.
  Then, the following statements are true.
\begin{enumerate}
\item
Let
 \begin{eqnarray}\label{eq:Ak}
 E_k =  \frac{1}{\sqrt{d}} \hank_d (e_k) \quad \in \Hc(n,d), \quad k=1,\cdots, n
 \end{eqnarray}
where $e_k$ denotes the standard coordinate vector in $\Rd^n$, where only the $k$-th element  is one.
 Then, the set $\{E_k\}_{k=1}^n$ is the orthonormal
basis for the space of $\Hc(n,d)$.
\item 
For a given $E_k$  in \eqref{eq:Ak}, we have
\begin{eqnarray}\label{eq:f2F}
F := \hank_d(f)   =  \sum_{k=1}^n \langle  E_k, F \rangle E_k, \quad 
&\mbox{where} \quad & \langle E_k, \hank_d(f) \rangle = \sqrt{d} f[k].
\end{eqnarray}
\item For any vectors $u,v\in \Rd^n$ and  any Hankel matrix $F = \hank_d(f) \in \Hc(n,d)$, we have
\begin{eqnarray}\label{eq:inner}
\langle F, uv^{\top} \rangle   =  u^{\top} F v  = u^{\top} \left( f \circledast \overline v \right) = f^{\top} \left( u\circledast v \right) = \langle f, u\circledast v \rangle
\end{eqnarray}
where  $\overline v$ denotes the flipped version of the vector $v$.
\item For a given $E_k$  in \eqref{eq:Ak}, $u \in \Rd^n$, and $v\in \Rd^d$, we have 
\begin{eqnarray}\label{eq:Akuv}
\langle E_k, uv^{\top} \rangle   = \frac{1}{\sqrt{d}}(u\circledast v)[k]. 
\end{eqnarray}
\item  A generalized inverse of  the lifting to the Hankel structure in \eqref{eq:f2F} is given by 
\begin{eqnarray}\label{eq:F2f}
\hank_d^\dag(B) = \frac{1}{\sqrt{d}} \begin{bmatrix} \langle  E_1, B \rangle  \\ \langle  E_2,B \rangle \\ \vdots \\ \langle E_n, B \rangle  \end{bmatrix}
\end{eqnarray}
where $B$ is any matrix in $\Rd^{n\times d}$ and $E_k$'s are defined   in \eqref{eq:Ak}.
\item For a given $C\in \Rd^{m\times q}$,
\begin{eqnarray}
\hank_d^\dag(\tilde\Phi C \tilde\Psi^{\top}) = \sum_{j=1}^q \hank_d^\dag(\tilde\Phi c _j \tilde\psi_j^{\top}) &=&  \frac{1}{d} \sum_{j=1}^q  ((\tilde \Phi c_j )\circledast\tilde \psi_j)  \ , \label{eq:recon1} \\
&=&    \frac{1}{d}   \sum_{i=1}^m \sum_{j=1}^q c_{ij}  (\tilde\phi_i \circledast \tilde \psi_j) , \quad \label{eq:invfilter}
\end{eqnarray}
where $\tilde \phi_j, \tilde \psi_j$ and $c_j$ denotes the $j$-th column of  $\tilde\Phi,\tilde \Psi$ and $C$, respectively;
 and $c_{ij}$ is the $(i,j)$ elements of $C$.
\item 
For any  $Y =\begin{bmatrix}
Y_1 & \cdots & Y_p \end{bmatrix} \in \Rd^{n\times dp}$ with $Y_i \in \Rd^{n\times d},i=1,\cdots,p$,
suppose that an operator $L$ satisfies
\begin{eqnarray}\label{eq:recon2}
L(Y) 
&=& \begin{bmatrix}  \hank_d^\dag(Y_1) & \cdots & \hank_d^\dag(Y_p) \end{bmatrix}  \in \Rd^{n\times p}
\end{eqnarray}
Then, $L$ is a generalized left-inverse of an extended Hankel operator $\hank_{d|p}$, i.e. $L=\hank_{d|p}^\dag$
\item We have 
\begin{eqnarray}\label{eq:lowrank1}
\hank_d^\dag(\hank_d(f)  \Psi \tilde\Psi^{\top}) =   \frac{1}{d } \sum_{i=1}^q  (f\circledast \overline\psi_i \circledast\tilde \psi_i) \quad \in \Rd^n \ .
\end{eqnarray}
\item   Let $\Xi,~\tilde\Xi\in \Rd^{pd\times pq}$ denote any matrix with block structure:
\begin{eqnarray*}
   \tilde \Xi^{\top} = \begin{bmatrix} \tilde \Xi_1^{\top} & \cdots & \tilde \Xi_{p}^{\top} \end{bmatrix}&
  \quad  \mbox{where} \quad  \tilde \Xi_{j}^{\top} \in \Rd^{pq\times d }    \end{eqnarray*}
Then, we have
   \begin{eqnarray}\label{eq:lowrank2}
\hank_{d|p}^\dag(\hank_{d|p}([f_1,\cdots,f_p])  \Xi \tilde\Xi^{\top}) =  \frac{1}{d} \sum_{i=1}^{q } \sum_{j=1}^p\begin{bmatrix} 
f_j \circledast \overline\xi_{i}^j \circledast \tilde\xi_{i}^1, & \cdots, & 
f_j \circledast \overline\xi_{i}^j \circledast \tilde\xi_{i}^p  \end{bmatrix} \ .
\end{eqnarray}
where $\xi_{i}^j$ (resp. $\tilde\xi_{i}^j$) denotes the $i$-th column of $\Xi_j$  (resp. $\tilde\Xi_{j}$).
\end{enumerate}
\end{lemma}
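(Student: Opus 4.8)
The plan is to reduce the extended-Hankel identity \eqref{eq:lowrank2} to the single-channel identities already established in items (6) and (7) of the lemma, by exploiting the block structure of every object involved. First I would unfold the extended Hankel matrix using its definition \eqref{eq:ehank}, writing
$$\hank_{d|p}([f_1,\cdots,f_p]) = \begin{bmatrix} \hank_d(f_1) & \cdots & \hank_d(f_p)\end{bmatrix},$$
and partition $\Xi$ by rows in the same way $\tilde\Xi$ is partitioned in the hypothesis, namely $\Xi = \begin{bmatrix}\Xi_1^\top & \cdots & \Xi_p^\top\end{bmatrix}^\top$ with $\Xi_j$ of the same width as $\tilde\Xi_j$. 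Block multiplication then collapses the left product to a single sum,
$$\hank_{d|p}([f_1,\cdots,f_p])\,\Xi = \sum_{j=1}^p \hank_d(f_j)\,\Xi_j,$$
and reading off columns with the SISO identity \eqref{eq:SISO} shows that the $i$-th column of this matrix is $u_i := \sum_{j=1}^p f_j\circledast\overline\xi_i^j$, where $\xi_i^j$ is the $i$-th column of $\Xi_j$.

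Next I would right-multiply by $\tilde\Xi^\top$ and split the result into its $p$ column-blocks of width $d$. Collecting the columns $u_i$ into a matrix $U$, the $k$-th block of $U\tilde\Xi^\top$ is exactly $M_k := U\tilde\Xi_k^\top = \sum_i u_i (\tilde\xi_i^k)^\top \in \Rd^{n\times d}$, using that the $i$-th row of $\tilde\Xi_k^\top$ is $(\tilde\xi_i^k)^\top$. Since the whole product now has the block form $\begin{bmatrix} M_1 & \cdots & M_p\end{bmatrix}$, the generalized inverse of the extended Hankel operator acts blockwise by item (7), equation \eqref{eq:recon2}, giving
$$\hank_{d|p}^\dag\!\left(\hank_{d|p}([f_1,\cdots,f_p])\,\Xi\,\tilde\Xi^\top\right) = \begin{bmatrix} \hank_d^\dag(M_1) & \cdots & \hank_d^\dag(M_p)\end{bmatrix}.$$

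The final step is to evaluate each $\hank_d^\dag(M_k)$. Since $M_k = U\tilde\Xi_k^\top$ is precisely of the form $\tilde\Phi C\tilde\Psi^\top$ with $\tilde\Phi = I_{n\times n}$, $C = U$, and $\tilde\Psi = \tilde\Xi_k$, I would invoke the reconstruction identity \eqref{eq:recon1} to obtain $\hank_d^\dag(M_k) = \frac{1}{d}\sum_i u_i\circledast\tilde\xi_i^k = \frac{1}{d}\sum_i\sum_{j=1}^p f_j\circledast\overline\xi_i^j\circledast\tilde\xi_i^k$, after substituting the expression for $u_i$ and using associativity of circular convolution. Reassembling these $p$ blocks into a single row of $n\times d$ matrices reproduces the right-hand side of \eqref{eq:lowrank2} term by term, with the outer sums over $i$ and $j$ exactly as stated.

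The argument is essentially pure bookkeeping once the block decompositions are fixed, so I expect the only real obstacle to be keeping the two independent block indices straight: the Hankel-block index $j$, which ranges over the $p$ input signals and is tied to the flipped filters $\overline\xi_i^j$, versus the output-block index $k$, which selects the synthesis column-block $\tilde\Xi_k$ and survives into the final $p$-tuple of convolutions. The one substantive appeal is that item (7) legitimately lets $\hank_{d|p}^\dag$ distribute over the $p$ column-blocks; everything else follows directly from \eqref{eq:SISO} and \eqref{eq:recon1} together with the linearity of the generalized inverse.
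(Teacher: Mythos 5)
Your proposal addresses only item (9), taking items (1)--(8) as given; for that item it is correct and follows essentially the same route as the paper: split $\hank_{d|p}([f_1,\cdots,f_p])\,\Xi\,\tilde\Xi^{\top}$ into its $p$ column blocks, distribute $\hank_{d|p}^{\dag}$ blockwise via \eqref{eq:recon2}, and evaluate each block with \eqref{eq:recon1} together with the MIMO convolution identity \eqref{eq:MIMO}. The only cosmetic difference is that you expand $\hank_{d|p}(\cdot)\Xi$ into the columns $u_i=\sum_j f_j\circledast\overline\xi_i^j$ before applying the generalized inverse, whereas the paper performs that expansion as the final step.
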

\begin{proof}
The proof is a simple application of the definition of Hankel matrix and convolutional framelet. We prove claims one by one:
\begin{enumerate}
\item[(1)]
The proof  can be found in  \cite{ye2016compressive}.
\item[(2)] Because $\{E_k\}_{k=1}^n$ constitutes a orthonormal basis, for any $F \in \Hc(n,d)$, we have
$$F= \sum_{k=1}^n \langle E_k, F \rangle E_k.$$
Furthermore, the operator $\hank_d: f \mapsto \Hc(n,d)$ is linear, so we have
\begin{eqnarray*}
\hank_d(f) &=& \hank_d\left(\sum_{k=1}^n f[k] e_k \right) = \sum_{k=1}f[k] \hank_d(e_k) = \sqrt{d} \sum_{k=1}^n f[k] E_k
\end{eqnarray*}
where the last equality comes from \eqref{eq:Ak}. Thus, $\langle E_k, \hank_d(f) \rangle = \sqrt{d} f[k]$.
\item[(3)] The proof can be found in \cite{yin2017tale}.
\item[(4)] Using \eqref{eq:inner} and $E_k =\hank_d(e_k)/\sqrt{d}$, we have
\begin{eqnarray*}
\langle E_k, uv^{\top} \rangle  &  = & \frac{1}{\sqrt{d}}e_k^{\top}(u\circledast v) =  \frac{1}{\sqrt{d}}(u\circledast v) [k] \  .
\end{eqnarray*}
\item[(5)] We need to show that $\hank_d^\dag  \left(\hank_d(f)\right) = f$ for any $f=[f[1]\cdots f[n]]^T \in \Rd^n$.
\begin{eqnarray*}
\hank_d^\dag(\hank_d(f)) = \frac{1}{\sqrt{d}} \begin{bmatrix} \langle  E_1, \hank_d(f) \rangle  \\ \langle  E_2,\hank_d(f)\rangle \\ \vdots \\ \langle E_n, \hank_d(f) \rangle  \end{bmatrix}
=  \frac{1}{\sqrt{d}} \begin{bmatrix} \sqrt{d} f[1]  \\ \sqrt{d} f[2] \\ \vdots \\ \sqrt{d} f[n]  \end{bmatrix} = f 
\end{eqnarray*}
where we use $\langle E_k, \hank_d(f) \rangle = \sqrt{d} f[k]$.
\item[(6)]
 For $\tilde\Phi\in \Rd^{n\times m}$ and $C\in \Rd^{m\times q}$ and $\tilde\Psi\in \Rd^{d\times q}$,
\begin{eqnarray}
\hank_d^\dag(\tilde\Phi C \tilde\Psi^{\top})  &=&   \hank_d^\dag\left(\tilde\Phi  \begin{bmatrix} c_1 & \cdots & c_q \end{bmatrix} 
\begin{bmatrix}  \tilde \psi_1^\top \\ \vdots \\ \tilde \psi_q^\top \end{bmatrix} \right)  \notag\\
&=& \sum_{j=1}^q \hank_d^\dag(\tilde\Phi c _j \tilde\psi_j^{\top})  \ .
\end{eqnarray}
Furthermore, using \eqref{eq:Akuv} and \eqref{eq:F2f}, we have
\begin{eqnarray*}
 \hank_d^\dag(\tilde\Phi c_j \tilde\psi_j^{\top})  &=& \frac{1}{\sqrt{d}}  \begin{bmatrix} \langle  E_1, \tilde \Phi c_j \tilde\psi_j^{\top} \rangle  \\ \langle  E_2,\tilde\Phi c_j \tilde\psi_j^{\top} \rangle \\ \vdots \\ \langle E_n,\tilde \Phi c_j \tilde\psi_j^{\top} \rangle  \end{bmatrix} =  \frac{1}{{d}}  \begin{bmatrix}  \left(\tilde\Phi c_j \circledast \tilde\psi_j\right)[1]  \\  \left(\tilde\Phi c_j \circledast \tilde\psi_j\right)[2]  \\ \vdots \\  \left(\tilde\Phi c_j \circledast \tilde\psi_j\right)[n]   \end{bmatrix} = \frac{1}{d}  \left(\tilde\Phi c_j \circledast \tilde\psi_j\right)
\end{eqnarray*}
Thus, we have
\begin{eqnarray*}
\hank_d^\dag(\tilde\Phi C \tilde\Psi^{\top}) = \frac{1}{d} \sum_{j=1}^q  \left(\tilde \Phi c_j \circledast \tilde\psi_j\right)  \ .
\end{eqnarray*}
Finally, \eqref{eq:invfilter} can be readily obtained by noting that $$\tilde \Phi c_j = \sum_{i=1}^m \tilde \phi_i c_{ij} \quad .$$ 
\item[(7)] We need to show that  the operator $L$ defined by
\eqref{eq:recon2} satisfies the left inverse condition, i.e. we should show that
$L  \left(\hank_{d|p}(X)\right) = X$ for any $X=[x_1,\cdots, x_p]  \in \Rd^{n\times p}$.
This can be shown because we have 
\begin{eqnarray*}
L(\hank_{d|p}(X)) &=& L\left(\begin{bmatrix} \hank_d(x_1) & \cdots & \hank_d(x_p) \end{bmatrix}\right) \\
&=& \begin{bmatrix}  \hank_d^\dag(\hank(x_1)) & \cdots & \hank_d^\dag(\hank(x_p)) \end{bmatrix}  \\
&=& \begin{bmatrix} x_1 & \cdots & x_p \end{bmatrix} = X \  ,
\end{eqnarray*}
where the first equality uses the definition of $\hank_{d|p}(X)$ and the second equality comes from the definition of $L$
and the last equality is from $\hank_d^\dag$ being the generalized inverse of $\hank_d$.
\item[(8)] 
 Since $f\in \Rd^{n}$ and $\Psi,\tilde\Psi\in \Rd^{d\times q}$,  we have
\begin{eqnarray*}
\hank_d^\dag(\hank_d(f)  \Psi \tilde\Psi^{\top}) &=& \frac{1}{d} \sum_{i=1}^q  \left(\hank_d(f) \psi_i \circledast\tilde\psi_i\right) = 
   \frac{1}{d} \sum_{i=1}^q  (f\circledast \overline\psi_i \circledast\tilde \psi_i) \quad \in \Rd^n \ .
\end{eqnarray*}
where  the first equality comes from  \eqref{eq:recon1} and
the last equality comes from  \eqref{eq:SISO}. 
\item[(9)]  Since $\tilde \Xi^{\top} = \begin{bmatrix} \tilde \Xi_1^{\top} & \cdots & \tilde \Xi_{p}^{\top} \end{bmatrix} \in \Re^{pq\times pd}$ with
$\tilde \Xi_i^\top \in \Rd^{pq\times d}$, we have
$$\hank_{d|p}\left([f_1,\cdots,f_p]\right)  \Xi\tilde\Xi^\top= \begin{bmatrix}
\hank_{d|p}\left([f_1,\cdots,f_p]\right) \Xi \tilde \Xi_1^{\top}  & \cdots &  \hank_{d|p}\left([f_1,\cdots,f_p]\right) \Xi \tilde \Xi_p^{\top} \end{bmatrix},$$
where $\hank_{d|p}\left([f_1,\cdots,f_p]\right)   \Xi \tilde \Xi_i^{\top} \in \Rd^{n\times d}$ for $i=1,\cdots,p$. 
Thus,   we have
   \begin{eqnarray*}
\hank_{d|p}^\dag(\hank_{d|p}([f_1,\cdots,f_p])  \Xi \tilde\Xi^{\top}) &=&\begin{bmatrix}
\hank_{d}^\dag\left(\hank_{d|p}\left([f_1,\cdots,f_p]\right) \Xi \tilde \Xi_1^{\top}\right)  & \cdots &  \hank_{d}^\dag\left(\hank_{d|p}\left([f_1,\cdots,f_p]\right) \Xi \tilde \Xi_p^{\top} \right)\end{bmatrix} \\
&=& \frac{1}{d} \sum_{i=1}^q  \begin{bmatrix}
\hank_{d|p}\left([f_1,\cdots,f_p]\right) \overline\xi_i \circledast \tilde \xi_{i}^1  & \cdots &  \hank_{d|p}\left([f_1,\cdots,f_p]\right) \xi_i \circledast
\tilde \xi_{i}^p \end{bmatrix} \\
&=&  \frac{1}{d } \sum_{i=1}^{q } \sum_{j=1}^p\begin{bmatrix} 
f_j \circledast \overline\xi_{i}^j \circledast \tilde\xi_{i}^1, & \cdots & 
f_j \circledast \overline\xi_{i}^j \circledast \tilde\xi_{i}^p  \end{bmatrix} \ .
\end{eqnarray*}
where the first equality comes from
 \eqref{eq:recon2} and the second equality is from \eqref{eq:recon1},  and the last equality
 is due to \eqref{eq:MIMO}.
Q.E.D.
\end{enumerate}
\end{proof}

\section{Proof of Theorem~\ref{thm:bias}}
\label{ap8}

By converting \eqref{eq:biasEnc} and \eqref{eq:biasDec} to Hankel operator forms, we have
\begin{eqnarray}
\hat Z &=& \hank_{d|p}^\dag (\tilde \Phi C \tilde \Psi^\top) + 1_{n}b_{dec}^\top \notag \\
&=&  \hank_{d|p}^\dag \left(\tilde\Phi (\Phi^\top \hank_{d|p}(Z) \Psi +  \Phi^\top 1_{n}b_{enc}^\top)\tilde \Psi^\top\right) + 1_{n}b_{dec}^\top \notag \\
&=&  \hank_{d|p}^\dag \left(\hank_{d|p}(Z) \right)  +  \hank_{d|p}^\dag \left(   1_{n}b_{enc}^\top\tilde \Psi^\top\right) + 1_{n}b_{dec}^\top  \notag\\
&=& Z +  \hank_{d|p}^\dag \left(   1_{n}b_{enc}^\top\tilde \Psi^\top\right) + 1_{n}b_{dec}^\top  \label{eq:Zcont}
\end{eqnarray}
where we use the frame condition \eqref{eq:phi0} for the third quality.
Thus, to satisfy PR, i.e. $\hat Z = Z $,we have
$$ 1_{n}b_{dec}^\top  = -  \hank_{d|p}^\dag \left(   1_{n}b_{enc}^\top\tilde \Psi^\top\right),$$
which concludes the first part.
Next, due to the block structure of $\tilde\Psi$  in \eqref{eq:blockPsi},  we have 
$$b_{enc}^\top\tilde\Psi^\top = \begin{bmatrix} a_1^\top & \cdots & a_p^\top\end{bmatrix}, \quad
\mbox{where}\quad a_i = \tilde\Psi_i b_{enc}, ~i=1,\cdots, p$$
Accordingly, using \eqref{eq:recon2} and \eqref{eq:recon1} we have
\begin{eqnarray*}
 \hank_{d|p}^\dag \left(   1_{n} \begin{bmatrix} a_1^\top & \cdots & a_p^\top\end{bmatrix} \right)  &=&
 \begin{bmatrix} \hank_d(1_n a_1^\top)  & \cdots & \hank_d(1_n a_p^\top)  \end{bmatrix}\notag\\
 &=& \begin{bmatrix} 1_n \circledast a_1 & \cdots & 1_n \circledast a_p \end{bmatrix}  \\
 &=& 1_n \begin{bmatrix} 1_d^\top a_1 & \cdots & 1_d^\top a_p \end{bmatrix}
 \end{eqnarray*}
where the last equality comes from
$$1_n \circledast a_i =\hank_d(1_{n}) a_i = 1_n (1_d^\top a_i).$$
From \eqref{eq:Zcont}, we therefore have
\begin{eqnarray*}
\hat Z &=& Z + 1_n \begin{bmatrix} 1_d^\top a_1 & \cdots & 1_d^\top a_p \end{bmatrix} + 1_n b_{dec}^\top \ .
\end{eqnarray*}
Thus, to satisfy PR, i.e. $\hat Z = Z $, the decoder bias should be
$$b_{dec}^\top = - \begin{bmatrix} 1_d^\top a_1 & \cdots & 1_d^\top a_p \end{bmatrix} = -\begin{bmatrix} 1_d^\top \tilde \Psi_1 b_{enc} & \cdots & 1_d^\top \tilde \Psi_p b_{enc} \end{bmatrix} .$$
This concludes the proof.

\section{Proof of Proposition~\ref{prp:PRFourier}}
\label{ap2}

From  \eqref{eq:enc} and \eqref{eq:dec}, we have
\begin{eqnarray*}
Z 
&=& \left(\tilde \Phi C\right) \circledast \nu(\tilde \Psi)  \\
&=&  \left(\tilde \Phi  \Phi^{\top}\left(Z\circledast \overline\Psi\right) \right) \circledast \nu(\tilde \Psi) \\
&=& \left(Z\circledast \overline\Psi\right)\circledast \nu(\tilde \Psi)  \\
&=& \hank_{d|p}^\dag\left(\hank_{d|p}(Z) \Psi\tilde \Psi^\top \right)
 \end{eqnarray*} 
Thus,  \eqref{eq:lowrank2} informs that 
 \begin{eqnarray*}
Z :=[z_1,\cdots, z_p]  &=&  \hank_{d|p}^\dag(\hank_{d|p}([z_1,\cdots,z_p])  \Psi \tilde\Psi^{\top}) \notag\\
&=&  \frac{1}{d} \sum_{i=1}^{q } \sum_{j=1}^p\begin{bmatrix} 
z_j \circledast \overline\psi_{i}^j \circledast \tilde\psi_{i}^1, & \cdots, & 
z_j \circledast \overline\psi_{i}^j \circledast \tilde\psi_{i}^p  \end{bmatrix} \ .
\end{eqnarray*}
By taking the Fourier transform, we have
\begin{eqnarray}\label{eq:zk}
\hat z_k =
 \frac{1}{d} \sum_{i=1}^{q } \sum_{j=1}^p\hat z_j \widehat{\psi_{i}^j}^* \widehat{ \tilde\psi_{i}^k},\quad\quad k=1,\cdots, p\,
\end{eqnarray}
because the Fourier transform of the flipped signal is equal to the complex conjugate of the original signal.
Finally, \eqref{eq:zk} can be represented by a matrix representation form:
\begin{eqnarray*}
Z= Z  \frac{1}{d} \begin{bmatrix} \widehat{\psi_1^1}^* & \cdots &  \widehat{\psi_q^1}^*  \\ \vdots & \ddots & \vdots \\
\widehat{ \psi_1^p}^* & \cdots &  \widehat{ \psi_q^p}^*
\end{bmatrix}  \begin{bmatrix}  \widehat{\tilde \psi_1^1} & \cdots &  \widehat{\tilde \psi_1^p}  \\ \vdots & \ddots & \vdots \\
\widehat{\tilde \psi_q^1} & \cdots &  \widehat{\tilde \psi_q^p }
\end{bmatrix}  
\end{eqnarray*}
which is equivalent to the condition Eq.~\eqref{eq:Id}.
For the proof of \eqref{eq:prdual}, note that the PR condition \eqref{eq:encC} and \eqref{eq:decf} is for $p=1$.
Thus, Eq.~\eqref{eq:Id} is reduced to 
$$1= \frac{1}{d} \begin{bmatrix} \widehat{\psi_1^1}^* & \cdots &  \widehat{\psi_q^1}^* \end{bmatrix} 
 \begin{bmatrix}  \widehat{\tilde \psi_1^1}  \\ \vdots  \\
\widehat{\tilde \psi_q^1}   \end{bmatrix}= \frac{1}{d} \sum_{i=1}^q \widehat{\psi_i^1}^*\widehat {\tilde \psi_i^1} ,$$
which proves \eqref{eq:prdual}.
 Finally, for \eqref{eq:prorth}, note that $\tilde \psi_i =\psi_i$ for the orthonormal basis. Thus,
 \eqref{eq:prdual} is reduced to \eqref{eq:prorth}. This concludes the proof.

\section{Proof of Proposition~\ref{prp:exp}}
\label{ap3}

We prove this by mathematical induction.  At $l=1$, the input signal is $f\in \Rd^n$, so we need $\Phi^{(1)\top}\hank_{d_{(1)}}(f)\Psi^{(1)}$ to obtain
the filtered signal $C^{(1)}$.  Since $\hank_{d_{(1)}}(f)\in \Rd^{n\times d_{(1)}}$, the dimension of the local basis matrix should be
$\Psi^{(1)} \in \Rd^{d_{(1)}\times q_{(1)}}$ with $q_{(1)} \geq d_{(1)}$ to satisfy the frame condition \eqref{eq:ri0}. 
Next, we assume that \eqref{eq:prod} is true at the $(l-1)$-th layer. 
Then, the number of input channel at the $l$-layer is 
$p_{(l)}= q_{(l-1)}$ and the 
 filtering operation can be represented by \eqref{eq:multifilter} or
$\Phi^{(l)\top}\hank_{d_{(l)}|p_{(l)}}(C^{(l-1)})\Psi^{(l)}$, where $\hank_{d_{(l)}|p_{(l)}}(C^{(l-1)})\in \Rd^{n\times p_{(l)}d_{(l)}}$. Thus, to guarantee the PR,
the number of columns for the local basis  $\Psi^{(l)}$ should be at least  $ p_{(l)}d_{(l)}$ to satisfy the frame condition \eqref{eq:ri0}.
This implies that   the output channel number at the $l$-th layer should be  $q_{(l)}\geq p_{(l)}d_{(l)} =q_{(l-1)}d_{(l)} $. 
This concludes the proof.

\section{Proof of Proposition~\ref{prp:depth}}
\label{ap4}

Note that  the extended Hankel matrix $\hank_{d_{(l)}|p_{(l)}}(C^{(l-1)})$ in \eqref{eq:Cenc}
has the following decomposition:
\begin{eqnarray*}
\hank_{d_{(l)}|p_{(l)}}(C^{(l-1)})
& =&   \hank_{n|p_{(l-1)}}(C^{(l-2)}) \circul_{d_{(l)}}\left(\Psi^{(l-1)}\right) \  .
\end{eqnarray*}
Here, 
\begin{eqnarray}\label{eq:Cdl}
\circul_{d_{(l)}}\left(\Psi^{(l-1)}\right) &:=&
\overbrace{\begin{bmatrix} \circul_{d_{(l)}}(\overline\psi^{1}_1) & \cdots & \circul_{d_{(l)}}(\overline\psi^{1}_{q_{(l-1)}})  \\ \vdots & \ddots & \vdots \\  \circul_{d_{(l)}}(\overline\psi^{p_{(l-1)}}_1) & \cdots & \circul_{d_{(l)}}(\overline\psi^{p_{(l-1)}}_{q_{(l-1)}})\end{bmatrix}}^{d_{(l)}q_{(l-1)}= d_{(l)} p_{(l)}} 
\end{eqnarray}
where $\circul_{d}(h) \in \Rd^{n\times d}$ is defined in \eqref{eq:circul}.
 Due to the rank inequality $\rank(AB) \leq\min\{\rank(A),\rank(B) \}$,
 we have 
 \begin{eqnarray}\label{eq:rank1}
 \rank \hank_{d_{(l)}|p_{(l)}}(C^{(l-1)}) & \leq& \min\left\{ \rank\hank_{n|p_{(l-1)}}(C^{(l-2)}), \rank( \circul_{d_{(l)}}\left(\Psi^{(l-1)}\right))\right\} \notag\\
 & \leq &  \min\left\{ \rank\hank_{n|p_{(l-1)}}(C^{(l-2)}), d_{(l)}p_{(l)}\right\}
 \end{eqnarray}
Similarly, we have
\begin{eqnarray*}
\hank_{n|p_{(l)}}(C^{(l-1)})
& =&  \hank_{n|p_{(l-1)}}(C^{(l-2)}) \circul_{n}\left(\Psi^{(l-1)}\right)
\end{eqnarray*}
where $ \circul_{n}\left(\Psi^{(l-1)}\right)$ can be constructed  using the definition in  \eqref{eq:Cdl}.
Thus, we have
 \begin{eqnarray}\label{eq:rank2}
 \rank \hank_{n|p_{(l)}}(C^{(l-1)}) 
 & \leq &  \min\left\{ \rank\hank_{n|p_{(l-1)}}(C^{(l-2)}),np_{(l)}\right\} \leq  \rank\hank_{n|p_{(l-1)}}(C^{(l-2)})
 \end{eqnarray}
%
since $np_{(l-1)}\leq np_{(l)}$. By recursively applying the inequality \eqref{eq:rank2} with \eqref{eq:rank1},
 we have 
$$ \rank \hank_{d_{(l)}|p_{(l)}}(C^{(l-1)}) \leq \min\{ \hank_{n}(f), d_{(l)} p_{(l)}\}.$$
This concludes the proof.

\section{Proof of Proposition~\ref{prp:prN}}
\label{ap5}

We will prove by construction.
Let $F^{(l)}:= \hank_{d_{(l)}|p_{(l)}}(C^{(l)})$.
Because $\Phi^{(l)\top}\left(F^{(l)}(-\Psi_+^{(l)})-1_nb_{enc,+}^{(l)\top} \right)= -\Phi^{(l)\top}\left( F^{(l)}\Psi_+^{(l)}+1_nb_{enc,+}^{(l)\top} \right)$,
the negative part can be retrieved from $-\rho\left(\Phi^{(l)\top}\left(F^{(l)}(-\Psi_+^{(l)})-1_nb_{enc,+}^{(l)\top} \right)\right)$, while
the positive part of $ \Phi^{(l)\top}\left(F^{(l)}\Psi_+^{(l)}+1_nb_{enc,+}^{(l)\top} \right)$ can be retained from $\rho\left( \Phi^{(l)\top}\left(F^{(l)}\Psi_+^{(l)}+1_nb_{enc,+}^{(l)\top} \right)\right)$. 
Furthermore, their non-zero parts do not overlap.
Thus,
\begin{eqnarray}\label{eq:clue}
 \Phi^{(l)\top}\left(F^{(l)}\Psi_+^{(l)}+1_nb_{enc,+}^{(l)\top}\right)   &=&  \rho\left( \Phi^{(l)\top}\left(F^{(l)}\Psi_+^{(l)}+1_nb_{enc,+}^{(l)\top} \right) \right) 
 - \rho\left(\Phi^{(l)\top}\left(F^{(l)}(-\Psi_+^{(l)})-1_nb_{enc,+}^{(l)\top} \right)\right)  \  . \notag\\
\end{eqnarray}
Accordingly,  by choosing \eqref{eq:CReLU} and \eqref{eq:Cbias},  we have
$$\tilde\Phi^{(l)}\rho\left(\Phi^{(l)\top}F^{(l)}\Psi^{(l)}+1_nb_{enc,+}^{(l)\top} \right)\tilde\Psi^{(l)\top} $$
\vspace*{-0.5cm}
\begin{eqnarray*}
&=& \tilde\Phi^{(l)} \begin{bmatrix} \rho\left( \Phi^{(l)\top}\left(F^{(l)}\Psi_+^{(l)}+1_nb_{enc,+}^{(l)\top} \right) \right) & 
 \rho\left(\Phi^{(l)\top}\left(F^{(l)}(-\Psi_+^{(l)})-1_nb_{enc,+}^{(l)\top} \right)\right) 
\end{bmatrix} \begin{bmatrix} \tilde \Psi_+^{(l)\top} \\ -\tilde \Psi_+^{(l)\top} \end{bmatrix}   \\
&=& \tilde\Phi^{(l)} \left(
\rho\left( \Phi^{(l)\top}\left(F^{(l)}\Psi_+^{(l)}+1_nb_{enc,+}^{(l)\top} \right) \right)
- \rho\left(\Phi^{(l)\top}\left(F^{(l)}(-\Psi_+^{(l)})-1_nb_{enc,+}^{(l)\top} \right)\right) 
\right)
\tilde \Psi_+^{(l)\top}  \\
&=&\tilde\Phi^{(l)}\Phi^{(l)\top}\left(F^{(l)}\Psi_+^{(l)}\tilde \Psi_+^{(l)\top}+1_nb_{enc,+}^{(l)\top}\tilde \Psi_+^{(l)\top}\right)\\ 
&=& F^{(l)} +1_nb_{enc,+}^{(l)\top}\tilde \Psi_+^{(l)\top} 
%
\end{eqnarray*}
where we use \eqref{eq:clue} for the third inequality and the last equality comes from \eqref{eq:id}.
Finally, we have
\begin{eqnarray*}
\hat C^{(l)} &=& \hank_{d_{(l)}|p_{(l)}}^\dag\left(F^{(l)} +1_nb_{enc,+}^{(l)\top}\tilde \Psi_+^{(l)\top} \right)+1_n b_{dec}^{(l)\top} \\
&=& C^{(l)} + \hank_{d_{(l)}|p_{(l)}}^\dag\left(1_nb_{enc,+}^{(l)\top}\tilde \Psi_+^{(l)\top} \right)+1_n b_{dec}^{(l)\top} \\
&=& C^{(l)}
\end{eqnarray*}
where the last equality comes from \eqref{eq:Cbias}.
By applying this from $l=L$ to $1$ in \eqref{eq:G}, we can see
that $f =\Qc\left(f;\{\Phi^{(j)},\tilde \Phi^{(j)}\}_{j=1}^L\right)$. 
Q.E.D.

\section{Proof of Proposition~\ref{prp:PRinsufficient}}
\label{ap6}

We will prove by construction.
From the proof of Proposition~\ref{prp:prN}, we know that
$$\Phi\rho\left(\Phi^\top\hank_{d|p}(X)[\Psi_+~-\Psi_+]\right) \begin{bmatrix} \tilde \Psi_+^\top\\ -\tilde \Psi_+^\top\end{bmatrix} =\Phi\Phi^\top
\hank_{d|p}(X) \Psi_+\tilde \Psi_+^\top  = \hank_{d|p}(X) \Psi_+\tilde \Psi_+^\top \quad , $$
where the last equality comes from the orthonormality of $\Phi$. 
Since $m\geq r$, there always exist  $\Psi_+,\tilde\Psi_+ \in \Rd^{pd\times m}$ such that
$\Psi_+\tilde \Psi_+= P_{R(V)}$ where $V\in \Rd^{pd\times r}$ denotes the right singular vectors of
$\hank_{d|p}(X)$.  
 Thus, 
 $$\Phi\rho\left(\Phi^\top\hank_{d|p}(X)\Psi\right) \tilde \Psi^\top = \hank_{d|p}(X) P_{R(V)} = \hank_{d|p}(X).$$
 By applying $ \hank_{d|p}^\dag$ to both side, we conclude the proof.

\section{Proof of Proposition~\ref{prp:resnet}}
\label{ap7}
Again the proof can be done by construction.
Specifically, from the proof of Proposition~\ref{prp:prN}, we know that
$$\rho\left(\hank_{d|p}(X)[\Psi_+~-\Psi_+]\right) \begin{bmatrix} \tilde \Psi_+^\top\\ -\tilde \Psi_+^\top\end{bmatrix} =
\hank_{d|p}(X) \Psi_+\tilde \Psi_+^\top  =  \hank_{d|p}(X) \Psi_+\tilde \Psi_+^\top \quad . $$
  Thus,  
 \begin{eqnarray*}
 \hank_{d|p}(X) -  R(F;\Phi,\Psi_+,\tilde\Psi_+)  
  &=&
 \hank_{d|p}(X)- \rho\left(\ \hank_{d|p}(X) -  \hank_{d|p}(X)(\Psi_+\tilde \Psi_+^\top)  \right)
 \end{eqnarray*}
 Thus,  if we choose $\Psi_+$ such that $V^\top\Psi_+=0$, we have
 \begin{eqnarray*}
 \hank_{d|p}(X) -  R(F;\Phi,\Psi_+,\tilde\Psi_+)  
 &=&  \hank_{d|p}(X)- \rho\left(\hank_{d|p}(X) \right)  = 0
\end{eqnarray*}
where
the last equality comes from the non-negativity of $X$. 
Thus, we can guarantee \eqref{eq:PRres}. Now, the remaining issue to prove is the existence of   $\Psi_+$ such that $V^\top\Psi_+=0$.
This is always possible if $r<pd$. This concludes the proof.

%

\end{document}